%% file: neurips_2026.tex
\documentclass{article}

\usepackage[main, final, nonatbib]{neurips_2026}

\usepackage[utf8]{inputenc} 
\usepackage[T1]{fontenc}    
\usepackage{hyperref}       
\usepackage{url}            
\usepackage{booktabs}       
\usepackage{amsfonts}       
\usepackage{nicefrac}       
\usepackage{microtype}      
\usepackage{xcolor}         

\usepackage[style=alphabetic, natbib=true]{biblatex}
\usepackage{amsmath}
\usepackage{amssymb}
\usepackage{graphicx}
\usepackage[dvipsnames]{xcolor}
\usepackage{hyperref}
\usepackage{cleveref}
\usepackage{mathtools}
\usepackage{adjustbox}
\usepackage{subcaption}
\usepackage{color-edits}
\addauthor{kb}{cyan}
\addauthor{pasha}{red}
\addauthor[Mahdi]{m}{blue}
\addauthor[Danny]{dm}{Green}
\usepackage{amsthm}
\usepackage{csquotes}
\usepackage{booktabs}
\usepackage{multirow}
\usepackage{algorithm}
\usepackage{algorithmic}

\newcommand{\R}{\mathbb{R}}

\newcommand{\Event}{\mathrm{Event}}

\newtheorem{theorem}{Theorem}[section]

\newtheorem{lemma}[theorem]{Lemma}
\newtheorem{corollary}[theorem]{Corollary}

\newtheorem{invariant}{Invariant}

\crefname{invariant}{invariant}{invariants}
\Crefname{invariant}{Invariant}{Invariants}
\usepackage{thmtools}

\AddToHook{cmd/appendix/before}{\crefalias{section}{appendix} \crefalias{subsection}{appendix}}

\title{Efficient Dynamic Algorithms for Graph Neural Networks with Non-Linear Propagation}

\author{%
  Kiarash Banihashem\thanks{Equal contribution.} \\
  University of Maryland, College Park \\
  \texttt{kiarash@umd.edu} \\
  \And
  MohammadTaghi Hajiaghayi\footnotemark[1] \\
  University of Maryland, College Park \\
  \texttt{hajiagha@umd.edu} \\
  \And
  Mahdi JafariRaviz\footnotemark[1] \\
  University of Maryland, College Park \\
  \texttt{mahdij@umd.edu} \\
  \And
  Silvio Lattanzi\footnotemark[1] \\
  Google Research \\
  \texttt{silviol@google.com} \\
  \And
  Danny Mittal\footnotemark[1] \\
  University of Maryland, College Park \\
  \texttt{mittal@umd.edu} \\
}

\begin{document}

\maketitle

\begin{abstract}
Graph Neural Networks (GNNs) are widely used for representation learning on graphs, but most methods assume static topologies, making them inefficient on evolving networks where edges change over time. Existing dynamic approaches either model graph evolution through temporal GNN architectures without focusing on efficient dynamic maintenance, or are restricted to linear propagation models based on Personalized PageRank.

In this work, we study how to efficiently maintain node representations for non-linear GNN propagation under edge insertions and deletions. The propagation has no learned parameters, and only a classifier applied afterward is trained. For a broad class of standard activation functions, we develop a residual-based dynamic algorithm that selectively propagates local errors via push operations, maintaining an approximation to the evolving fixed point without full recomputation. We prove that our method achieves amortized $O(1/\epsilon)$ update time per graph change under a degree-normalized error guarantee. Our approach uses a potential-based analysis in a degree-scaled norm and, in contrast to prior work on the linear case, requires no randomness assumptions on either the update sequence or the input vector. For the linear special case, we additionally provide an exact dynamic algorithm via low-rank matrix inverse updates. Experiments on benchmark datasets show that incorporating non-linearity improves accuracy while preserving efficient update performance, yielding a scalable and theoretically grounded method for maintaining this propagation on dynamic graphs.
\end{abstract}

\input{sections/1_intro.tex}

\input{sections/2_related_work.tex}

\input{sections/3_dyn_nonlin.tex}

\input{sections/4_exact.tex}

\input{sections/5_experiments.tex}

\input{sections/6_conclusion.tex}

\clearpage
\printbibliography


\appendix
\input{sections/appendix.tex}


\end{document}

%% file: sections/1_intro.tex
\section{Introduction}
Graph Neural Networks (GNNs) are widely used in machine learning, both for prediction tasks such as node classification and as general-purpose tools for representation learning~\cite{hamilton2017inductive,kipf2017semi}. 
Their ability to exploit the underlying graph structure of real-world data makes them effective in many domains such as biology, social sciences, and transportation~\cite{wu2020comprehensive}. 
A key challenge, however, is that real-world data is often \emph{dynamic}: the relationships between the nodes of the graph are constantly changing, requiring the model to update its outputs accordingly.
A naive way of satisfying this requirement is to periodically recompute the model after each change in order to incorporate changes in the graph. However, such recomputation is computationally expensive and therefore undesirable both theoretically and practically.

Existing approaches to dynamic GNNs can be broadly divided into two lines of work. 
The first line is \emph{temporal} GNNs which
model graph evolution using recurrent architectures, attention mechanisms, or memory modules~\cite{kumar2018learning,trivedi2019dyrep,Xu2020Inductive,rossi2020temporal}. 
These approaches are primarily geared toward prediction tasks such as node classification and are not designed for efficiently maintaining representations under graph changes~\cite{zheng2022instant,zheng2023decoupled}.
The second line focuses on the propagation component of GNNs and considers \emph{linear} GNNs where each node has a value obtained via iterated neighborhood averaging~\cite{zheng2022instant,zheng2023decoupled}. In this setting, the model reduces to \emph{Personalized PageRank (PPR)}, where 
the structure enables efficient updates, allowing the representation vectors to be maintained under graph changes without recomputing from scratch~\cite{BahmaniCG10, zhang2016approximate}.

A natural question is whether one can efficiently maintain the representations of \emph{non-linear} GNNs under graph updates. 
Non-linearity is central to the success of GNNs, as it enables the model to capture complex patterns in the data. 
By repeatedly composing non-linear activations with neighborhood aggregation, GNNs gain expressive power analogous to deep neural networks in standard (non-graph) settings.


\paragraph{Our Contributions.} In this paper, we answer the above question affirmatively. 
We study GNNs in which each node applies a non-linear activation to a weighted average of its neighbors. 
The propagation has no learned parameters, and its fixed point is unique. Training happens only in a classifier applied to the propagated features, as in InstantGNN~\cite{zheng2022instant}.
We assume the activation function is $K$-Lipschitz for some $K \le 1$, a condition that is sufficient for convergence. This restriction is mild in the GNN setting. Common activations such as ReLU, tanh, sigmoid, hard tanh, softplus, softsign, and ELU are 1-Lipschitz or smaller.
Under this assumption, we show that one can maintain an approximate solution for all nodes in amortized time $O(1/\epsilon)$, where $\epsilon$ is the 
degree-normalized
approximation parameter. More precisely, the amortized time is $O\big((1+B_z)/(1-K(1-\alpha))^2 \cdot (1/\epsilon)\big)$, where $\alpha$ is the teleportation probability and $B_z$ bounds the size of the fixed point (\Cref{section:preliminaries}). For $K=1$, as for ReLU and tanh, the factor $1-K(1-\alpha)$ equals $\alpha$. In addition, for the linear special case, we provide an exact dynamic maintenance algorithm based on low-rank matrix-inverse updates, obtaining a worst-case update time $O(n^{1.5275})$, where $n$ is the number of nodes.
We complement our theoretical results with empirical evidence demonstrating the importance of non-linearity for achieving high accuracy. On large dynamic graph benchmarks, we show the practical behavior of our proposed approximation algorithm.

\paragraph{Our Techniques.}
To maintain an approximate solution under dynamic updates, we use a residual vector, as in prior work \citep{zheng2022instant,zhang2016approximate}. We use a push operation to propagate large residuals, and we show that when all residuals are small relative to the degree-scaled error, the maintained solution is a valid approximation. For an edge update, two effects must be handled: (i) the new values that should be propagated through the updated edge, and (ii) the effect of the degree changes at the two endpoints on their neighbors, which arises from the edge-weight normalization. The first effect can be handled in $O(1)$ time using the scaling trick of \citet{zhang2016approximate}, which is not part of the main algorithm of \citet{zheng2022instant}. After applying the effect of the edge update, the residuals at the endpoints may increase, and we propagate them when needed. The main difference from prior work is in the analysis of this cleanup step. Our setting requires handling the nonlinear update rule, and we also remove the randomness assumption used in prior work.  We remove this assumption and obtain a deterministic guarantee, at the cost of a weaker amortized update bound.

%% file: sections/2_related_work.tex
\paragraph {Related Work}
A line of work in dynamic GNNs studies temporal graphs, where the primary focus is on prediction tasks over evolving graphs~\cite{kumar2018learning,trivedi2019dyrep,Xu2020Inductive,rossi2020temporal}.
These approaches are typically divided into \emph{discrete-time dynamic} (DTD) and \emph{continuous-time dynamic} (CTD) models: the former takes as input a sequence of graph snapshots at different time steps, while the latter operates on a stream of timestamped events that modify the graph.
We refer to~\cite{zheng2025survey} for a comprehensive overview.
These models use the timing and order of graph changes as a signal for prediction, and some of them, such as \citet{rossi2020temporal}, keep a memory for each node that depends on past events.
In contrast to this line of work, our focus is not on prediction accuracy but on \emph{efficiently maintaining} node embeddings under graph updates with \emph{theoretical guarantees}.
In particular, we provide amortized update time guarantees for our algorithm.
The embeddings we maintain depend only on the current graph, so our method fits tasks where the history of changes does not matter, while temporal GNNs fit tasks where it does.

The closest line of work to ours is the literature on dynamic \emph{Personalized PageRank} (PPR), which can be viewed as a special case of our model with update function $f_i(x)=x$. 
For this problem, \cite{zhang2016approximate} build on the Push framework of \cite{andersen2006local} and give a dynamic algorithm that maintains the PageRank solution for a randomly chosen source node. In our notation, this corresponds to setting $s$ to a random basis vector. Their algorithm has amortized update time $O(1/(n\epsilon))$ when $\beta \in \{0,1\}$, where $\beta$ is the normalization parameter in \Cref{eq:def_W}. 
Building on this line of work, \cite{zheng2022instant} study a linear GNN model and obtain the same $O(1/(n\epsilon))$ update time for general $\beta$, but under an additional randomness assumption on the edge updates: each edge is assumed to be equally likely to be inserted at each time. In contrast to these works, we allow nonlinear updates and make no randomness assumptions on either the update sequence or the vector $s$.

A separate line of work on dynamic PageRank uses random walk sampling to maintain the solution vector~\cite{BahmaniCG10, bahmani2012pagerank, JayaramLMOS24}. On a high level, this approach estimates each node's solution value by averaging over a collection of random walks starting from the node. When an edge is inserted or deleted, only the affected walks are re-sampled in order to maintain correctness. This line of work was initiated by \cite{BahmaniCG10}, who analyzed the setting under random edge updates. More recently, \cite{JayaramLMOS24} extend this approach to general (adversarial) updates and obtain upper and lower bounds for both additive $\ell_1$ error and multiplicative error. 
The sampling technique heavily relies on the linearity of PageRank, which allows a node's solution to be computed independently of its neighbors' solutions. As such, this approach does not extend to the nonlinear setting considered in our work.

\paragraph{Relation to other GNN models.} As in the GNN of \citet{scarselli2008graph}, our node values are the fixed point of a contraction, but our update function is fixed rather than learned. Our model is closest to InstantGNN~\cite{zheng2022instant}, which also trains only a classifier on the propagated features, and to APPNP~\cite{gasteiger2018predict}, whose propagation also has no learned parameters. Unlike the most popular GNNs, our propagation has no learned weights, no mixing across feature dimensions, and no attention. Maintaining a version with a learned matrix that mixes the features is left open.

%% file: sections/3_dyn_nonlin.tex
\section{Dynamic Non-linearity}

\subsection{Preliminaries} \label{section:preliminaries}
To model nonlinearity, we consider an undirected, unweighted graph $G = (V,E)$ with permanent self-loops at every node. We assume $V = \{1, \dots, n\}$, where $n$ is the number of nodes. \Cref{table:params} in the Appendix lists all the parameters used in the paper and their roles. Let $A$ be the adjacency matrix of this graph. We define the normalized weight matrix $W$ as
\begin{equation}
    W = D^{-\beta}AD^{\beta-1},
    \label{eq:def_W}
\end{equation}
where $0 \le \beta \le 1$ is the normalization parameter, and $D$ is the diagonal degree matrix.
\footnote{We adopt this normalization as it is standard in related literature. Any matrix $W$ with a spectral radius $\rho(W) \le 1$ imposes a similar convergent dynamic with a unique fixed point, though the dynamic algorithm may vary.}
We denote the degree of node $i$ by $d(i)$. Since every node has a self-loop, $d(i) \ge 1$ for all $i$,  $W$ is well-defined.

Given a vector $s = (s_1, \dots, s_n)^T \in \R^n$ and teleportation probability $\alpha$, we consider the following dynamics. Let $z(t) = (z_1(t),\dots,z_n(t))^T$ be the vector of node values at time $t$. We define $z(t+1)$ from $z(t)$ by
\begin{equation}
    z_i(t+1) = f_i\left( \alpha s_i + (1 - \alpha) \sum_j w_{ij} z_j(t) \right),
\end{equation}
for every $i \in V$, where $f_i$ is a node-wise function. In vector form, this update is
\begin{equation}
    z(t+1) = F\left( \alpha s + (1 - \alpha) W z(t) \right),
\end{equation}
where $F \colon \R^n \to \R^n$ applies the functions $f_i$ pointwise. The vector $s$ holds the input values. We also define the induced map
\begin{math}
    T(z) = F\left(\alpha s + (1 - \alpha) W z\right).
\end{math}
When $F$ is the identity map, this recovers the linear model considered by \cite{zheng2022instant}.

Throughout the paper, we use the following assumptions:
\begin{itemize}
    \item $\alpha \in (0,1)$, and $\beta \in [0,1]$;
    \item each $f_i$ is $K$-Lipschitz with $K \le 1$, that is, for all $x,y \in \R$, $|f_i(x) - f_i(y)| \le K |x-y|$;
    \item there is a constant $B_z$ such that $\|D^{\beta-1} z^*\|_\infty \le B_z$ for the fixed point $z^*$ of $T$ on any graph.
    \begin{itemize}
        \item If each $f_i$ takes values in $[-B_z,B_z]$, the condition holds (\Cref{lem:zs-bounded-01} in the Appendix).
        \item For any activation, it holds with $B_z = (\max_i |f_i(0)| + K\alpha\|s\|_\infty)/(1-K(1-\alpha))$ (\Cref{lem:zs-bounded-general} in the Appendix). For the identity and ReLU, this gives $B_z = \|s\|_\infty$.
    \end{itemize}
\end{itemize}

We will use the degree-scaled infinity norm, defined as $\|v\|_{D,\infty} = \|D^{\beta-1} v\|_\infty$. The degree-scaled infinity norm satisfies the property in the following lemma, which we use later. Using this norm and the assumptions above, we then show that the fixed point $z^*$ is unique by the Banach fixed-point theorem \citep{banach1922operations}.

\begin{restatable}{lemma}{lemDegScaledNormW} \label{lem:deg-scaled-norm-W}
For any vector $v$, $\|Wv\|_{D,\infty} \le \|v\|_{D,\infty}$.
\end{restatable}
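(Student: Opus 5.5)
Write $u = D^{\beta-1}v$, so that $\|v\|_{D,\infty} = \|u\|_\infty$ and $v = D^{1-\beta}u$. The claim is then that the conjugated matrix $M = D^{\beta-1} W D^{1-\beta}$ has induced $\ell_\infty$ operator norm at most $1$. Equivalently, I will show that every row of $M$ has nonnegative entries summing to exactly $1$. Since $D$ is diagonal with positive entries (every $d(i)\ge 1$ because of the self-loops), all these matrices are well defined.

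First I compute $M$ from \Cref{eq:def_W}:
\[
M = D^{\beta-1} D^{-\beta} A D^{\beta-1} D^{1-\beta} = D^{-1} A .
\]
This is just the random-walk matrix. Its entries are $m_{ij} = a_{ij}/d(i) \ge 0$, and its row sums are $\sum_j a_{ij}/d(i) = 1$, because $d(i)$ is the $i$-th row sum of $A$ (with the self-loop counted in both).

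Then for every $i$,
\[
|(D^{\beta-1}Wv)_i| = \Bigl|\sum_j m_{ij} u_j\Bigr| \le \sum_j m_{ij}|u_j| \le \|u\|_\infty = \|v\|_{D,\infty}.
\]
Taking the maximum over $i$ gives the lemma.

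The only point needing care is the convention for $d(i)$. It must be the row sum of $A$, with the self-loop counted once in both $A$ and $D$, so that $D^{-1}A$ is exactly row-stochastic. If the conventions differed, for instance if a self-loop contributed $2$ to the degree but $1$ to $A$, then the row sums would be at most $1$ instead of equal to $1$. The inequality would still hold, so the argument goes through either way.
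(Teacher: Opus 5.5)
Your proposal is correct and follows essentially the same route as the paper: both reduce $D^{\beta-1}W = D^{-1}AD^{\beta-1}$, so that the degree-scaled norm becomes the $\ell_\infty$ norm under the row-stochastic matrix $D^{-1}A$, whose induced norm is $1$. Your explicit entrywise bound and your aside on the self-loop convention are fine but not needed beyond what the paper does.
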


\begin{restatable}{lemma}{lemUniqueFixedPoint} \label{lem:unique-fixed-point}
Assume that $\alpha \in (0,1)$, $\beta \in [0,1]$. Further assume that each $f_i$ is $K$-Lipschitz with $K \le 1$. Then there is a unique fixed point $z^*$ satisfying $T(z^*) = z^*$. Moreover, the iteration $z(t+1) = T(z(t))$ converges to $z^*$ for any initial point $z(0)$ as $t \to \infty$.
\end{restatable}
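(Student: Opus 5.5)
We will show that $T$ is a contraction on $\R^n$ equipped with the degree-scaled infinity norm $\|\cdot\|_{D,\infty}$, with contraction factor $K(1-\alpha) < 1$. The Banach fixed-point theorem then gives existence, uniqueness, and convergence of the iteration from any starting point. Since $D^{\beta-1}$ is a diagonal matrix with positive entries ($d(i) \ge 1$ because of the self-loops), $\|\cdot\|_{D,\infty}$ is a genuine norm, and $(\R^n, \|\cdot\|_{D,\infty})$ is complete, since all norms on $\R^n$ are equivalent. In particular, convergence in this norm is the same as ordinary convergence.

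\textbf{Contraction estimate.} Fix $z, z' \in \R^n$ and a coordinate $i$. Write $u = \alpha s + (1-\alpha)Wz$ and $u' = \alpha s + (1-\alpha)Wz'$. Because $f_i$ is $K$-Lipschitz,
\begin{equation*}
d(i)^{\beta-1}\,|T(z)_i - T(z')_i| \le K\, d(i)^{\beta-1} |u_i - u'_i| = K(1-\alpha)\, d(i)^{\beta-1} |(W(z-z'))_i|.
\end{equation*}
The key point is that the diagonal scaling $d(i)^{\beta-1}$ is a positive scalar per coordinate. It therefore commutes with the coordinatewise Lipschitz bound, so we may apply the bound before scaling. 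Taking the maximum over $i$ gives
\begin{equation*}
\|T(z) - T(z')\|_{D,\infty} \le K(1-\alpha)\, \|W(z-z')\|_{D,\infty}.
\end{equation*}
By \Cref{lem:deg-scaled-norm-W}, the right-hand side is at most $K(1-\alpha)\|z-z'\|_{D,\infty}$. Since $K \le 1$ and $\alpha > 0$, the factor satisfies $K(1-\alpha) \le 1-\alpha < 1$.

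\textbf{Conclusion.} Banach's theorem now yields a unique $z^*$ with $T(z^*) = z^*$. It also gives $\|z(t) - z^*\|_{D,\infty} \le (K(1-\alpha))^t \|z(0) - z^*\|_{D,\infty} \to 0$, and by norm equivalence this means $z(t) \to z^*$.

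\textbf{Main obstacle.} The only substantive ingredient is the nonexpansiveness of $W$ in the degree-scaled norm. Since \Cref{lem:deg-scaled-norm-W} is assumed, the remaining work is routine. If it had to be checked, one would write $D^{\beta-1}W = D^{-1} A D^{\beta-1}$, so that $D^{\beta-1}Wv = D^{-1}A(D^{\beta-1}v)$. The matrix $D^{-1}A$ is row-stochastic, hence nonexpansive in $\|\cdot\|_\infty$.
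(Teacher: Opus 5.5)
Your proposal is correct and follows essentially the same route as the paper's proof. Both show that $T$ is a $K(1-\alpha)$-contraction in $\|\cdot\|_{D,\infty}$, using the coordinatewise Lipschitz bound together with \Cref{lem:deg-scaled-norm-W}, and then invoke the Banach fixed-point theorem; your remarks on completeness via norm equivalence and the explicit geometric rate just make explicit what the paper leaves implicit.
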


The proofs of these lemmas are provided in \Cref{appendix:proofs:nl}.

\subsection{The State}
Our goal is to maintain an approximation of the unique fixed point $z^*$ in an evolving graph. The idea is to keep an estimate $z$ of $z^*$, together with a record of how far $z$ is from being a fixed point. To do this, we maintain three vectors $z,y,r$, which satisfy the following invariants:
\begin{invariant} \label{inv:1}
    $y = \alpha s + (1 - \alpha)Wz$.
\end{invariant}
\begin{invariant} \label{inv:2}
    $z + r = F(y)$.
\end{invariant}

Under these invariants, $z$ is our approximation of $z^*$, and $r$ is the residual vector containing changes that have not yet been propagated. The vector $y$ stores the averaged neighbor values after teleportation. It can be recomputed from $z$ and the current graph, but we maintain it explicitly to support fast graph updates later. If node $i$ were updated once, its value would change from $z_i$ to $f_i(y_i) = z_i + r_i$. So $r_i$ measures how much node $i$ would still change, and $r = 0$ exactly when $z = z^*$.

The next lemma shows that a small residual implies that $z$ is close to $z^*$.

\begin{lemma} \label{lem:r-small-to-z-diff}
Let $(z,y,r)$ satisfy \Cref*{inv:1,inv:2} and, for all $i$, $|r_i| \le (1 - K(1 - \alpha)) \epsilon d(i)^{1-\beta}$. Then, for all $i$, $|z_i-z_i^*| \le \epsilon d(i)^{1-\beta}$.
\end{lemma}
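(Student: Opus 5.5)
The plan is to work entirely in the degree-scaled norm $\|\cdot\|_{D,\infty}$ and show that $T$ is a contraction there with factor $K(1-\alpha)$. The residual hypothesis reads $\|r\|_{D,\infty} \le (1-K(1-\alpha))\epsilon$, because $|r_i| d(i)^{\beta-1} \le (1-K(1-\alpha))\epsilon$ for every $i$. The conclusion reads $\|z-z^*\|_{D,\infty}\le\epsilon$. So the lemma is the standard a posteriori estimate for a contraction: the distance to the fixed point is at most the one-step displacement divided by one minus the contraction factor.

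First, I use \Cref{inv:1,inv:2} to identify $T(z)$. By \Cref{inv:1}, $y = \alpha s + (1-\alpha)Wz$, so $T(z) = F(y)$. By \Cref{inv:2}, this gives $T(z) = z + r$, that is, $r = T(z) - z$.

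Second, I establish the contraction bound $\|T(u)-T(v)\|_{D,\infty} \le K(1-\alpha)\|u-v\|_{D,\infty}$. Coordinatewise, the $K$-Lipschitz property gives
\[
|T(u)_i - T(v)_i| \le K(1-\alpha)\,|(W(u-v))_i|.
\]
Multiplying by $d(i)^{\beta-1}$ and taking the maximum over $i$ gives
\[
\|T(u)-T(v)\|_{D,\infty} \le K(1-\alpha)\|W(u-v)\|_{D,\infty} \le K(1-\alpha)\|u-v\|_{D,\infty},
\]
where the last step is \Cref{lem:deg-scaled-norm-W}. This step needs care in one respect: the Lipschitz bound is applied coordinatewise and only afterwards scaled by the diagonal weight. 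That order is valid because the scaling is a positive per-coordinate factor, so no interaction between $D$ and $F$ arises.

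Third, I combine these using $z^* = T(z^*)$ from \Cref{lem:unique-fixed-point}:
\[
\|z - z^*\|_{D,\infty} \le \|z - T(z)\|_{D,\infty} + \|T(z) - T(z^*)\|_{D,\infty} \le \|r\|_{D,\infty} + K(1-\alpha)\|z-z^*\|_{D,\infty}.
\]
Since $K(1-\alpha)<1$, I rearrange to get $\|z-z^*\|_{D,\infty} \le \|r\|_{D,\infty}/(1-K(1-\alpha)) \le \epsilon$. Unpacking the norm yields $|z_i - z_i^*| \le \epsilon d(i)^{1-\beta}$ for all $i$.

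Rearranging requires $\|z-z^*\|_{D,\infty}$ to be finite, which holds automatically in finite dimension. So the only real content is the contraction estimate, and with \Cref{lem:deg-scaled-norm-W} available, I expect no genuine obstacle.
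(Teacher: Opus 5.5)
Your proof is correct and essentially the same as the paper's. The paper reduces this lemma to \Cref{lem:r-to-z-diff}, whose proof is exactly your argument: $T$ contracts by $K(1-\alpha)$ in $\|\cdot\|_{D,\infty}$ via \Cref{lem:deg-scaled-norm-W}, the invariants give $z+r=T(z)$, and the triangle inequality plus rearrangement finishes. You simply inline that lemma instead of citing it.
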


To prove this lemma, we use the following result, whose proof is given in \Cref{appendix:proofs:nl}. The proof uses that $T$ shrinks distances by the factor $K(1-\alpha)$ in the degree-scaled infinity norm, and that $z + r = T(z)$ by \Cref*{inv:1,inv:2}.

\begin{restatable}{lemma}{lemRToZDiff} \label{lem:r-to-z-diff}
Let $(z,y,r)$ satisfy \Cref*{inv:1,inv:2}. Then, for every $i$, $|z_i - z^*_i| \le \frac{d(i)^{1 - \beta}}{1 - K(1 - \alpha)} \|D^{\beta-1}r\|_\infty$.
\end{restatable}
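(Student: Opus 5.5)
The plan is to show that $T$ is a contraction with factor $K(1-\alpha)$ in the degree-scaled infinity norm $\|\cdot\|_{D,\infty}$. We then combine this with the identity $z + r = T(z)$ and the fixed-point equation $z^* = T(z^*)$, using a triangle-inequality argument.

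First, \Cref*{inv:1,inv:2} give $z + r = F(y) = F(\alpha s + (1-\alpha)Wz) = T(z)$.

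Second, we prove the contraction. For any $u,v$ and each $i$, the $K$-Lipschitz property of $f_i$ gives
\[
|T(u)_i - T(v)_i| \le K(1-\alpha)\,|(W(u-v))_i|.
\]
Multiplying by $d(i)^{\beta-1}$ and taking the maximum over $i$ gives
\[
\|T(u)-T(v)\|_{D,\infty} \le K(1-\alpha)\|W(u-v)\|_{D,\infty} \le K(1-\alpha)\|u-v\|_{D,\infty},
\]
where the last step is \Cref{lem:deg-scaled-norm-W}. The fixed point $z^*$ exists and is unique by \Cref{lem:unique-fixed-point}.

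Third, we write
\[
z - z^* = \bigl(T(z) - T(z^*)\bigr) - r,
\]
which holds because $z = T(z) - r$ and $z^* = T(z^*)$. Taking $\|\cdot\|_{D,\infty}$ and applying the contraction bound gives
\[
\|z-z^*\|_{D,\infty} \le K(1-\alpha)\|z-z^*\|_{D,\infty} + \|r\|_{D,\infty}.
\]
The quantity $\|z-z^*\|_{D,\infty}$ is finite because we are in finite dimension. Since $K\le 1$ and $\alpha>0$, the factor $1-K(1-\alpha)$ is positive, so we can rearrange to
\[
\|z-z^*\|_{D,\infty} \le \frac{\|D^{\beta-1}r\|_\infty}{1-K(1-\alpha)}.
\]
Finally, we unwrap the norm coordinatewise, using $|z_i-z_i^*| = d(i)^{1-\beta}\,d(i)^{\beta-1}|z_i-z_i^*| \le d(i)^{1-\beta}\|z-z^*\|_{D,\infty}$. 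This gives the stated bound.

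There is no real obstacle. The only step needing care is the contraction bound, where we must check that the scaling by $d(i)^{\beta-1}$ commutes correctly with the coordinatewise Lipschitz bound. It does, because the scaling is a positive diagonal factor applied after the absolute values are taken.
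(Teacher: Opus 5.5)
Your proposal is correct and follows essentially the same route as the paper. Like the paper, you show that $T$ is a $K(1-\alpha)$-contraction in $\|\cdot\|_{D,\infty}$ using the coordinatewise Lipschitz bound and \Cref{lem:deg-scaled-norm-W}, write $z - z^* = T(z) - T(z^*) - r$ via $z + r = T(z)$, rearrange, and then unwrap the norm coordinatewise.
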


\begin{proof}[Proof of \Cref{lem:r-small-to-z-diff}]
The assumption implies $\|D^{\beta-1}r\|_\infty \le (1 - K(1 - \alpha)) \epsilon$. The claim now follows from \Cref{lem:r-to-z-diff}.
\end{proof}

\subsection{Residual Propagation}
In this section, we define the push operator. This operator moves the residual $r_i$ into $z_i$, which reduces the residual at node $i$.

The push operator is shown in \Cref{alg:push-operator}. It takes as input a node $i$, the graph $G$, and a state $(z,y,r)$ that is assumed to satisfy \Cref*{inv:1,inv:2}. It returns an updated state $(z,y,r)$ that also satisfies \Cref*{inv:1,inv:2}. The operator runs in $O(d(i))$ time.

\begin{algorithm}[H]
\caption{\textsc{Push}($i, G, z, y, r$)} \label{alg:push-operator}
\begin{algorithmic}[1]
\REQUIRE Node $i$, graph $G$, state $(z, y, r)$
\ENSURE Updated state $(z,y,r)$
\STATE $\Delta \leftarrow r_i$
\STATE $z_i \leftarrow z_i + \Delta$
\FOR{each $j \in N(i)$}
    \STATE $y_j \leftarrow y_j + (1-\alpha)w_{ji}\Delta$
    \STATE $r_j \leftarrow f_j(y_j)-z_j$
\ENDFOR
\STATE $r_i \leftarrow f_i(y_i)-z_i$
\STATE \textbf{return} $(z,y,r)$
\end{algorithmic}
\end{algorithm}

We prove the following lemma for this procedure.

\begin{restatable}{lemma}{lemPushInvZ} \label{lem:push-inv-z}
    Suppose \textsc{Push}$(i,G,z,y,r)$ is called on a state $(z,y,r)$ that satisfies \Cref*{inv:1,inv:2}, and let $(z',y',r')$ be the new state. Then $(z',y',r')$ also satisfies \Cref*{inv:1,inv:2}. Moreover, for all $j \not= i$,
    \begin{math}
        |r'_j| \le |r_j| + K(1-\alpha) w_{ji}|r_i|.
    \end{math}
    Furthermore,
    \begin{math}
        |r'_i| \le K(1-\alpha)w_{ii}|r_i|.
    \end{math}
\end{restatable}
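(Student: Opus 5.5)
We argue directly from the steps of \Cref{alg:push-operator}, tracking how each assignment affects \Cref*{inv:1,inv:2}. Write $\Delta = r_i$, so $z' = z + \Delta e_i$. For \Cref*{inv:1}, the target is $y' = \alpha s + (1-\alpha)Wz' = y + (1-\alpha)\Delta\, W e_i$. The $j$-th entry of $We_i$ is $w_{ji}$, and it is nonzero only when $j \in N(i)$. Since $i$ carries a permanent self-loop, $i \in N(i)$, so the loop also applies the update $y_i \leftarrow y_i + (1-\alpha)w_{ii}\Delta$. Hence every coordinate of $y$ receives exactly the required increment, and \Cref*{inv:1} holds for $(z',y')$. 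For \Cref*{inv:2}, note that every coordinate whose $y$ or $z$ value changed is one of the $j \in N(i)$ or $i$ itself. Each such residual is recomputed as $f_j(y'_j) - z'_j$, and the final line recomputes $r_i$ after $z_i$ and $y_i$ are both final. All other coordinates keep $y_j$, $z_j$, $r_j$ unchanged. So $z' + r' = F(y')$.

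For the bound at $j \neq i$, we have $z'_j = z_j$ and $y'_j = y_j + (1-\alpha)w_{ji}\Delta$ (this is trivially true when $w_{ji}=0$). Then
\[
r'_j = f_j(y'_j) - z_j = \bigl(f_j(y_j) - z_j\bigr) + \bigl(f_j(y'_j) - f_j(y_j)\bigr) = r_j + \bigl(f_j(y'_j) - f_j(y_j)\bigr).
\]
Applying the triangle inequality and the $K$-Lipschitz property gives $|r'_j| \le |r_j| + K(1-\alpha)w_{ji}|r_i|$, using $w_{ji}\ge 0$.

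For node $i$, we have $z'_i = z_i + r_i = f_i(y_i)$ by \Cref*{inv:2}, and $y'_i = y_i + (1-\alpha)w_{ii}\Delta$. Therefore $r'_i = f_i(y'_i) - f_i(y_i)$, and Lipschitzness yields $|r'_i| \le K(1-\alpha)w_{ii}|r_i|$.

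The only delicate point is the order of operations at node $i$ itself. Because of the self-loop, $i$ is visited inside the loop, and $r_i$ is temporarily recomputed there with the already-updated $z_i$. We must check that the final recomputation uses the final $y'_i$. We must also check that no neighbor's residual is computed before its own $y_j$ update, which holds because each $j$ is visited once and $z_j$ is unchanged for $j\neq i$. With this bookkeeping settled, the rest is the two Lipschitz estimates above.
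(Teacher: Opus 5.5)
Your proof is correct and takes essentially the same route as the paper. The paper likewise checks Invariant~1 from $z' = z + r_i e_i$, which gives $y'_j = y_j + (1-\alpha)w_{ji}r_i$ for every $j$; it gets Invariant~2 from the recomputed residuals, and then uses the two Lipschitz estimates, invoking $z_i + r_i = f_i(y_i)$ at node $i$. Your remark that the permanent self-loop puts $i \in N(i)$, so $y_i$ really is updated inside the loop, spells out a point the paper's proof uses only implicitly.
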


The proof is given in \Cref{appendix:proofs:nl}. Combining the push operator with \Cref{lem:r-small-to-z-diff} gives a simple algorithm for computing an approximation $z$: repeatedly push any node $i$ whose residual exceeds the bound in \Cref{lem:r-small-to-z-diff}. When this process terminates, $z$ is a valid approximation of $z^*$. \Cref{alg:cleanup} gives the full procedure.

\begin{algorithm}[H]
\caption{\textsc{Cleanup}($G, z, y, r$)} \label{alg:cleanup}
\begin{algorithmic}[1]
\REQUIRE Graph $G$, state $(z, y, r)$
\ENSURE Updated state $(z,y,r)$ satisfying $|z_i - z^*_i| \le \epsilon d(i)^{1-\beta}$ for all $i$
\WHILE{there exists $i$ such that $|r_i|>(1-K(1-\alpha))\epsilon\, d(i)^{1-\beta}$}
    \STATE $(z,y,r) \leftarrow \textsc{Push}(i,G,z,y,r)$
\ENDWHILE
\STATE \textbf{return} $(z,y,r)$
\end{algorithmic}
\end{algorithm}

\subsection{Graph Updates}
In this section, we give an algorithm to maintain a state $(z,y,r)$ satisfying \cref*{inv:1,inv:2} after a single edge insertion or deletion.

Suppose an edge $(u, v)$ is inserted. Since the degrees of $u$ and $v$ change, the invariants are immediately violated for each neighbor of the two endpoints: the degree of each endpoint changes, and so do the normalized edge weights from their neighbors to them. One approach is to iterate over all such neighbors and update their residuals. This is the approach used by \cite{zheng2022instant} in Algorithm 2. However, this takes time at least $\Omega(d(u)+d(v))$, which prevents efficient dynamic updates. To avoid this cost, we use a scaling trick introduced by \cite{zhang2016approximate}. Define
\begin{align*}
    z'_u = \left(\frac{d(u) + 1}{d(u)}\right)^{1 - \beta} z_u, \qquad
    z'_v = \left(\frac{d(v) + 1}{d(v)}\right)^{1 - \beta} z_v,
\end{align*}
and set $z'_w = z_w$ for all other $w \notin \{u,v\}$. Then, for the new state $(z',y',r')$, \Cref*{inv:1} continues to hold at every node $w\notin\{u,v\}$ if we keep $y'_w=y_w$. Since $z'_w=z_w$ for such nodes, \Cref*{inv:2} also continues to hold there if we keep $r'_w=r_w$. It remains to define $y'_u,y'_v$ and $r'_u,r'_v$ so that \Cref*{inv:1,inv:2} hold at the two endpoints. We show that this can be done in $O(1)$ time, so the full state can be updated in constant time while preserving \Cref*{inv:1,inv:2}. \Cref{alg:update-edge} gives the details.

\begin{algorithm}[H]
\caption{\textsc{UpdateEdge}($G, u, v, type, z, y, r$)} \label{alg:update-edge}
\begin{algorithmic}[1]
\REQUIRE Graph $G$, edge endpoints $u, v$, operation $type$ (\textsc{Insert} or \textsc{Delete}), state vectors $(z,y,r)$
\ENSURE Updated state vectors $(z,y,r)$

\STATE \# Determine edge operation sign
\STATE $\sigma \leftarrow +1$ \textbf{if} $type=\textsc{Insert}$ \textbf{else} $-1$

\STATE \# Compute intermediate values based on current state
\FOR{each endpoint $i \in \{u, v\}$}
    \STATE $d_i \leftarrow d(i)$ \quad \# Store current degree
    \STATE $x_i \leftarrow z_i / d_i^{1-\beta}$
    \STATE $S_i \leftarrow \frac{d_i^\beta}{1-\alpha}(y_i - \alpha s_i)$
\ENDFOR

\STATE \# Apply updates
\FOR{each endpoint $i \in \{u, v\}$ with opposite endpoint $j \in \{v, u\}$}
    \STATE $d'_i \leftarrow d_i + \sigma$ \quad \# Compute new degree
    \STATE $z_i \leftarrow z_i \cdot \left(\frac{d'_i}{d_i}\right)^{1-\beta}$ \quad \# Apply the scaling trick
    \STATE $y_i \leftarrow \alpha s_i + (1-\alpha)\frac{S_i + \sigma x_j}{(d'_i)^\beta}$ \quad \# Add the new neighbor $j$'s contribution
    \STATE $r_i \leftarrow f_i(y_i) - z_i$ \quad \# Update residual state
\ENDFOR

\STATE \textbf{return} $(z,y,r)$
\end{algorithmic}
\end{algorithm}

We end this section by noting that \textsc{UpdateEdge} preserves \Cref*{inv:1,inv:2}. We formally show this in \Cref{lem:update-inv} in the Appendix.

\subsection{Dynamic Algorithm}
Using the procedures defined above, we now give the dynamic algorithm for our non-linear model. We start from an initial state $(z^{\mathrm{start}}_0, y^{\mathrm{start}}_0, r^{\mathrm{start}}_0)$ satisfying \Cref*{inv:1,inv:2}, defined by $z^{\mathrm{start}}_0 := 0$, $y^{\mathrm{start}}_0 := \alpha s$, and $r^{\mathrm{start}}_0 := F(\alpha s)$. We then apply \textsc{Cleanup} to obtain the post-initial state $(z_0, y_0, r_0)$, whose residuals are small enough to give the desired approximation guarantee. After this initialization step, each edge update is handled by first applying \textsc{UpdateEdge} and then applying \textsc{Cleanup}. \Cref{alg:dynamic-update} gives the full procedure.

\begin{algorithm}[H]
\caption{Dynamic Graph Update} \label{alg:dynamic-update}
\begin{algorithmic}[1]
\REQUIRE Initial graph $G_0$, edge update events $\{\Event_1, \dots, \Event_k\}$
\ENSURE Approximation $z_t$ of the fixed point after each update $t=1,\dots,k$, satisfying $|z_{t,i}-z^*_{t,i}| \le \epsilon\, d_t(i)^{1-\beta}$ for all $i$
\STATE $z^{\mathrm{start}}_0 \leftarrow 0$, \quad $y^{\mathrm{start}}_0 \leftarrow \alpha s$, \quad $r^{\mathrm{start}}_0 \leftarrow F(\alpha s)$
\STATE $(z_0,y_0,r_0) \leftarrow \textsc{Cleanup}(G_0, z^{\mathrm{start}}_0, y^{\mathrm{start}}_0, r^{\mathrm{start}}_0)$
\FOR{each $\Event_t = ((u,v), type_t)$ for $t \in \{1,\dots,k\}$}
    \STATE $(z'_t, y'_t, r'_t) \leftarrow \textsc{UpdateEdge}(G_{t-1}, u, v, type_t, z_{t-1}, y_{t-1}, r_{t-1})$
    \STATE Update $G_{t-1}$ to $G_t$ using $\Event_t$
    \STATE $(z_t, y_t, r_t) \leftarrow \textsc{Cleanup}(G_t, z'_t, y'_t, r'_t)$
\ENDFOR
\STATE \textbf{return} $(z_k,y_k,r_k)$
\end{algorithmic}
\end{algorithm}

\subsection{Time Analysis}
We are now ready to analyze the running time of \Cref{alg:dynamic-update}. We begin by introducing the potential function $\Psi_W(r)$, following \citet{zhang2016approximate} and \citet{zheng2022instant}. The function is defined as
\begin{equation} \label{eq:psi-def}
    \Psi_W(r) := \frac{\|D^\beta r\|_1}{1-K(1-\alpha)}
    = \frac{1}{1-K(1-\alpha)}\sum_{j=1}^n d(j)^\beta |r_j|.
\end{equation}
We keep the subscript $W$ to indicate that this is the potential for the current graph. However, the potential depends on $W$ only through the current degree matrix $D$.

The potential is built so that each push pays for itself. A push at node $i$ removes the residual $r_i$, but it adds new residual of at most $K(1-\alpha)w_{ji}|r_i|$ at each neighbor $j$ (\Cref{lem:push-inv-z}). With the weights $d(j)^\beta$, these additions sum to at most $K(1-\alpha)d(i)^\beta|r_i|$, because $\sum_j d(j)^\beta w_{ji} = d(i)^\beta$ (\Cref{lem:weighted-column-sum} in the Appendix). So the sum $\sum_j d(j)^\beta|r_j|$ drops by at least $(1-K(1-\alpha))d(i)^\beta|r_i|$, and the factor $1/(1-K(1-\alpha))$ in $\Psi_W$ turns this drop into $d(i)^\beta|r_i|$. \textsc{Cleanup} pushes $i$ only when $|r_i| > (1-K(1-\alpha))\epsilon\, d(i)^{1-\beta}$, so the drop is at least $(1-K(1-\alpha))\epsilon\, d(i)$. This is proportional to the $O(d(i))$ cost of the push.

We now bound the change in potential caused by applying the $\textsc{Push}$ procedure to a node $i$. The proof is given in \Cref{appendix:proofs:nl}.
\begin{restatable}{lemma}{lemPushPotential}
\label{lem:push-potential}
Let $(z,y,r)$ satisfy \Cref*{inv:1,inv:2}, and let $(z',y',r')$ be obtained by pushing node $i$. Then
\begin{math}
    \Psi_W(r') \le \Psi_W(r) - d(i)^\beta |r_i|.
\end{math}
\end{restatable}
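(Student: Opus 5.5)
The plan is to compare the weighted sums $\sum_j d(j)^\beta |r'_j|$ and $\sum_j d(j)^\beta |r_j|$ node by node, using the per-node residual bounds of \Cref{lem:push-inv-z}. The column-sum identity $\sum_j d(j)^\beta w_{ji} = d(i)^\beta$ (\Cref{lem:weighted-column-sum}) then collapses the neighbor contributions. A push does not change the graph, so the degree matrix $D$, and hence the weights in $\Psi_W$, are the same before and after. Nodes $j \notin N(i)\cup\{i\}$ are untouched by \textsc{Push} and satisfy $r'_j = r_j$. This is consistent with the bound of \Cref{lem:push-inv-z}, since $w_{ji}=0$ for them.

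First, I would split the sum into the term $j=i$ and the terms $j\neq i$. For $j \ne i$, \Cref{lem:push-inv-z} gives $|r'_j| \le |r_j| + K(1-\alpha)w_{ji}|r_i|$. For $j=i$ it gives $|r'_i| \le K(1-\alpha)w_{ii}|r_i|$. Multiplying by $d(j)^\beta$ and summing yields
\begin{equation*}
\sum_j d(j)^\beta |r'_j| \le \sum_{j\ne i} d(j)^\beta |r_j| + K(1-\alpha)|r_i|\sum_{j} d(j)^\beta w_{ji}.
\end{equation*}
The self-loop term $w_{ii}$ is absorbed into the full column sum. Applying \Cref{lem:weighted-column-sum}, the last sum equals $d(i)^\beta$. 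Writing $\sum_{j\ne i} d(j)^\beta|r_j| = \sum_j d(j)^\beta|r_j| - d(i)^\beta|r_i|$ then gives
\begin{equation*}
\sum_j d(j)^\beta |r'_j| \le \sum_j d(j)^\beta |r_j| - \bigl(1-K(1-\alpha)\bigr) d(i)^\beta |r_i|.
\end{equation*}
Dividing by $1-K(1-\alpha)>0$, which is positive because $K\le 1$ and $\alpha>0$, yields exactly $\Psi_W(r') \le \Psi_W(r) - d(i)^\beta|r_i|$.

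The only potential obstacle is the column-sum identity, if I had to justify it rather than cite it. Since $A$ is symmetric, $w_{ji} = d(j)^{-\beta} a_{ji} d(i)^{\beta-1}$. Hence $\sum_j d(j)^\beta w_{ji} = d(i)^{\beta-1}\sum_j a_{ji} = d(i)^{\beta-1} d(i) = d(i)^\beta$, where the self-loop is counted in both $d(i)$ and $a_{ii}$. A smaller point to check is that the bound of \Cref{lem:push-inv-z} for $j\ne i$ refers to the original $r_i$, not an intermediate value. \textsc{Push} uses $\Delta = r_i$ throughout, so this is fine.
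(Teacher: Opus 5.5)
Your proposal is correct and matches the paper's proof step for step. You apply the per-node bounds of \Cref{lem:push-inv-z}, fold the $j=i$ self-loop term into the full weighted column sum, invoke \Cref{lem:weighted-column-sum} to replace that sum by $d(i)^\beta$, and then rearrange and divide by $1-K(1-\alpha)>0$.
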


Summing this drop over all pushes of \textsc{Cleanup}, as explained above, gives the following bound, which does not depend on the degrees beyond the cost of each push.

\begin{restatable}{lemma}{lemCleanupWork} \label{lem:cleanup-work}
Suppose $\textsc{Cleanup}$ is run on a graph $G$ with weight matrix $W$, and let $T_{\text{clean}}$ be its running time. If cleanup starts from residual $r^{\text{start}}$ and ends at residual $r^{\text{end}}$, then
\begin{align*}
T_{\text{clean}}
\le
\frac{\Psi_W(r^{\text{start}})-\Psi_W(r^{\text{end}})}{(1 - K(1 - \alpha)) \epsilon}.
\end{align*}
\end{restatable}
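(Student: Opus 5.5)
We prove the bound by a telescoping argument over the pushes performed by \textsc{Cleanup}. Suppose \textsc{Cleanup} executes pushes at nodes $i_1,\dots,i_m$ in order, producing residuals $r^{(0)}=r^{\text{start}},r^{(1)},\dots,r^{(m)}=r^{\text{end}}$. The graph $G$, and hence $W$ and $D$, stays fixed throughout \textsc{Cleanup}, so the same potential $\Psi_W$ applies at every step. By \Cref{lem:push-inv-z}, each intermediate state satisfies \Cref*{inv:1,inv:2}. This is exactly what allows us to apply \Cref{lem:push-potential} at every step, giving
\begin{align*}
\Psi_W(r^{(k-1)})-\Psi_W(r^{(k)}) \ge d(i_k)^\beta\,|r^{(k-1)}_{i_k}|
\end{align*}
for each $k$.

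Next we use the selection rule of \textsc{Cleanup}. Node $i_k$ is pushed only when $|r^{(k-1)}_{i_k}|>(1-K(1-\alpha))\epsilon\,d(i_k)^{1-\beta}$. Multiplying by $d(i_k)^\beta$ shows that the potential drop at step $k$ exceeds $(1-K(1-\alpha))\epsilon\,d(i_k)$. Summing over $k$ and telescoping gives
\begin{align*}
\Psi_W(r^{\text{start}})-\Psi_W(r^{\text{end}}) \ge (1-K(1-\alpha))\epsilon\sum_{k=1}^m d(i_k).
\end{align*}

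It remains to relate $\sum_k d(i_k)$ to $T_{\text{clean}}$. Each push costs $O(d(i_k))$, since it loops over $N(i_k)$ and does $O(1)$ work per neighbor, including evaluating $f_j$. We would measure running time in units where a push at $i$ costs at most $d(i)$; this is consistent because the self-loop guarantees $d(i)\ge 1$, so every push is charged at least one unit. The cost of locating a violating node must also be covered. For this we would maintain a queue of violating nodes, updated only when some $r_j$ changes, which happens solely at the neighbors touched by the push; this adds $O(d(i_k))$ per push. Dividing the summed inequality by $(1-K(1-\alpha))\epsilon$ then gives the claim, with constants absorbed into the time unit.

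The main obstacle is this last bookkeeping step. The stated inequality has no explicit constant, so the time accounting has to charge all work, including maintaining the set of violating nodes and the final check that no violation remains, to the degrees of pushed nodes. We expect that the cost of the final emptiness check, and the case $m=0$, are either treated as $O(1)$ or are already covered by the potential difference. The telescoping itself is routine once \Cref{lem:push-potential} is available.
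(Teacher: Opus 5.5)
Your proposal is correct and follows essentially the paper's proof. The paper likewise takes $T_{\text{clean}}=\sum_{\text{pushes at } i} d(i)$ as the cost convention, applies \Cref{lem:push-potential} to each push, uses the \textsc{Cleanup} threshold to lower-bound each drop by $(1-K(1-\alpha))\epsilon\, d(i)$, and sums; your remarks on invariant preservation, the violating-node queue, and the final emptiness check cover details that the paper leaves implicit through that cost convention.
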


The proof is given in \Cref{appendix:proofs:nl}. With the above lemma in place, we next analyze how edge events affect the potential.

Each edge update raises the potential, and \textsc{Cleanup} then pays for bringing it back down. The update changes the residual only at $u$ and $v$, and the size of this change depends on the values $z_u$ and $z_v$. This is where $B_z$ enters: after \textsc{Cleanup}, $|z_i| \le (B_z+\epsilon)\,d(i)^{1-\beta}$ for every $i$ (\Cref{lem:z-bounded} in the Appendix). The increase in potential can be split into two terms: $\Psi_{W_t}(r'_t-r_{t-1})$ and $\Psi_{W_t}(r_{t-1})-\Psi_{W_{t-1}}(r_{t-1})$. We bound the first term in \Cref{lem:single-edge-increment} and the second term in \Cref{lem:single-edge-increment-sec}. We then combine these bounds in \Cref{thm:update-cost-z}. We give the proofs for these lemmas and theorem in \Cref{appendix:proofs:nl}.

\begin{restatable}{lemma}{lemSingleEdgeIncrement} \label{lem:single-edge-increment}
Consider $(z'_t,y'_t,r'_t)$ from \Cref{alg:dynamic-update}, which is the state obtained after applying $\Event_t$ to $(z_{t-1},y_{t-1},r_{t-1})$. Let $W_{t-1},D_{t-1}$ be the weight and degree matrices before this event, and let $W_t,D_t$ be the corresponding matrices after the event. Then
\begin{align*}
\Psi_{W_t}(r'_t - r_{t-1})
\le
\frac{2\left(1+2K(1-\alpha)\right)}{1-K(1-\alpha)} (B_z+\epsilon).
\end{align*}
\end{restatable}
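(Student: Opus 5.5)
Since \textsc{UpdateEdge} touches only the two endpoints, $r'_t - r_{t-1}$ is supported on $\{u,v\}$. Hence
\[
\Psi_{W_t}(r'_t - r_{t-1}) = \frac{1}{1-K(1-\alpha)} \sum_{i\in\{u,v\}} d_t(i)^\beta\,|r'_{t,i} - r_{t-1,i}|,
\]
and it suffices to show, for each endpoint $i$ with $d=d_{t-1}(i)$ and $d'=d_t(i)=d+\sigma$, that
\[
d'^\beta\,|r'_{t,i}-r_{t-1,i}| \le (1+2K(1-\alpha))(B_z+\epsilon).
\]
By \Cref*{inv:2} before and after the update,
\[
r'_{t,i}-r_{t-1,i} = \bigl(f_i(y'_i)-f_i(y_i)\bigr) - (z'_i - z_i),
\]
so the Lipschitz assumption gives $|r'_{t,i}-r_{t-1,i}| \le K|y'_i-y_i| + |z'_i-z_i|$. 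I will bound the two pieces separately: the $z$-piece should contribute about $1$ and the $y$-piece about $2K(1-\alpha)$, in units of $B_z+\epsilon$.

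The main input is \Cref{lem:z-bounded}. Because $(z_{t-1},y_{t-1},r_{t-1})$ is a post-\textsc{Cleanup} state, $|z_k|\le (B_z+\epsilon)d_{t-1}(k)^{1-\beta}$ for every $k$.

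For the $z$-piece, the scaling trick gives $z'_i-z_i = z_i\bigl((d'/d)^{1-\beta}-1\bigr)$. Therefore
\[
d'^\beta|z'_i-z_i| \le (B_z+\epsilon)\, d'^\beta\,|d'^{1-\beta}-d^{1-\beta}|.
\]
I will bound the last factor by the mean value theorem, $|d'^{1-\beta}-d^{1-\beta}|\le (1-\beta)\min(d,d')^{-\beta}$, together with $d,d'\ge 1$ and $|d'-d|=1$.

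For the $y$-piece, write
\[
S_i = d^\beta (y_i-\alpha s_i)/(1-\alpha) = \sum_{k\in N(i)} z_k/d(k)^{1-\beta},
\]
using \Cref*{inv:1} and the definition of $W$. By the bound on $z$, $|S_i|\le (B_z+\epsilon)d$ and $|x_j|\le B_z+\epsilon$. From the update line for $y_i$,
\[
d'^\beta|y'_i-y_i| = (1-\alpha)\bigl|S_i\bigl(1-(d'/d)^\beta\bigr)+\sigma x_j\bigr| \le (1-\alpha)(B_z+\epsilon)\bigl(d^{1-\beta}|d'^\beta-d^\beta| + 1\bigr).
\]
I again apply the mean value theorem, now to $t\mapsto t^\beta$, to show that $d^{1-\beta}|d'^\beta-d^\beta|$ is at most a constant. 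This gives $d'^\beta K|y'_i-y_i|\le 2K(1-\alpha)(B_z+\epsilon)$. Summing the two pieces over $i\in\{u,v\}$ and dividing by $1-K(1-\alpha)$ gives the claimed bound.

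The main obstacle is the elementary power-function estimates with the exact constants, namely
\[
d'^\beta|d'^{1-\beta}-d^{1-\beta}|\le 1 \quad\text{and}\quad d^{1-\beta}|d'^\beta-d^\beta|\le 1.
\]
These have to hold uniformly in $\beta\in[0,1]$ and in both the insertion and deletion cases. In the deletion case $\min(d,d')=d-1$, and the mean value theorem alone leaves a factor $\bigl(d/(d-1)\bigr)^{1-\beta}$ that can be as large as $2$. The permanent self-loops guarantee $d-1\ge 1$ there, but I will need a sharper argument than the plain mean value theorem. The first thing I would try is to use concavity of $t\mapsto t^{\gamma}$ for $\gamma\in[0,1]$ directly, in the form $b^{\gamma}-a^{\gamma}\le b^{\gamma}(1-a/b)$ for $a\le b$, evaluated at the endpoint that cancels the prefactor. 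Should this fail to give exactly $1$, the bound still holds with a slightly larger absolute constant, which does not affect the $O(1/\epsilon)$ amortized bound.
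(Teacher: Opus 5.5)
Your proposal is correct and follows the paper's proof essentially step by step. It uses the same support on $\{u,v\}$, the same Lipschitz split into the rescaling term and the $y$-term, the same bounds $|S_i|\le (B_z+\epsilon)d$ and $|x_j|\le B_z+\epsilon$, and the same two power estimates, which the paper isolates as \Cref{lem:degree-perturbation}. The step you flag as the obstacle is not one. Your concavity inequality $b^{\gamma}-a^{\gamma}\le b^{\gamma-1}(b-a)$, applied with $b=\max(d,d')$, gives both estimates with constant exactly $1$ for insertions and deletions alike; this is the paper's $y^\beta\ge y$ step. Even the plain mean value theorem suffices once you keep the derivative factor $\beta$, since in the deletion case it gives $\beta\bigl(d/(d-1)\bigr)^{1-\beta}\le \beta\, 2^{1-\beta}\le 1$.
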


\begin{restatable}{lemma}{lemSingleEdgeIncrementSec} \label{lem:single-edge-increment-sec}
Consider $(z'_t,y'_t,r'_t)$ from \Cref{alg:dynamic-update}, which is the state obtained after applying $\Event_t$ to $(z_{t-1},y_{t-1},r_{t-1})$. Let $W_{t-1},D_{t-1}$ be the weight and degree matrices before this event, and let $W_t,D_t$ be the corresponding matrices after the event. Then
\begin{equation*}
\Psi_{W_t}(r_{t-1}) - \Psi_{W_{t-1}}(r_{t-1}) \le 2\epsilon.
\end{equation*}
\end{restatable}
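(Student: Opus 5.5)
The statement concerns only the degree change, with the residual vector $r_{t-1}$ held fixed. Since $\Psi_W(r)=\frac{1}{1-K(1-\alpha)}\sum_j d(j)^\beta |r_j|$ depends on the graph only through $D$, and $D_t$ differs from $D_{t-1}$ only at the endpoints $u$ and $v$, I would first write
\begin{equation*}
\Psi_{W_t}(r_{t-1}) - \Psi_{W_{t-1}}(r_{t-1}) = \frac{1}{1-K(1-\alpha)} \sum_{i\in\{u,v\}} \bigl(d_t(i)^\beta - d_{t-1}(i)^\beta\bigr)\,|r_{t-1,i}|.
\end{equation*}
For a deletion both brackets are nonpositive, so the claim is immediate. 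It therefore suffices to treat an insertion, where $d_t(i)=d_{t-1}(i)+1$ for $i\in\{u,v\}$.

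The key input is that $r_{t-1}$ is the residual after \textsc{Cleanup} on $G_{t-1}$. By the termination condition of \Cref{alg:cleanup}, $|r_{t-1,i}|\le (1-K(1-\alpha))\,\epsilon\, d_{t-1}(i)^{1-\beta}$ for every $i$. This holds also for $t=1$, since $(z_0,y_0,r_0)$ is the output of \textsc{Cleanup} on $G_0$. Substituting this bound cancels the factor $1-K(1-\alpha)$, and each endpoint then contributes at most
\begin{equation*}
\epsilon\,\bigl((d+1)^\beta - d^\beta\bigr)\,d^{1-\beta}, \qquad d=d_{t-1}(i)\ge 1 .
\end{equation*}

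The one genuine step is the elementary inequality $\bigl((d+1)^\beta-d^\beta\bigr)d^{1-\beta}\le 1$ for $\beta\in[0,1]$ and $d\ge 1$. I would prove it by concavity of $x\mapsto x^\beta$ (or the mean value theorem): $(d+1)^\beta-d^\beta\le \beta d^{\beta-1}$. Multiplying by $d^{1-\beta}$ gives at most $\beta\le 1$; the case $\beta=0$ gives $0$ trivially. Summing over the two endpoints yields the bound $2\epsilon$.

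I expect no serious obstacle. The only care needed is to use the cleanup bound with the \emph{old} degree $d_{t-1}(i)$, since the residual was certified on $G_{t-1}$, and to handle insertions and deletions separately as above.
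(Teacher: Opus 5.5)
Your proof is correct and follows the paper's argument step for step: restrict the difference to the two endpoints, dispose of deletions by sign, apply the post-\textsc{Cleanup} bound with the old degree $d_{t-1}(i)$, and finish with $d^{1-\beta}\bigl((d+1)^\beta-d^\beta\bigr)\le 1$. The only cosmetic difference is how that inequality is proved: the paper derives it from its Bernoulli-based \Cref{lem:degree-perturbation}, while your tangent-line (concavity) argument proves it directly and even gives the slightly sharper constant $\beta$.
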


\begin{restatable}[Cost of one edge update]{theorem}{thmUpdateCostZ}
\label{thm:update-cost-z}
Let $T_t$ be the total work of the update at time $t$. Then
\begin{align*}
T_t
&\le
\frac{2}{1-K(1-\alpha)}
+
\frac{2\left(1+2K(1-\alpha)\right)(B_z+\epsilon)}{\left(1-K(1-\alpha)\right)^2\epsilon}
+
\frac{\Psi_{W_{t-1}}(r_{t-1})-\Psi_{W_t}(r_t)}{\left(1-K(1-\alpha)\right)\epsilon}.
\end{align*}
\end{restatable}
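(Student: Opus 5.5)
The total work at time $t$ splits into the $O(1)$ cost of \textsc{UpdateEdge} and the cost $T_{\text{clean}}$ of the subsequent \textsc{Cleanup} on $G_t$. I would charge the first part to the constant term $\frac{2}{1-K(1-\alpha)}$ (at least $2$, since $1-K(1-\alpha)\le 1$): \textsc{UpdateEdge} performs a constant amount of work per endpoint, two endpoints, which we count as unit cost each. For the second part I would apply \Cref{lem:cleanup-work} on $G_t$ with $r^{\text{start}}=r'_t$ and $r^{\text{end}}=r_t$. This gives
\[
T_{\text{clean}}\le \frac{\Psi_{W_t}(r'_t)-\Psi_{W_t}(r_t)}{(1-K(1-\alpha))\epsilon}.
\]

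It then suffices to bound $\Psi_{W_t}(r'_t)$ in terms of $\Psi_{W_{t-1}}(r_{t-1})$. Since $\Psi_{W_t}$ is a weighted $\ell_1$ norm, the triangle inequality gives $\Psi_{W_t}(r'_t)\le \Psi_{W_t}(r'_t-r_{t-1})+\Psi_{W_t}(r_{t-1})$. Writing $\Psi_{W_t}(r_{t-1})=\Psi_{W_{t-1}}(r_{t-1})+\bigl(\Psi_{W_t}(r_{t-1})-\Psi_{W_{t-1}}(r_{t-1})\bigr)$, I would bound the first term by \Cref{lem:single-edge-increment} and the bracketed difference by $2\epsilon$ via \Cref{lem:single-edge-increment-sec}. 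This yields
\[
\Psi_{W_t}(r'_t)-\Psi_{W_t}(r_t)\le \frac{2(1+2K(1-\alpha))(B_z+\epsilon)}{1-K(1-\alpha)}+2\epsilon+\Psi_{W_{t-1}}(r_{t-1})-\Psi_{W_t}(r_t).
\]

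Dividing by $(1-K(1-\alpha))\epsilon$ gives the three stated terms. The $2\epsilon$ contribution becomes $\frac{2}{1-K(1-\alpha)}$, which I would merge with the \textsc{UpdateEdge} cost. This merge is the only point needing care, since the stated constant has room for only one of these two contributions. Either the $O(1)$ update cost is absorbed into the unit-cost convention of \Cref{lem:cleanup-work}, or the lemma's bound has slack (a push costs $d(i)$ while the potential drops by more than $(1-K(1-\alpha))\epsilon\, d(i)$). I would check which convention \Cref{lem:cleanup-work} uses and adjust the bookkeeping so that the constant comes out as $2/(1-K(1-\alpha))$. The rest is routine algebra.
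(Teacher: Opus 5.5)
Your argument is the paper's proof: apply \Cref{lem:cleanup-work} on $G_t$ from $r'_t$ to $r_t$, use the triangle inequality and the split of $\Psi_{W_t}(r_{t-1})$, and bound the two pieces by \Cref{lem:single-edge-increment,lem:single-edge-increment-sec}, so the term $\frac{2}{1-K(1-\alpha)}$ is exactly the $2\epsilon$ from the second lemma. On the bookkeeping worry, the paper applies \Cref{lem:cleanup-work} to $T_t$ directly, so it counts only the \textsc{Cleanup} work and never charges the constant cost of \textsc{UpdateEdge} (your first option), whereas your second option fails because a cleanup may perform no pushes, or push residuals arbitrarily close to the threshold, so the strict inequality gives no fixed amount of slack.
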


We now combine the preceding results to obtain the final upper bound theorem. The proof is given in \Cref{appendix:proofs:nl}.

\begin{restatable}[Total cost]{theorem}{thmTotalCostZ}
\label{thm:total-cost-z}
Consider a sequence of $k$ updates $\Event_1, \dots, \Event_k$, and let $T_{\mathrm{total}}$ denote the total running time of \Cref{alg:dynamic-update}. Then
\begin{align*}
T_{\mathrm{total}} \le 
T_{\mathrm{init}}
+
\frac{2k}{1-K(1-\alpha)}
+
\frac{2\left(1+2K(1-\alpha)\right)(B_z+\epsilon)}{\left(1-K(1-\alpha)\right)^2\epsilon} k,
\end{align*}
where $T_{\mathrm{init}} = \frac{\|D_0^\beta F(\alpha s)\|_1}{\left(1-K(1-\alpha)\right)^2\epsilon}$.
\end{restatable}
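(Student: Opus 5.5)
The plan is to bound the initialization cost and the $k$ update costs separately, then add them and let the per-update potential differences telescope.

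\emph{Initialization.} \Cref{alg:dynamic-update} first runs \textsc{Cleanup} on $G_0$ from the state $(z^{\mathrm{start}}_0, y^{\mathrm{start}}_0, r^{\mathrm{start}}_0)$. This state satisfies \Cref*{inv:1,inv:2}: with $z=0$ we get $y=\alpha s$ and $r=F(\alpha s)$. By \Cref{lem:cleanup-work}, the cost of this phase is at most
\[
\frac{\Psi_{W_0}(r^{\mathrm{start}}_0)-\Psi_{W_0}(r_0)}{(1-K(1-\alpha))\epsilon}.
\]
Expanding \eqref{eq:psi-def}, the first term equals $\|D_0^\beta F(\alpha s)\|_1/(1-K(1-\alpha))$. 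So this cost equals $T_{\mathrm{init}} - \Psi_{W_0}(r_0)/((1-K(1-\alpha))\epsilon)$. I will keep the negative term $-\Psi_{W_0}(r_0)/((1-K(1-\alpha))\epsilon)$ rather than drop it, because it cancels the first term of the telescoping sum below.

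\emph{Updates.} For each $t=1,\dots,k$, \Cref{thm:update-cost-z} bounds the work $T_t$ of the $t$-th update (\textsc{UpdateEdge} together with the subsequent \textsc{Cleanup}). It gives the constant $\frac{2}{1-K(1-\alpha)}$, plus the $B_z$-dependent constant, plus the difference
\[
\frac{\Psi_{W_{t-1}}(r_{t-1})-\Psi_{W_t}(r_t)}{(1-K(1-\alpha))\epsilon}.
\]
Summing over $t$, the constants contribute exactly the two $k$-linear terms in the statement. The differences telescope to
\[
\frac{\Psi_{W_0}(r_0)-\Psi_{W_k}(r_k)}{(1-K(1-\alpha))\epsilon}.
\]
Adding the initialization bound cancels the $\Psi_{W_0}(r_0)$ term. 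What remains is $-\Psi_{W_k}(r_k)/((1-K(1-\alpha))\epsilon)$, which is nonpositive because $\Psi \ge 0$ and $K\le 1$, $\alpha>0$ give $1-K(1-\alpha)>0$. Dropping this term yields the claimed bound.

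\emph{Main obstacle.} The argument is mostly bookkeeping. The only delicate point is indexing: the potential at the end of update $t-1$ must be measured with $W_{t-1}$, exactly as \Cref{thm:update-cost-z} uses it, so that consecutive terms cancel. For $t=1$ this matches the end of initialization, where the potential is $\Psi_{W_0}(r_0)$ on graph $G_0$. I also need to check that \Cref{thm:update-cost-z} applies even though $r_{t-1}$ is the output of a \textsc{Cleanup}. This holds because its hypotheses (e.g.\ \Cref{lem:z-bounded}) require only the post-\textsc{Cleanup} state, and that is guaranteed at every step, including $t=1$ via the initialization.
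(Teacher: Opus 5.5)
Your proposal is correct and follows the paper's proof step for step. The paper also bounds initialization by \Cref{lem:cleanup-work} and sums \Cref{thm:update-cost-z} over $t$. It then telescopes the potential differences so that $\Psi_{W_0}(r_0)$ cancels, drops $-\Psi_{W_k}(r_k)$, and evaluates $\Psi_{W_0}(r^{\mathrm{start}}_0)=\|D_0^\beta F(\alpha s)\|_1/(1-K(1-\alpha))$. The only cosmetic difference is that the paper carries $\|D_{t-1}^{\beta-1}z_{t-1}\|_\infty$ through the sum and substitutes $B_z+\epsilon$ via \Cref{lem:z-bounded} at the end, while you use the constant from \Cref{thm:update-cost-z} directly.
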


Our analysis of this algorithm is asymptotically tight. The following theorem shows that the amortized time of our algorithm is at least $\Omega(1/\epsilon)$, even in the linear case. We provide the proof in \Cref{appendix:proofs:nl}.

\begin{restatable}[Lower bound on total cost]{theorem}{thmPushLowerBound}
For any sufficiently small $\epsilon > 0$ and any $k$, there exists an initial graph with $n = \Theta(1/\epsilon)$ nodes, a vector $s$ with $\|s\|_\infty \le 1$, and a sequence of $k$ edge insertions and deletions such that \Cref{alg:dynamic-update} requires a total work of $\Omega(k/\epsilon)$. This result holds even in the linear case where $f_i(x) = x$ for all $i$.
\end{restatable}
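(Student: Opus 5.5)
We will construct an explicit instance in the linear case $f_i(x)=x$ (so $K=1$). In this instance each edge update leaves a residual large enough that \textsc{Cleanup} must perform at least one push of cost $\Omega(1/\epsilon)$. To make degrees large without creating many nodes, we use a star.

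\textbf{Construction.} Take $\beta=0$, so that $W=AD^{-1}$, the degree-scaled norm is $\|v\|_{D,\infty}=\max_i |v_i|/d(i)$, and the push threshold at node $i$ is $\alpha\epsilon\, d(i)$. The graph has a center $c$ joined to $m=\Theta(1/\epsilon)$ leaves, plus one extra node $u$ that is initially isolated except for its self-loop; we fix $\alpha$ to be a constant. Set $s_c=0$ and $s_\ell=1$ for every leaf, so $\|s\|_\infty\le 1$, and choose $s_u$ as described below.

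\textbf{Key quantity.} In the fixed point, each leaf has $d=2$ and value $\Theta(1)$, and $z^*_c=\Theta(m)$, because the column normalization $AD^{-1}$ sends $z_\ell/2$ from each leaf to $c$. More directly, after any \textsc{Cleanup} the maintained $z$ satisfies $|z_c - z_c^*|\le \epsilon d(c)$, so $z_c=\Theta(m)$. The updates alternate between inserting the edge $(c,u)$ and deleting it. Under the scaling trick of \textsc{UpdateEdge}, $z_c$ is unchanged when $\beta=0$, and $y_u$ gains or loses the contribution $(1-\alpha)x_c=(1-\alpha)z_c/d(c)$. This is only $\Theta(1)$, which is too small, so I would instead use the contribution in the opposite direction: node $c$ receives $(1-\alpha)z_u/d(u)$. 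Accordingly, I would give $u$ a large value, and the construction is redesigned so that $u$ is itself a hub. Let $u$ be the center of its own star with $m$ leaves. Then $z_u=\Theta(m)$ and $d(u)=m+1$, so $x_u=\Theta(1)$, which is again only $\Theta(1)$.

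\textbf{Main obstacle.} The previous step shows the difficulty: $x_j=z_j/d_j^{1-\beta}$ is bounded by $B_z+\epsilon$, so each event changes the residuals by $O(1)$ in absolute terms. A push at a node $i$ happens only once $|r_i|>\alpha\epsilon\, d(i)$. To force work $\Omega(1/\epsilon)$ per event, I would therefore use low-degree endpoints. Each event makes $|r_u|=\Theta(1)$ at a node $u$ of degree $2$, which triggers a push of cost $O(1)$. That push propagates residual $\Theta(1)\cdot w$ into a path or chain of $\Theta(1/\epsilon)$ nodes, each of degree $2$. Because $\alpha$ decays the residual geometrically along the chain, I would take $\alpha$ itself to be $\Theta(\epsilon)$, or more cleanly let the threshold $\alpha\epsilon d$ be compared against residuals that decay like $(1-\alpha)^k$. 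With $1-\alpha$ close to $1$, the push cascade travels $\Omega(1/\epsilon)$ steps before the residual drops below the threshold $\alpha\epsilon\cdot 2$. The step I expect to be hardest is proving the alternation is genuinely adversarial. After an insert/delete pair the state must return to essentially the same configuration, so that the next cascade again has to travel $\Omega(1/\epsilon)$ nodes rather than being absorbed by earlier work. I would handle this by alternating $s$-signed sources: the inserted edge attaches the chain to a node of value $+1$ and then, after deletion, the chain must relax back toward $0$, so each event flips the required values along $\Theta(1/\epsilon)$ chain nodes by $\Theta(1)\gg\epsilon d$. Since each push moves the value at only one node, and each of these $\Omega(1/\epsilon)$ nodes must be pushed at least once per event, the total work is $\Omega(k/\epsilon)$.
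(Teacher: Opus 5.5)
There is a genuine gap: the chain construction you end with does not yield $\Omega(1/\epsilon)$ work, and the claims it rests on are false.

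With $\beta=0$, a push at $i$ adds $(1-\alpha)r_i/d(i)$ to each neighbour's residual. So the residual is split among the neighbours at every hop; it does not travel down the chain decaying only like $(1-\alpha)^k$. At the level of the fixed point, $z^*=\alpha\sum_{t\ge 0}(1-\alpha)^t(AD^{-1})^t s$ is a lazy random walk run for a geometric number of steps with mean about $1/\alpha$. On a path this spreads only $O(1/\sqrt{\alpha})$ hops. Concretely, on the interior of the chain $3z_\ell=(1-\alpha)(z_{\ell-1}+z_\ell+z_{\ell+1})$, whose decaying solution is $\lambda^\ell$ with $1-\lambda\approx\sqrt{3\alpha}$. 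Moreover, $AD^{-1}$ is column-stochastic, so $\mathbf{1}^\top z^*=\mathbf{1}^\top s$ on each component. Attaching the chain to a single unit source therefore places total mass at most $1$ on it.

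Hence with $\alpha=\Theta(\epsilon)$, only $\tilde O(1/\sqrt{\epsilon})$ chain nodes have fixed-point values that move by more than $2\epsilon d(i)$, each costing $O(1)$ to push. You do not get $\Theta(1/\epsilon)$ nodes moving by $\Theta(1)$. Your counting principle is itself sound: only pushes and endpoint rescaling change $z$, so a node whose fixed-point value moves by more than $2\epsilon d(i)$ must be pushed. Here, though, it certifies far less than $\Omega(1/\epsilon)$. Letting $\alpha$ depend on $\epsilon$ also defeats the point of the theorem. For $K=1$ the upper bound scales like $(1+B_z)/(\alpha^2\epsilon)$, and the lower bound is meant to show the $1/\epsilon$ dependence is tight for fixed $\alpha$; the paper uses $\alpha=1/2$.

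The missing idea is the one you discarded early on. You correctly observe that an event changes residuals by only $O(1)$. But the push threshold at $i$ is $\alpha\epsilon d(i)$, so at a hub of degree $m=\lfloor 1/(100\epsilon)\rfloor$ it is at most about $\alpha/100$. An $O(1)$ increment there is not too small; it forces a push costing $\Theta(1/\epsilon)$. This is exactly the paper's proof, in your letters:
\begin{itemize}
\item $\alpha=1/2$ and $\beta=0$;
\item a star whose centre $c$ has degree $m$ (self-loop included), with $s=0$ on the whole star;
\item an isolated node $u$ with $s_u=1$;
\item the edge $(c,u)$ toggled repeatedly.
\end{itemize}
An insertion gives $r'_c=r_c+\tfrac12 z_u-z_c/m\ge 0.48$, against a threshold $\tfrac12\epsilon(m+1)\le 0.01$. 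The paper then bounds, by geometric series, how much pushes at $u$ and at the leaves can lower $r_c$ before $c$ is pushed: at most $0.34$ and $1/300$, respectively.

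Contrary to your remark, for $\beta=0$ the scaling trick does change $z_c$, by the factor $(d(c)+1)/d(c)$; what it preserves is $z_c/d(c)$. The resulting $-z_c/d(c)$ term in $r'_c$ is why the paper puts $s=0$ on the star, so that $z_c\approx 0$. Your choice $s_\ell=1$ makes $z_c=\Theta(m)$ and partially cancels the increment.

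Finally, your displacement principle would give a cleaner finish than the paper's cancellation argument. After each insertion, $z^*_c$ jumps from $0$ to roughly $0.2$. The tolerance at $c$ is about $\epsilon d(c)\approx 0.01$, and the rescaling moves $z_c$ by at most $|z_c|/m\le\epsilon$. So $c$ must be pushed after every insertion.
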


%% file: sections/4_exact.tex
\section{Exact Dynamic Algorithm For the Linear Case}
Our main result concerns approximate maintenance for nonlinear updates, but for completeness, we also provide an exact dynamic result for the linear case. When $F$ is the identity map, the fixed point is $z^* = [I-(1-\alpha)W]^{-1}\alpha s$. The algorithm adapts the dynamic matrix inverse framework of \citet{sankowski2004dynamic} to the normalized matrix $W=D^{-\beta}AD^{\beta-1}$. Unlike an entry update, one edge update changes the endpoint degrees and hence affects multiple rows and columns of $W$. We show how to represent these changes as low-rank updates and how to incorporate them into the lazy update and rebuild scheme of \citet{sankowski2004dynamic}. The details are deferred to \Cref{appendix:proofs:exact}. We write $\omega$ for the exponent of square matrix multiplication. More generally, we write $\omega(a,b,c)$ for the exponent of the running time for multiplying an $n^a \times n^b$ matrix by an $n^b \times n^c$ matrix.

\begin{theorem}[Exact dynamic maintenance for the linear case]
Assume $f_i(x)=x$ for all $i$, and suppose every graph has permanent self-loops. The fixed point $z^*$ can be maintained explicitly under (possibly weighted) edge insertions and deletions, with an initialization cost $O(n^\omega)$ and worst-case update time
$O\left(n^{\omega(1,\gamma,1)-\gamma} + n^{1+\gamma}\right)$
for any parameter $\gamma \in [0,1]$. Using current rectangular matrix multiplication bounds, this gives update time
$O(n^{1.5275})$.
\end{theorem}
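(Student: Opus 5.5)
Our plan follows \citet{sankowski2004dynamic}: express the fixed point through a matrix inverse, show that each edge event is a constant-rank perturbation of the matrix being inverted, and maintain that inverse lazily with periodic rebuilds. Set $M = I-(1-\alpha)W$, so that $z^* = M^{-1}\alpha s$. The matrix $M$ is invertible because $\rho(W)\le 1$, which follows from \Cref{lem:deg-scaled-norm-W} since that lemma bounds the induced norm $\|\cdot\|_{D,\infty}$. As a first step we write $W = D^{-\beta}AD^{\beta-1}$ and examine how one edge event changes it. Suppose edge $(u,v)$ is inserted or deleted, possibly with a weight. Then $A$ changes by a rank-$2$ term supported on rows and columns $u,v$, and $D$ changes only in the diagonal entries $u,v$. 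Consequently the new matrix is
\[
W' = (D+\Delta_D)^{-\beta}(A+\Delta_A)(D+\Delta_D)^{\beta-1}.
\]
Since $\Delta_D$ affects only two coordinates, the left factor equals $D^{-\beta}$ plus a rank-$2$ diagonal correction, and the same holds for the right factor. Expanding the product therefore gives $W' = W + UV^T$ with $U,V$ of width $O(1)$; six columns should suffice. The cost of the normalization is that entire rows $u,v$ and columns $u,v$ of $W$ are rescaled. This is precisely the phenomenon the scaling trick handles in \Cref{alg:update-edge}, and in this setting it is absorbed into the low-rank factors. The factors can be written down in $O(n)$ time from the current $A$ and $D$.

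Next we maintain $M^{-1}$ implicitly. Fix a rebuild period of $n^\gamma$ updates. Between rebuilds we store the inverse $N=M_0^{-1}$ from the last rebuild, together with the accumulated factors $U_t,V_t$ of width $m=O(t)\le O(n^\gamma)$, and apply Sherman--Morrison--Woodbury:
\[
M_t^{-1} = N + N U_t\,(I - V_t^T N U_t)^{-1} V_t^T N .
\]
With the sign convention $M_t = M_0 - U_tV_t^T$, where the factor $(1-\alpha)$ is folded into $U_t$, the capacitance matrix $(I - V_t^T N U_t)$ is invertible because each $M_t$ is. To keep every update worst-case rather than amortized, we maintain the products $NU_t$ and $V_t^TN$ incrementally: each event appends $O(1)$ columns, and each new column costs $O(n^2)$ to form with the stored $N$. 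That cost is too high, so we instead follow \citet{sankowski2004dynamic} and keep only $N\alpha s$ and the rows or columns we actually need. The fixed point is
\[
z^*_t = N\alpha s + NU_t(\cdots)^{-1}V_t^T N\alpha s .
\]
The vector $V_t^T N\alpha s$ is cheap, and the only expensive ingredient is $NU_t$, an $n\times m$ matrix. Appending a column $N u$ costs $O(n^2)$ in general. However, $u$ is supported on $O(1)$ coordinates together with possibly one dense rescaled column, and its dense part is a multiple of a column of $W$. We therefore expect $Nu$ to be computable from $O(1)$ columns of $N$ combined with the relation $NW = (N - I)/(1-\alpha)$, which gives $O(n)$ per new column. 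Updating the $m\times m$ capacitance inverse by a rank-$O(1)$ bordering costs $O(m^2)$, and forming $z^*_t$ costs $O(nm)$. Altogether the per-update cost between rebuilds is $O(n^{1+\gamma})$.

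The step we expect to be the main obstacle is the rebuild, which needs $N\leftarrow M_t^{-1}$ computed explicitly. Doing it from scratch costs $O(n^\omega)$ every $n^\gamma$ steps. Using Woodbury instead, the rebuild needs $NU_t$ (which we already have) multiplied by an $m\times n$ matrix, which is a rectangular product of cost $n^{\omega(1,\gamma,1)}$. To make the bound worst-case we spread this work evenly over the next $n^\gamma$ updates, giving $O(n^{\omega(1,\gamma,1)-\gamma})$ per update. This requires running the computation for the next $N$ in the background while still answering queries from the old $N$ plus two generations of low-rank factors, which is the standard two-phase trick of Sankowski. Initialization is a single inversion costing $O(n^\omega)$. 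Balancing $\omega(1,\gamma,1)-\gamma = 1+\gamma$ with current bounds on rectangular matrix multiplication gives $\gamma\approx 0.5275$ and the stated $O(n^{1.5275})$.
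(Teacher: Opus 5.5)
Your plan follows the paper's route. You write the normalized edge update as a constant-rank perturbation; the paper gets rank $4$ with $U=[E,\ WE\Gamma]$, and your rank-$6$ expansion works equally well. You keep $M_0^{-1}$ explicitly with lazily accumulated Woodbury factors, use $NW=(N-I)/(1-\alpha)$ to avoid $O(n^2)$ products, rebuild every $n^\gamma$ updates at cost $n^{\omega(1,\gamma,1)}$, and deamortize with two copies. Rebuilding directly from the accumulated factors, instead of through Sankowski's changed-rows-and-columns lemma, is only a cosmetic difference, since that lemma is itself a rank-$O(n^\gamma)$ Woodbury step.

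One step is wrong as written: the claim that $Nu$ costs $O(n)$. The identity $NW=(N-I)/(1-\alpha)$ holds only for $W_0$, the matrix at the last rebuild. The dense part of the new factor is a column of the current $W_t$. Edges at $u$, $d(u)$, and the degrees of $u$'s neighbors may all have changed since the rebuild, so this column is generally not a multiple of a column of $W_0$. In your convention $M_t=M_0-U_tV_t^T$ you have $W_t=W_0+\frac{1}{1-\alpha}U_tV_t^T$, so
\[
NW_te_u=\frac{1}{1-\alpha}\bigl(Ne_u-e_u+(NU_t)(V_t^Te_u)\bigr).
\]
The correction term costs $O(nm)$, not $O(n)$. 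This is exactly the paper's formula $MWE=\frac{1}{1-\alpha}(ME-E)-\frac{1}{1-\alpha}Y\mathcal{V}^TE$ in \Cref{lem:exact:update}; the sign flips because the paper writes $B=B_0+\mathcal{U}\mathcal{V}^T$. Your per-update budget is already $O(nm)=O(n^{1+\gamma})$, since forming $z_t^*$ and the border entries $V^T(NU)$ of the capacitance matrix also cost $O(nm)$. So the fix leaves your final bound and the balancing at $\gamma\approx 0.5275$ unchanged.
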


\begin{proof}[Proof sketch]
Let $B=I-(1-\alpha)W$. An edge update changes only two rows and two columns of $W$, and can be written as a rank-$4$ update. We then apply the lazy update and periodic rebuild scheme of \citet{sankowski2004dynamic}. Between rebuilds, $B^{-1}$ is maintained implicitly using the Woodbury identity, while the explicit vector $z^*=B^{-1}\alpha s$ is updated after each edge change. After $k$ edge updates, one update costs $O(nk+k^2)$ time. Following \citet{sankowski2004dynamic}, after $n^\gamma$ updates we rebuild the inverse using the fact that only $O(n^\gamma)$ rows and columns have changed. This gives amortized update time $O(n^{\omega(1,\gamma,1)-\gamma}+n^{1+\gamma})$. The standard deamortization used by \citet{sankowski2004dynamic} gives the same worst-case bound. Full details are in \Cref{appendix:proofs:exact}.
\end{proof}

%% file: sections/5_experiments.tex
\section{Experiments}
We note that our main contribution is the theoretical analysis of the dynamic algorithm (\Cref{alg:dynamic-update}). Still, we run some experiments to evaluate the practicality of our approach. We organize the experimental contribution into two parts, corresponding to the two main benefits of the model: accuracy and running time. First, in the static setting, we show that adding non-linearity improves predictive accuracy across several benchmark datasets. Second, in the dynamic setting, we evaluate whether our algorithm preserves efficient update performance and compare its running time under non-linearity with the linear algorithm of \cite{zheng2022instant}. We publicly provide our code.
\footnote{\url{https://anonymous.4open.science/r/dynamic-nl-public-code-A225}}

Although the model and analysis are presented for scalar node values, all experiments use multi-dimensional node features. We apply the propagation independently to each feature dimension, using the same graph and parameters for every dimension. All non-linear activations reported in the experiments are uniformly bounded by a constant, and $K$-Lipschitz with $K \le 1$.

\subsection{Non-linearity Improves Accuracy} \label{section:exp-acc}

We evaluate our approach on several common static datasets. The datasets are explained in \Cref{appendix:exp-acc:datasets}. Following \citet{zheng2022instant}, we first propagate features and then train an MLP on the propagated representations. We use the same activation at every node and pass both the propagated vector $z$ and the pre-activation vector $y=\alpha s+(1-\alpha)Wz$ to the classifier. Hyperparameters are selected by validation accuracy using the same search space for all activation functions. We use three parameterized activation functions: (i) scaled tanh, $\tanh(cx)/c$ with parameter $c$, denoted by \enquote{Sc Tanh $c$}; (ii) shifted tanh, $\tanh(x-c)$ with parameter $c$, denoted by \enquote{Sh Tanh $c$}; and (iii) hard tanh, $\min(c, \max(-c, x))$ with parameter $c$, denoted by \enquote{H Tanh $c$}.

We compare with GCN~\citep{kipf2017semi}, GAT~\citep{velivckovic2017graph}, Geom-GCN~\citep{pei2020geom}, APPNP~\citep{gasteiger2018predict}, ACM-GCN~\citep{luan2022revisiting}, and GREAD~\citep{choi2023gread}, all of which we run ourselves on the same splits as our model. \citet{platonov2023critical} show that \textsc{Chameleon} and \textsc{Squirrel} contain duplicated nodes and release filtered versions without them, so we also evaluate on these two filtered datasets. \Cref{table:exp-acc:acc} reports the results on the original datasets, and \Cref{table:exp-acc:filtered} in the Appendix reports the results on the filtered ones. On every dataset, at least one non-linear activation is more accurate than the linear model, and the gain is sometimes large, such as 11 points on \textsc{Wisconsin}. On \textsc{PubMed} and on both filtered datasets, our non-linear model is more accurate than every baseline.

\begin{table}[H]
\caption{Classification accuracy of models, with standard errors. The best method is \underline{underlined}, and the best activation function within our model is shown in \textcolor{green!50!black}{green}.} 
\centering
\label{table:exp-acc:acc}
\setlength{\tabcolsep}{4pt}
\begin{adjustbox}{width=\linewidth}
\input{tables/exp_acc}
\end{adjustbox}
\end{table}

\subsection{Dynamic Graphs}
\begin{figure*}[thbp]
\centering
\begin{subfigure}[t]{0.33\textwidth}
\includegraphics[width=\linewidth]{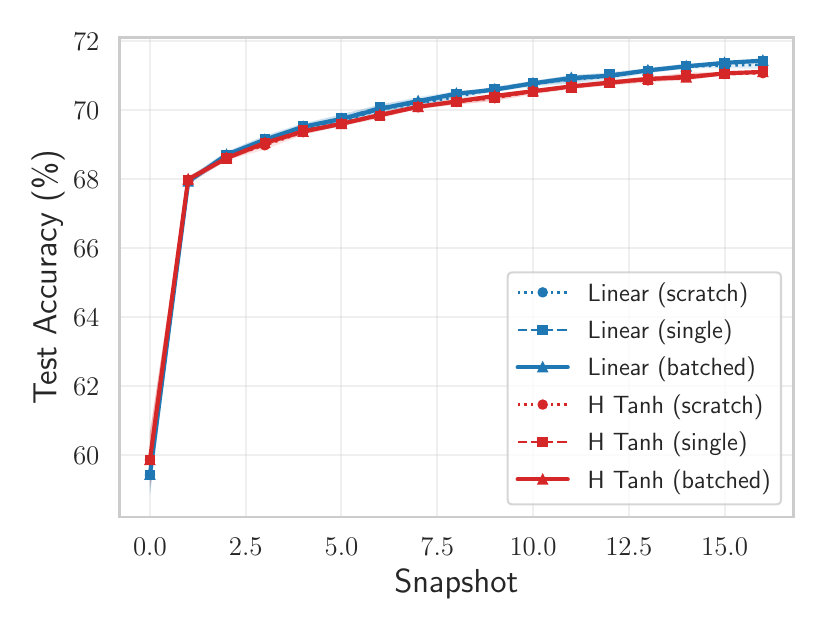}
\end{subfigure}\hfill
\begin{subfigure}[t]{0.33\textwidth}
\includegraphics[width=\linewidth]{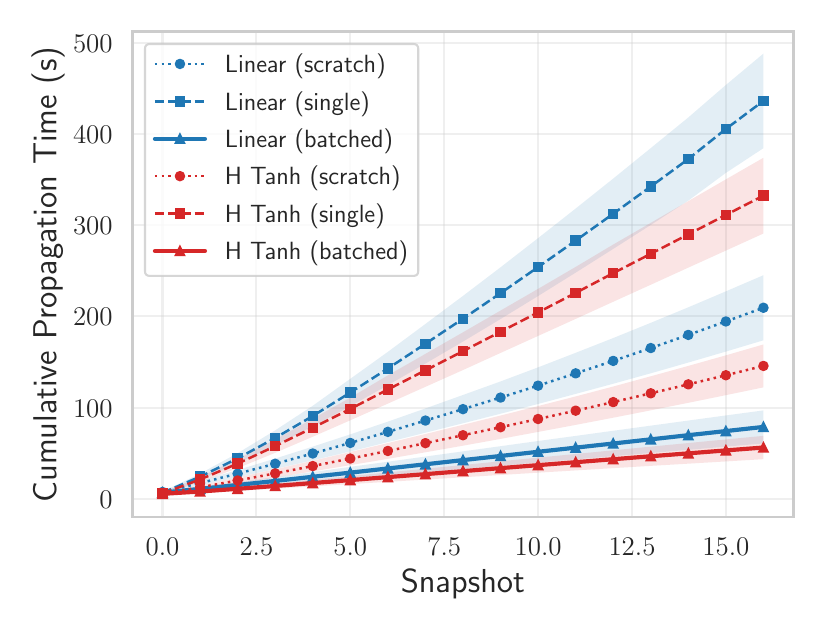}
\end{subfigure}\hfill
\begin{subfigure}[t]{0.33\textwidth}
\includegraphics[width=\linewidth]{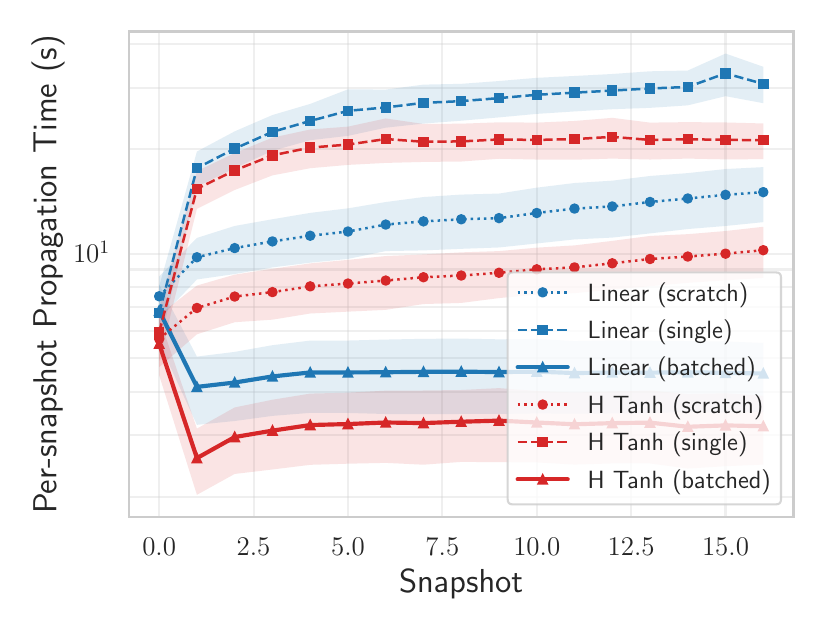}
\end{subfigure}\hfill
\caption{Results on the dynamic \texttt{arxiv} dataset. Accuracy is similar for the linear and hard-tanh models. Since this is the smallest dynamic experiment, recomputation from scratch is still faster than the \enquote{single} strategy, which handles the harder online setting of propagating after each edge update. Batched updates are much faster than both.}
\label{fig:arxiv-comparison}
\end{figure*}

We evaluate dynamic propagation on \texttt{arxiv}, \texttt{products}, and \texttt{SBM-500K} \citep{hu2020open,zheng2022instant}. The first two are incremental edge-addition streams from Open Graph Benchmark, while \texttt{SBM-500K} is a synthetic dynamic graph by \citet{zheng2022instant}, and contains both insertions and deletions. We compare the linear model with our non-linear hard-tanh model using $c=2.5$. Propagation is parallelized over feature dimensions using 16 CPU cores. For the linear baseline, we use the identity activation function within our framework which yields the same linear propagation model as \citet{zheng2022instant}. We evaluate three update strategies: \enquote{scratch}, which recomputes the propagation state after each snapshot; \enquote{single}, which applies our dynamic algorithm after each individual edge update; and \enquote{batched}, which processes all edge changes within a snapshot prior to cleanup. The \enquote{single} strategy aligns with the online update model analyzed in our theorems, whereas \enquote{batched} serves as a practical heuristic at the snapshot level.

\Cref{fig:arxiv-comparison,fig:products-comparison,fig:sbm500k-comparison} show that the linear and non-linear models have similar propagation time and accuracy across all datasets. The dynamic methods are especially beneficial on the larger graphs, where recomputation from scratch is much more expensive. The per-snapshot propagation time is nearly stable after the first snapshot, and cumulative propagation time grows approximately linearly with the number of snapshots, consistent with our amortized update guarantees. All curves in these experiments are averaged over 10 independent runs, and the shaded regions show 95\% confidence intervals. Additional experiments in \Cref{appendix:exp-dyn} show that the observed running times follow the predicted dependence on $\epsilon$ and $\alpha$. We also run larger versions of the SBM benchmark with 2M and 5M nodes, and the \texttt{papers100M} graph from Open Graph Benchmark \citep{hu2020open}, which has 111M nodes and 1.6B edges (\Cref{table:exp-dyn:large} in the Appendix). The advantage of dynamic updates over recomputation grows with the size of the graph. On \texttt{papers100M}, a batched update takes about two minutes per snapshot, while recomputation takes over five hours.

\begin{figure*}[thbp]
\centering
\begin{subfigure}[t]{0.33\textwidth}
\includegraphics[width=\linewidth]{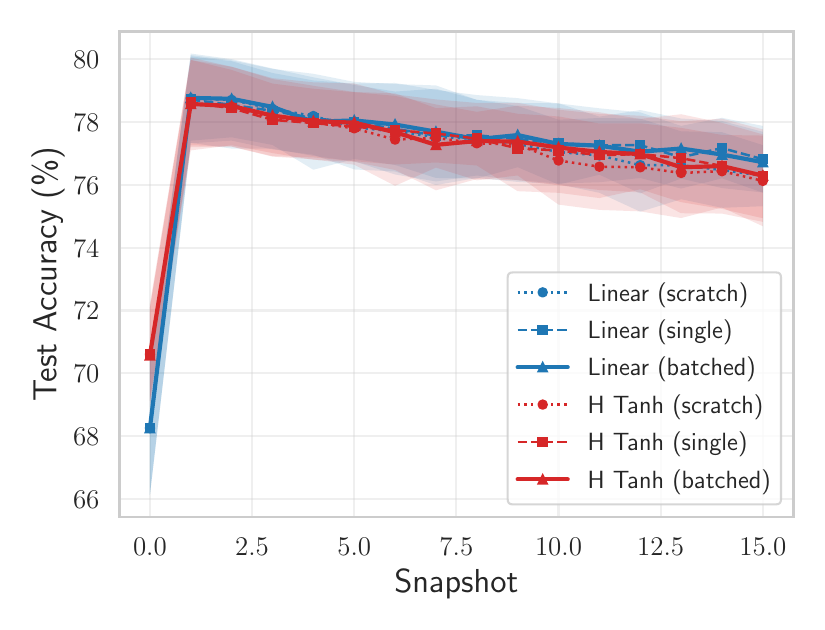}
\end{subfigure}\hfill
\begin{subfigure}[t]{0.33\textwidth}
\includegraphics[width=\linewidth]{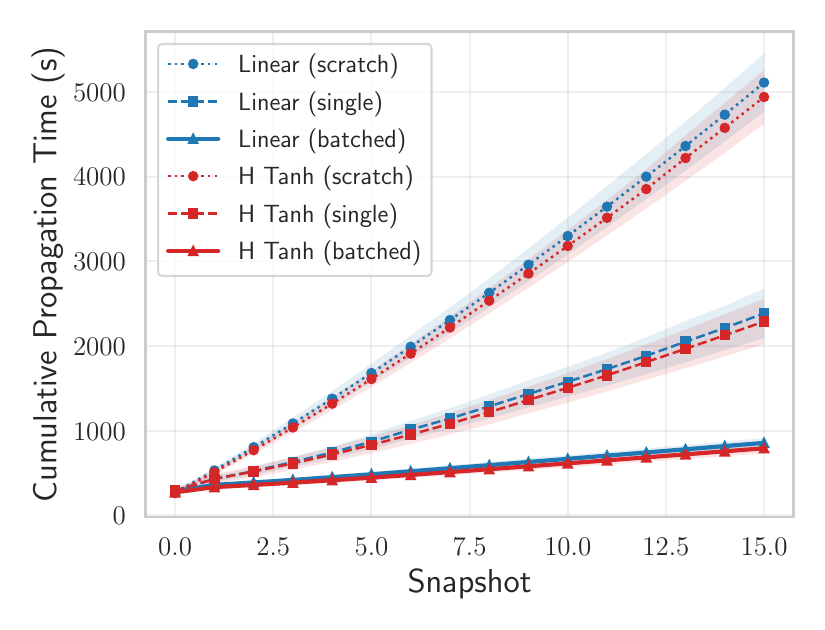}
\end{subfigure}\hfill
\begin{subfigure}[t]{0.33\textwidth}
\includegraphics[width=\linewidth]{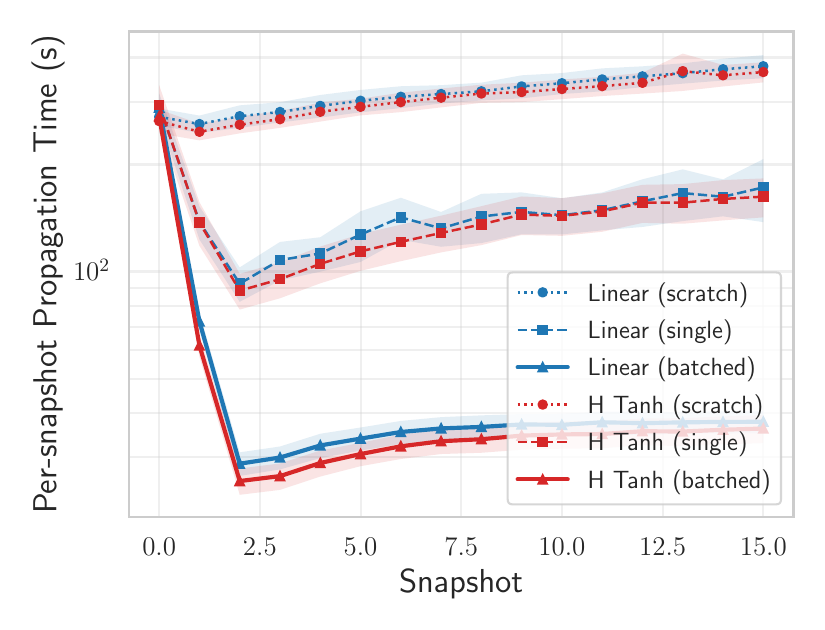}
\end{subfigure}\hfill
\caption{Results on the dynamic \texttt{products} dataset. Accuracy peaks early and then slowly decreases across later snapshots. Unlike on \texttt{arxiv}, recomputation from scratch becomes much more expensive on this larger graph, while dynamic updates keep the propagation cost substantially lower.}
\label{fig:products-comparison}
\end{figure*}

\begin{figure*}[thbp]
\centering
\begin{subfigure}[t]{0.33\textwidth}
\includegraphics[width=\linewidth]{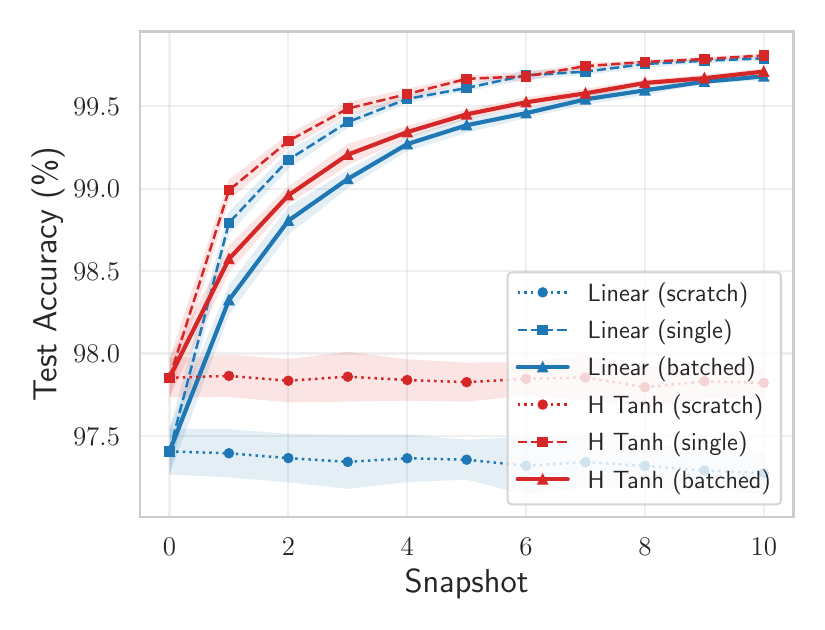}
\end{subfigure}\hfill
\begin{subfigure}[t]{0.33\textwidth}
\includegraphics[width=\linewidth]{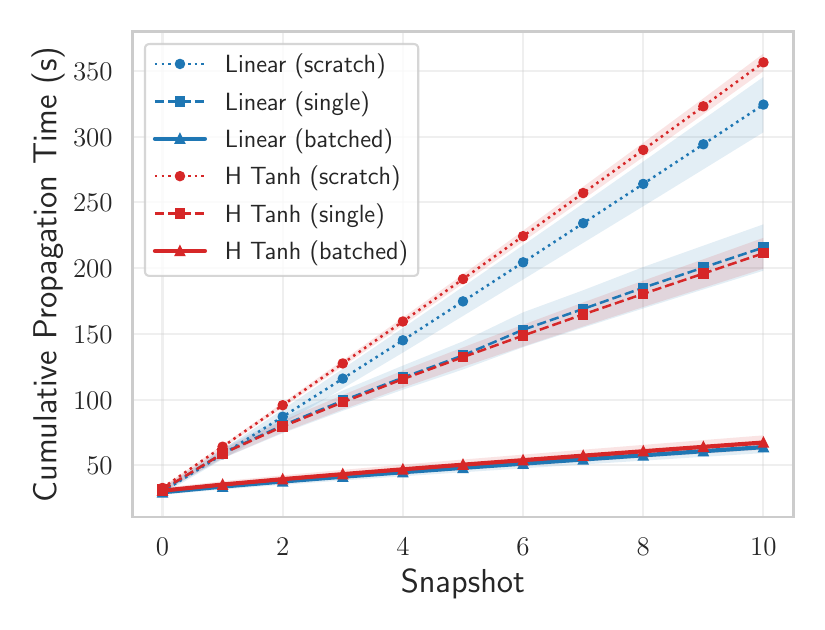}
\end{subfigure}\hfill
\begin{subfigure}[t]{0.33\textwidth}
\includegraphics[width=\linewidth]{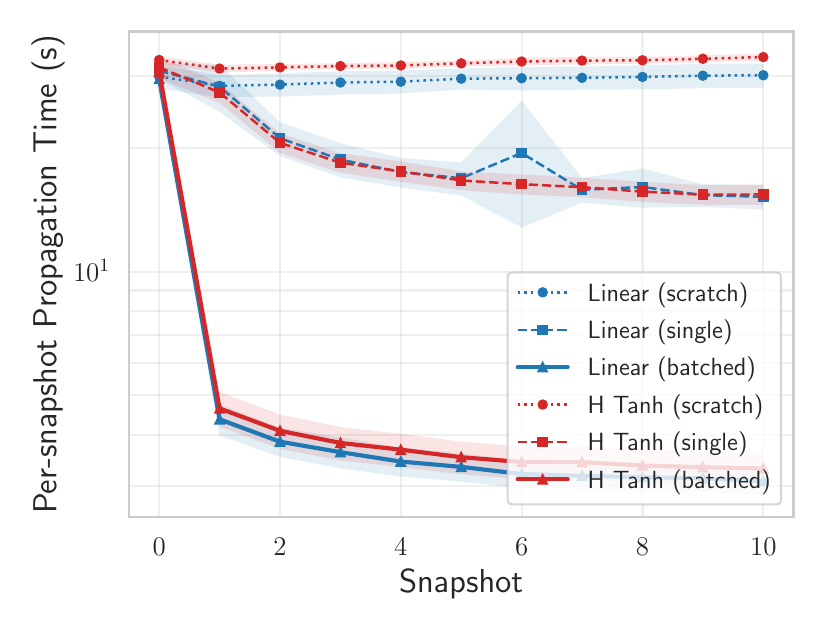}
\end{subfigure}\hfill
\caption{Results on the dynamic \texttt{SBM-500K} dataset. The dynamic methods remain efficient even with both edge insertions and deletions. After the first snapshot, their per-snapshot cost stays low and stable while preserving the accuracy behavior of the linear and hard-tanh models.}
\label{fig:sbm500k-comparison}
\end{figure*}

%% file: tables/exp_acc.tex
\begin{tabular}{lccccccccc}
\toprule
Method & \textsc{Corn.} & \textsc{Texas} & \textsc{Wisc.} & \textsc{Squi.} & \textsc{Actor} & \textsc{Cham.} & \textsc{Cora} & \textsc{Cite.} & \textsc{Pubm.} \\
\midrule
GCN & 42.2{\scriptsize\,$\pm$\,1.9} & 60.5{\scriptsize\,$\pm$\,1.6} & 54.4{\scriptsize\,$\pm$\,1.6} & 30.1{\scriptsize\,$\pm$\,0.3} & 28.8{\scriptsize\,$\pm$\,0.2} & 42.9{\scriptsize\,$\pm$\,0.8} & 82.6{\scriptsize\,$\pm$\,0.2} & 71.4{\scriptsize\,$\pm$\,0.5} & 79.0{\scriptsize\,$\pm$\,0.2} \\
GAT & 50.6{\scriptsize\,$\pm$\,2.0} & 61.2{\scriptsize\,$\pm$\,2.0} & 54.7{\scriptsize\,$\pm$\,1.6} & 30.8{\scriptsize\,$\pm$\,0.5} & 29.9{\scriptsize\,$\pm$\,0.4} & 46.0{\scriptsize\,$\pm$\,0.6} & 83.0{\scriptsize\,$\pm$\,0.3} & \underline{72.4{\scriptsize\,$\pm$\,0.3}} & 77.3{\scriptsize\,$\pm$\,0.1} \\
Geom-GCN & 63.1{\scriptsize\,$\pm$\,1.9} & 68.2{\scriptsize\,$\pm$\,1.1} & 65.5{\scriptsize\,$\pm$\,1.1} & 37.4{\scriptsize\,$\pm$\,0.3} & 31.8{\scriptsize\,$\pm$\,0.3} & 59.9{\scriptsize\,$\pm$\,0.9} & 66.0{\scriptsize\,$\pm$\,0.3} & 65.1{\scriptsize\,$\pm$\,0.3} & 76.6{\scriptsize\,$\pm$\,0.2} \\
APPNP & 69.1{\scriptsize\,$\pm$\,1.7} & 76.1{\scriptsize\,$\pm$\,1.4} & 80.3{\scriptsize\,$\pm$\,1.5} & 34.3{\scriptsize\,$\pm$\,0.5} & 35.5{\scriptsize\,$\pm$\,0.3} & 51.2{\scriptsize\,$\pm$\,0.7} & \underline{83.8{\scriptsize\,$\pm$\,0.2}} & 71.6{\scriptsize\,$\pm$\,0.2} & 79.7{\scriptsize\,$\pm$\,0.2} \\
ACM-GCN & \underline{77.8{\scriptsize\,$\pm$\,1.0}} & \underline{85.6{\scriptsize\,$\pm$\,1.2}} & \underline{85.9{\scriptsize\,$\pm$\,1.1}} & 54.1{\scriptsize\,$\pm$\,0.4} & 36.3{\scriptsize\,$\pm$\,0.4} & 67.3{\scriptsize\,$\pm$\,0.6} & 82.0{\scriptsize\,$\pm$\,0.2} & 69.9{\scriptsize\,$\pm$\,0.4} & 79.2{\scriptsize\,$\pm$\,0.3} \\
GREAD & 73.9{\scriptsize\,$\pm$\,0.7} & 85.5{\scriptsize\,$\pm$\,1.2} & 85.1{\scriptsize\,$\pm$\,1.3} & \underline{58.2{\scriptsize\,$\pm$\,0.3}} & \underline{37.2{\scriptsize\,$\pm$\,0.3}} & \underline{70.5{\scriptsize\,$\pm$\,0.5}} & 81.6{\scriptsize\,$\pm$\,0.3} & 71.0{\scriptsize\,$\pm$\,0.2} & 78.3{\scriptsize\,$\pm$\,0.2} \\
\midrule
Linear & 66.5{\scriptsize\,$\pm$\,1.5} & 68.8{\scriptsize\,$\pm$\,1.5} & 70.8{\scriptsize\,$\pm$\,1.5} & 41.7{\scriptsize\,$\pm$\,0.4} & 33.8{\scriptsize\,$\pm$\,0.2} & 55.1{\scriptsize\,$\pm$\,0.5} & 79.8{\scriptsize\,$\pm$\,0.1} & 66.3{\scriptsize\,$\pm$\,0.0} & 80.2{\scriptsize\,$\pm$\,0.1} \\
Sc Tanh 0.5 & 69.2{\scriptsize\,$\pm$\,1.0} & 74.7{\scriptsize\,$\pm$\,1.8} & 79.8{\scriptsize\,$\pm$\,1.5} & \textcolor{green!50!black}{41.8{\scriptsize\,$\pm$\,0.5}} & 35.4{\scriptsize\,$\pm$\,0.3} & 56.1{\scriptsize\,$\pm$\,0.4} & 80.5{\scriptsize\,$\pm$\,0.0} & \textcolor{green!50!black}{70.7{\scriptsize\,$\pm$\,0.0}} & 81.6{\scriptsize\,$\pm$\,0.0} \\
Sc Tanh 8 & 70.0{\scriptsize\,$\pm$\,0.7} & 76.0{\scriptsize\,$\pm$\,2.0} & 80.4{\scriptsize\,$\pm$\,1.2} & 39.1{\scriptsize\,$\pm$\,0.5} & 35.3{\scriptsize\,$\pm$\,0.3} & 53.3{\scriptsize\,$\pm$\,0.8} & \textcolor{green!50!black}{83.3{\scriptsize\,$\pm$\,0.0}} & 69.5{\scriptsize\,$\pm$\,0.1} & \underline{\textcolor{green!50!black}{82.3{\scriptsize\,$\pm$\,0.0}}} \\
Sh Tanh -1.2 & \textcolor{green!50!black}{70.7{\scriptsize\,$\pm$\,1.1}} & \textcolor{green!50!black}{77.4{\scriptsize\,$\pm$\,1.7}} & \textcolor{green!50!black}{81.8{\scriptsize\,$\pm$\,0.9}} & 36.6{\scriptsize\,$\pm$\,0.5} & \textcolor{green!50!black}{35.6{\scriptsize\,$\pm$\,0.2}} & 52.3{\scriptsize\,$\pm$\,0.6} & 52.8{\scriptsize\,$\pm$\,0.1} & 46.1{\scriptsize\,$\pm$\,0.2} & 71.8{\scriptsize\,$\pm$\,0.0} \\
H Tanh 2.5 & 67.7{\scriptsize\,$\pm$\,1.4} & 76.2{\scriptsize\,$\pm$\,1.8} & 79.2{\scriptsize\,$\pm$\,1.6} & 41.7{\scriptsize\,$\pm$\,0.5} & 35.4{\scriptsize\,$\pm$\,0.3} & \textcolor{green!50!black}{56.2{\scriptsize\,$\pm$\,0.4}} & 81.6{\scriptsize\,$\pm$\,0.1} & 70.4{\scriptsize\,$\pm$\,0.0} & 82.3{\scriptsize\,$\pm$\,0.0} \\
\bottomrule
\end{tabular}

%% file: sections/6_conclusion.tex
\section{Conclusion}
We studied dynamic maintenance of GNN representations with non-linear propagation. We gave a residual-based push algorithm with an amortized $O(1/\epsilon)$ update bound under a degree-normalized error guarantee, and we also gave an exact dynamic algorithm for the linear case. Our work has several limitations. First, the propagation is fixed and has no learned weights, so our results do not directly apply to the most popular GNNs. We leave maintaining a model with a learned matrix that mixes the features open. Second, on large dynamic graphs we compare only our linear and non-linear models. Third, we do not theoretically analyze the batched updates used in our experiments. A natural next step is to experiment more with this non-linear propagation model, including more activation functions and learned activation parameters.

%% file: sections/appendix.tex
\section{Omitted Lemmas and Proofs of Section 2} \label{appendix:proofs:nl}

We restate the lemmas omitted from the main text and provide their proofs. We also introduce several auxiliary lemmas used in later proofs. The results are ordered by their logical dependency and their sequence in the main text.

\Cref{table:params} lists the parameters used in the paper and their roles.

\begin{table}[htbp]
\centering
\caption{Parameters used in the paper.}
\label{table:params}
\begin{tabular}{lp{0.72\linewidth}}
\toprule
Symbol & Meaning \\
\midrule
$n$ & Number of nodes. \\
$A$ & Adjacency matrix of the current graph, with a self-loop at every node. \\
$D$ & Diagonal degree matrix of the current graph. \\
$d(i)$ & Degree of node $i$. Since every node has a self-loop, $d(i) \ge 1$. \\
$\beta \in [0,1]$ & Normalization parameter in $W = D^{-\beta}AD^{\beta-1}$. \\
$W$ & Normalized weight matrix. \\
$\alpha \in (0,1)$ & Teleportation probability, the weight a node keeps on its own input at each step. \\
$s$ & Input values. \\
$f_i$, $F$ & Activation at node $i$. $F$ applies $f_i$ to coordinate $i$. \\
$K \le 1$ & Every $f_i$ is $K$-Lipschitz. \\
$T$, $z^*$ & The map $T(z) = F(\alpha s + (1-\alpha)Wz)$ and its unique fixed point. \\
$\|\cdot\|_{D,\infty}$ & Degree-scaled infinity norm, $\|v\|_{D,\infty} = \|D^{\beta-1}v\|_\infty$. \\
$B_z$ & Bound on $\|z^*\|_{D,\infty}$ on every graph. \\
$\epsilon$ & Accuracy. The algorithm keeps $|z_i - z^*_i| \le \epsilon\, d(i)^{1-\beta}$ for every $i$. \\
$z$, $y$, $r$ & Maintained state: the estimate of $z^*$, the pre-activation vector $\alpha s + (1-\alpha)Wz$, and the residual $F(y) - z$. \\
$\Psi_W$ & Potential $\sum_j d(j)^\beta |r_j| / (1-K(1-\alpha))$, used to bound the running time. \\
$k$ & Number of edge updates. \\
$\omega$, $\omega(a,b,c)$, $\gamma$ & Matrix multiplication exponents and the rebuild parameter of the exact algorithm in \Cref{appendix:proofs:exact}. \\
\bottomrule
\end{tabular}
\end{table}

\begin{lemma} \label{lem:zs-bounded-01}
    If $f_i(x) \in [-B_z,B_z]$ for all $i$ and $x$, then $\| D^{\beta-1} z^* \|_\infty \le B_z$.
\end{lemma}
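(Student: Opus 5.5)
Since $z^*$ is a fixed point of $T$, I would first write $z^* = T(z^*) = F(\alpha s + (1-\alpha)Wz^*)$. Each coordinate then reads $z^*_i = f_i(\cdot)$ for some real argument. By the hypothesis $f_i(x)\in[-B_z,B_z]$, this gives $|z^*_i|\le B_z$ for every $i$. The existence and uniqueness of $z^*$ are supplied by \Cref{lem:unique-fixed-point}, so no separate argument is needed for that.

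Next I would convert this into the degree-scaled bound. For each node $i$,
\begin{equation*}
|(D^{\beta-1}z^*)_i| = d(i)^{\beta-1}|z^*_i|.
\end{equation*}
Because every node carries a permanent self-loop, $d(i)\ge 1$. Because $\beta\le 1$, the exponent $\beta-1$ is nonpositive. Together these give $d(i)^{\beta-1}\le 1$, so $|(D^{\beta-1}z^*)_i|\le |z^*_i|\le B_z$. Taking the maximum over $i$ yields $\|D^{\beta-1}z^*\|_\infty\le B_z$. The argument uses nothing about the graph beyond $d(i)\ge1$, so it holds on every graph.

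I do not expect a real obstacle. The only point needing care is the direction of the inequality $d(i)^{\beta-1}\le 1$, which depends on both the self-loop assumption and $\beta\in[0,1]$.
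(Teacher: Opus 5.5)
Your proof is correct and follows essentially the same route as the paper. You use the fixed-point identity $z^* = F(\alpha s + (1-\alpha)Wz^*)$ to get $|z^*_i| \le B_z$, then use $d(i) \ge 1$ and $\beta \le 1$ to get $d(i)^{\beta-1} \le 1$, which gives the bound on $\|D^{\beta-1}z^*\|_\infty$.
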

\begin{proof}
    Since $f_i(x) \in [-B_z,B_z]$ for all $i$ and $x$, each coordinate of $F(x)$ also lies in $[-B_z,B_z]$. Because
    \begin{math}
        z^* = F(\alpha s + (1-\alpha)Wz^*),
    \end{math}
    it follows that $z_i^* \in [-B_z,B_z]$ for every $i$. Since $d(i) \ge 1$ and $\beta \in [0,1]$, we have $d(i)^{\beta-1} \le 1$. Hence
    \begin{equation*}
        \|D^{\beta-1}z^*\|_\infty
        =
        \max_i d(i)^{\beta-1}|z_i^*|
        \le
        \max_i |z_i^*|
        \le B_z. \qedhere
    \end{equation*}
\end{proof}

\lemDegScaledNormW*
\begin{proof}
Since $W=D^{-\beta}AD^{\beta-1}$,
\begin{align*}
    \|Wv\|_{D,\infty}
    &=
    \|D^{\beta-1}D^{-\beta}AD^{\beta-1}v\|_\infty \\
    &=
    \|D^{-1}AD^{\beta-1}v\|_\infty \\
    &\le
    \|D^{-1}A\|_\infty \|v\|_{D,\infty}.
\end{align*}
Moreover,
\begin{math}
    \|D^{-1}A\|_\infty
    =
    \max_i d(i)^{-1}\sum_j A_{ij}
    =
    1.
\end{math}
Thus $\|Wv\|_{D,\infty}\le \|v\|_{D,\infty}$.
\end{proof}

The next lemma bounds $B_z$ for activations that may be unbounded, such as ReLU.

\begin{lemma} \label{lem:zs-bounded-general}
    The fixed point $z^*$ satisfies
    \begin{equation*}
        \|D^{\beta-1}z^*\|_\infty \le \frac{\max_i |f_i(0)| + K\alpha\|s\|_\infty}{1-K(1-\alpha)}.
    \end{equation*}
\end{lemma}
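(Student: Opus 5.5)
We bound $\|D^{\beta-1}z^*\|_\infty$ by combining the fixed-point equation with \Cref{lem:deg-scaled-norm-W}. The only subtlety is that the constant terms $f_i(0)$ and $\alpha s_i$ must be measured in the scaled norm, where the factor $d(i)^{\beta-1}\le 1$ works in our favor.

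Write $M := \|D^{\beta-1}z^*\|_\infty = \|z^*\|_{D,\infty}$, which is finite since $z^*\in\R^n$. For each $i$, the fixed-point identity $z^*_i = f_i\bigl(\alpha s_i + (1-\alpha)(Wz^*)_i\bigr)$ and $K$-Lipschitzness of $f_i$ give
\begin{equation*}
|z_i^*| \le |f_i(0)| + K\bigl|\alpha s_i + (1-\alpha)(Wz^*)_i\bigr| \le |f_i(0)| + K\alpha|s_i| + K(1-\alpha)|(Wz^*)_i|.
\end{equation*}
Multiply both sides by $d(i)^{\beta-1}$. Since $d(i)\ge 1$ and $\beta\le 1$, we have $d(i)^{\beta-1}\le 1$, so the first two terms are at most $\max_j|f_j(0)| + K\alpha\|s\|_\infty$. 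For the last term, $d(i)^{\beta-1}|(Wz^*)_i| \le \|Wz^*\|_{D,\infty} \le M$ by \Cref{lem:deg-scaled-norm-W}.

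Taking the maximum over $i$ yields
\begin{equation*}
M \le \max_j|f_j(0)| + K\alpha\|s\|_\infty + K(1-\alpha)M.
\end{equation*}
Since $K\le 1$ and $\alpha>0$, we have $1-K(1-\alpha)>0$, so we may rearrange to obtain the claimed bound.

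No step is a real obstacle. The two points needing care are (i) keeping the constant terms unscaled, using $d(i)^{\beta-1}\le 1$, rather than trying to scale them, and (ii) noting that $M$ is finite, which holds trivially in finite dimension and is what justifies the rearrangement.

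For the remark in the preliminaries: for the identity and ReLU we have $f_i(0)=0$ and $K=1$, so the bound becomes $\alpha\|s\|_\infty/\alpha = \|s\|_\infty$.
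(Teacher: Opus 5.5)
Your proposal is correct and follows the paper's proof essentially step for step. Both apply the Lipschitz bound at $0$ to the fixed-point equation, multiply by $d(i)^{\beta-1}\le 1$ to leave the constant terms unscaled, use \Cref{lem:deg-scaled-norm-W} on the $Wz^*$ term, and rearrange using $K(1-\alpha)<1$. Your explicit remark that $M$ is finite is a harmless extra justification the paper leaves implicit.
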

\begin{proof}
    Fix a node $i$, and let $x_i = \alpha s_i + (1-\alpha)(Wz^*)_i$, so that $z^*_i = f_i(x_i)$. Since $f_i$ is $K$-Lipschitz,
    \begin{equation*}
        |z^*_i| \le |f_i(0)| + K|x_i| \le \max_j |f_j(0)| + K\alpha|s_i| + K(1-\alpha)|(Wz^*)_i|.
    \end{equation*}
    We multiply both sides by $d(i)^{\beta-1}$, which is at most $1$ because $d(i) \ge 1$ and $\beta \le 1$. The first term stays at most $\max_j |f_j(0)|$. The second term becomes at most $K\alpha\|s\|_\infty$. The third term becomes at most $K(1-\alpha)\|Wz^*\|_{D,\infty}$, which is at most $K(1-\alpha)\|z^*\|_{D,\infty}$ by \Cref{lem:deg-scaled-norm-W}. Taking the maximum over $i$ gives
    \begin{equation*}
        \|z^*\|_{D,\infty} \le \max_j |f_j(0)| + K\alpha\|s\|_\infty + K(1-\alpha)\|z^*\|_{D,\infty}.
    \end{equation*}
    Since $K(1-\alpha) < 1$, rearranging gives the claim.
\end{proof}

\lemUniqueFixedPoint*
\begin{proof}
For any $x,y\in\R^n$,
\begin{align*}
    \|T(x)-T(y)\|_{D,\infty}
    &=
    \|F(\alpha s+(1-\alpha)Wx)-F(\alpha s+(1-\alpha)Wy)\|_{D,\infty}.
\end{align*}
Since each $f_i$ is $K$-Lipschitz, the pointwise map $F$ is also $K$-Lipschitz under $\|\cdot\|_{D,\infty}$. Therefore,
\begin{align*}
    \|T(x)-T(y)\|_{D,\infty}
    &\le
    K(1-\alpha)\|W(x-y)\|_{D,\infty}.
\end{align*}
By \Cref{lem:deg-scaled-norm-W},
\begin{math}
    \|T(x)-T(y)\|_{D,\infty}
    \le
    K(1-\alpha)\|x-y\|_{D,\infty}.
\end{math}
Since $\alpha\in(0,1)$ and $K\le 1$, we have $K(1-\alpha)<1$. Hence $T$ is a contraction under the degree-scaled infinity norm. The Banach fixed-point theorem \citep{banach1922operations} implies that the fixed point is unique and that the iteration $z(t+1)=T(z(t))$ converges to it from any initial point.
\end{proof}

\lemRToZDiff*
\begin{proof}
For any $x,y\in\R^n$,
\begin{align*}
\|F(x)-F(y)\|_{D,\infty}
&=
\max_i \frac{|f_i(x_i)-f_i(y_i)|}{d(i)^{1-\beta}} \\
&\le
K\|x-y\|_{D,\infty},
\end{align*}
because each $f_i$ is $K$-Lipschitz. Together with \Cref{lem:deg-scaled-norm-W}, this gives
\begin{math}
    \|T(x)-T(y)\|_{D,\infty}
    \le
    K(1-\alpha)\|x-y\|_{D,\infty}.
\end{math}
By \Cref*{inv:1,inv:2}, we have
\begin{math}
    z+r=F(y)=T(z).
\end{math}
Subtracting this identity from $z^*=T(z^*)$ gives
\begin{math}
    z-z^*=T(z)-T(z^*)-r.
\end{math}
Thus
\begin{align*}
\|z-z^*\|_{D,\infty}
&\le
\|T(z)-T(z^*)\|_{D,\infty}
+
\|r\|_{D,\infty} \\
&\le
K(1-\alpha)\|z-z^*\|_{D,\infty}
+
\|r\|_{D,\infty}.
\end{align*}
Rearranging,
\begin{math}
    \|z-z^*\|_{D,\infty}
    \le
    \frac{\|r\|_{D,\infty}}{1-K(1-\alpha)}.
\end{math}
Therefore, for each $i$,
\begin{equation*}
|z_i-z_i^*|
\le
\frac{d(i)^{1-\beta}}{1-K(1-\alpha)}
\|D^{\beta-1}r\|_\infty. \qedhere
\end{equation*}
\end{proof}

\lemPushInvZ*
\begin{proof}
Only coordinate $i$ of $z$ changes, and $z'_i-z_i=r_i$. Hence, for every $j$,
\begin{align*}
y'_j
&=
y_j+(1-\alpha)w_{ji}r_i \\
&=
\alpha s_j+(1-\alpha)\sum_k w_{jk}z'_k.
\end{align*}
Thus \Cref*{inv:1} is preserved. By the definition of $r'$,
\begin{math}
    z'_j+r'_j=f_j(y'_j)
\end{math}
for every $j$, so \Cref*{inv:2} is also preserved.

For $j\neq i$,
\begin{equation*}
    |r'_j|
    = |f_j(y'_j)-z_j|
    \le |f_j(y'_j)-f_j(y_j)| + |f_j(y_j)-z_j|
    = K(1-\alpha)w_{ji}|r_i| + |r_j|.
\end{equation*}
For $i$, using $z_i+r_i=f_i(y_i)$,
\begin{equation*}
    |r'_i|
    = |f_i(y'_i)-z_i-r_i|
    = |f_i(y'_i)-f_i(y_i)|
    \le K|y'_i-y_i|
    = K(1-\alpha)w_{ii}|r_i|.
\end{equation*}
This proves the claim.
\end{proof}

\begin{restatable}{lemma}{lemUpdateInv} \label{lem:update-inv}
    Suppose $\textsc{UpdateEdge}(G,u,v,type,z,y,r)$ is called on a state $(z,y,r)$ satisfying \Cref*{inv:1,inv:2}, and let $(z',y',r')$ be the returned state. Then $(z',y',r')$ satisfies \Cref*{inv:1,inv:2}.
\end{restatable}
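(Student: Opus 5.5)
The plan is to verify \Cref*{inv:1,inv:2} coordinate by coordinate, splitting the nodes into three groups: the endpoints $u,v$; their neighbors $w\notin\{u,v\}$; and all other nodes. Denote by $d$ the old degrees and by $d'$ the new ones, so that $d'(u)=d(u)+\sigma$, $d'(v)=d(v)+\sigma$, and $d'(w)=d(w)$ for every other $w$. The key rewriting is
\begin{equation*}
y_i=\alpha s_i+(1-\alpha)\,d(i)^{-\beta}\sum_{k\in N(i)} \frac{z_k}{d(k)^{1-\beta}},
\end{equation*}
since $w_{ik}=d(i)^{-\beta}A_{ik}d(k)^{\beta-1}$. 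The quantity $x_k=z_k/d(k)^{1-\beta}$ is thus the only thing a neighbor contributes.

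First, the scaling trick keeps $x_k$ unchanged for every $k$. For $k\in\{u,v\}$ we have $z'_k/d'(k)^{1-\beta}=z_k/d(k)^{1-\beta}$, and for every other $k$ neither $z_k$ nor $d(k)$ changes. Now take a node $w\notin\{u,v\}$. Its degree, its neighbor set and all the $x_k$ are unchanged; the edge $(u,v)$ does not touch $w$. So $y_w$ still satisfies \Cref*{inv:1} with $y'_w=y_w$. Because $z'_w=z_w$, setting $r'_w=r_w$ also preserves \Cref*{inv:2}.

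Next, handle the endpoint $i=u$ (the case $i=v$ is symmetric). Invert \Cref*{inv:1} at the old state to get
\begin{equation*}
S_i=\frac{d_i^\beta}{1-\alpha}(y_i-\alpha s_i)=\sum_{k\in N(i)} x_k.
\end{equation*}
The new neighbor set is $N(i)\cup\{j\}$ for an insertion and $N(i)\setminus\{j\}$ for a deletion, so the new sum is $S_i+\sigma x_j$. Here we use that $x_j$ is unchanged by the first step, and that $x_j$ was computed from the old $z_j$ and $d_j$ before the second loop modified them.

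Consequently, the new value
\begin{equation*}
y'_i=\alpha s_i+(1-\alpha)(d'_i)^{-\beta}(S_i+\sigma x_j)
\end{equation*}
equals $\alpha s_i+(1-\alpha)\sum_k w'_{ik}z'_k$, which is \Cref*{inv:1} at $i$. The line $r'_i=f_i(y'_i)-z'_i$ then gives \Cref*{inv:2} at $i$ by definition.

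The main obstacle I expect is bookkeeping rather than mathematics. The intermediate values $x_i$ and $S_i$ must be computed from the pre-update state, and only then can the loop overwrite $z_u,z_v$. If this order were violated, the update at $v$ would read an already-rescaled $z_u$ together with a new degree. Even so, the ratio $x_u$ would be the same, so the argument is robust.

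Two edge cases need a brief check. The permanent self-loop is already inside $S_i$ and is untouched, since only the edge $(u,v)$ with $u\neq v$ changes. For a deletion, $d'_i\ge 1$ still holds thanks to the self-loop, so every quantity in the algorithm remains well-defined.
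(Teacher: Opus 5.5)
Your proof is correct and follows essentially the same route as the paper's: you read off $S_i=\sum_{k\in N(i)} z_k/d(k)^{1-\beta}$ from Invariant 1, note that the rescaling keeps every $x_k=z_k/d(k)^{1-\beta}$ fixed so that non-endpoints need no change, and check that the new endpoint sum is $S_i+\sigma x_j$, after which Invariant 2 holds by how $r'$ is defined. Your extra remarks on the self-loop and on degrees staying positive after a deletion are harmless additions that the paper leaves implicit.
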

\begin{proof}
Let $d'_u=d_u+\sigma$ and $d'_v=d_v+\sigma$ be the new degrees. By \Cref*{inv:1}, before the update,
\begin{math}
S_u
=
\frac{d_u^\beta}{1-\alpha}(y_u-\alpha s_u)
=
\sum_{j\in N(u)} \frac{z_j}{d(j)^{1-\beta}},
\end{math}
and the same identity holds for $S_v$.

The rescaling step preserves the degree-scaled values of the endpoints:
\begin{equation*}
    \frac{z'_u}{(d'_u)^{1-\beta}}
    =
    \frac{z_u}{d_u^{1-\beta}}
    =
    x_u,
    \qquad
    \frac{z'_v}{(d'_v)^{1-\beta}}
    =
    \frac{z_v}{d_v^{1-\beta}}
    =
    x_v.
\end{equation*}
Thus the contributions of $u$ and $v$ to each non-endpoint neighbor do not change. Hence, for every $w\notin\{u,v\}$, \Cref*{inv:1} continues to hold with $y'_w=y_w$. Since $z'_w=z_w$ and $r'_w=r_w$, \Cref*{inv:2} also continues to hold at such nodes.

It remains to check the endpoints. After the update, the scaled neighbor sum at $u$ is $S_u+\sigma x_v$. Therefore
\begin{math}
    y'_u=\alpha s_u+(1-\alpha)(S_u+\sigma x_v)/(d_u+\sigma)^\beta
\end{math}
is exactly the value required by \Cref*{inv:1} at $u$. The same argument applies to $v$. Finally, setting
\begin{math}
    r'_u=f_u(y'_u)-z'_u
\end{math}
and
\begin{math}
    r'_v=f_v(y'_v)-z'_v
\end{math}
gives \Cref*{inv:2} at the two endpoints. Therefore the returned state satisfies both invariants.
\end{proof}

\begin{restatable}{lemma}{lemWeightedColumnSum} \label{lem:weighted-column-sum}
For every node $i$,
\begin{math}
\sum_j d(j)^\beta w_{ji} = d(i)^\beta.
\end{math}
\end{restatable}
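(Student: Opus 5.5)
We compute the weighted column sum directly from the definition in \Cref{eq:def_W}, $W = D^{-\beta}AD^{\beta-1}$. Entrywise, $w_{ji} = d(j)^{-\beta} A_{ji}\, d(i)^{\beta-1}$. Multiplying by $d(j)^\beta$ cancels the left normalization:
\begin{equation*}
\sum_j d(j)^\beta w_{ji} = \sum_j A_{ji}\, d(i)^{\beta-1} = d(i)^{\beta-1}\sum_j A_{ji}.
\end{equation*}

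The remaining step is to identify $\sum_j A_{ji}$ with $d(i)$. Since the graph is undirected, $A$ is symmetric, so $\sum_j A_{ji} = \sum_j A_{ij}$. This row sum is the degree $d(i)$, with the self-loop counted once, consistent with the convention used in the proof of \Cref{lem:deg-scaled-norm-W}, where $\max_i d(i)^{-1}\sum_j A_{ij} = 1$. Hence the sum equals $d(i)^{\beta-1}\cdot d(i) = d(i)^\beta$.

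Equivalently, in matrix form, $\mathbf{1}^T D^\beta W = \mathbf{1}^T A D^{\beta-1} = (A\mathbf{1})^T D^{\beta-1} = \mathbf{1}^T D^{\beta}$.

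There is no real obstacle. The only point requiring care is that the degree convention (how self-loops enter $D$) matches the row sums of $A$, which the paper already uses implicitly in \Cref{lem:deg-scaled-norm-W}. Symmetry of $A$ then lets us pass from column sums to row sums.
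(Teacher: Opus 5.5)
Your proposal is correct and follows essentially the same route as the paper. You cancel $d(j)^\beta$ against the left normalization to get $A_{ji}\,d(i)^{\beta-1}$, then use $\sum_j A_{ji} = d(i)$. Your explicit remarks on the symmetry of $A$ and the self-loop convention just spell out what the paper uses implicitly.
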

\begin{proof}
By the definition of $W$,
\begin{math}
d(j)^\beta w_{ji}
=
d(j)^\beta \cdot \frac{A_{ji}}{d(j)^\beta d(i)^{1-\beta}}
=
\frac{A_{ji}}{d(i)^{1-\beta}}.
\end{math}
Summing over $j$ gives
\begin{math}
\sum_j d(j)^\beta w_{ji}
=
\frac{\sum_j A_{ji}}{d(i)^{1-\beta}}
=
\frac{d(i)}{d(i)^{1-\beta}}
=
d(i)^\beta.
\end{math}
\end{proof}

\lemPushPotential*
\begin{proof}
First, note that $\left(1-K(1-\alpha)\right)\Psi_W(r') = \sum_{j=1}^n d(j)^\beta |r'_j|$. Applying \Cref{lem:push-inv-z} to $r'$ gives
\begin{math}
\sum_{j=1}^n d(j)^\beta |r'_j|
\le
\sum_{j\neq i} d(j)^\beta\left(|r_j|+K(1-\alpha) w_{ji}|r_i|\right) +
d(i)^\beta K(1-\alpha) w_{ii}|r_i|.
\end{math}
The right-hand side is equal to
\begin{math}
\sum_{j\neq i} d(j)^\beta |r_j| + K(1-\alpha)|r_i| \sum_{j=1}^n d(j)^\beta w_{ji}.
\end{math}
By \Cref{lem:weighted-column-sum}, and after adding and subtracting $d(i)^\beta |r_i|$, we obtain
\begin{align*}
\left(1-K(1-\alpha)\right)\Psi_W(r')
&\le
\left(1-K(1-\alpha)\right)\Psi_W(r)
-
d(i)^\beta |r_i|
+
K(1-\alpha) d(i)^\beta |r_i| \\
&=
\left(1-K(1-\alpha)\right)\Psi_W(r)
-
\left(1-K(1-\alpha)\right)d(i)^\beta |r_i|.
\end{align*}
Dividing by $1-K(1-\alpha)$ gives the claim.
\end{proof}

\lemCleanupWork*
\begin{proof}
First note that $T_{\text{clean}} = \sum_{\text{pushes at } i} d(i)$. By \Cref{lem:push-potential}, a push at node $i$ decreases $\Psi_W$ by at least $d(i)^\beta |r_i|$. Since $\textsc{Cleanup}$ only pushes when $|r_i|>(1 - K(1 - \alpha))\epsilon d(i)^{1-\beta}$, the decrease from this push is at least
\begin{math}
d(i)^\beta |r_i|
>
(1 - K(1 - \alpha))\epsilon d(i).
\end{math}
Summing this bound over all pushes gives the result.
\end{proof}

\begin{restatable}{lemma}{lemDegreePerturbation}
\label{lem:degree-perturbation}
Let $d>0$, let $\delta\in \R$ satisfy $d+\delta>0$, and let $\beta\in[0,1]$. Then
\begin{math}
d^{1-\beta} |(d+\delta)^\beta-d^\beta|
\le
|\delta|.
\end{math}
Equivalently,
\begin{math}
|(d+\delta)^\beta-d^\beta|
\le
\frac{|\delta|}{d^{1-\beta}}.
\end{math}
In particular, when $\delta=1$,
\begin{math}
d^{1-\beta}\left((d+1)^\beta-d^\beta\right)\le 1.
\end{math}
\end{restatable}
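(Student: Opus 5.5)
The inequality $d^{1-\beta}\,|(d+\delta)^\beta-d^\beta|\le|\delta|$ follows from concavity of $x\mapsto x^\beta$ on $(0,\infty)$ for $\beta\in[0,1]$. First I will dispose of the endpoint cases. For $\beta=0$ the left side is $d\cdot|1-1|=0$. For $\beta=1$ it is $|d+\delta-d|=|\delta|$. Both are immediate. For $\beta\in(0,1)$ I will apply the mean value theorem to $g(x)=x^\beta$ on the interval between $d$ and $d+\delta$, which lies in $(0,\infty)$ because $d>0$ and $d+\delta>0$. This gives
\[
(d+\delta)^\beta-d^\beta=\beta\,\xi^{\beta-1}\delta
\]
for some $\xi$ strictly between $d$ and $d+\delta$.

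The analysis then splits on the sign of $\delta$.

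\textbf{Case $\delta\ge 0$.} Here $\xi\ge d$, and since $\beta-1\le 0$ we have $\xi^{\beta-1}\le d^{\beta-1}$. Therefore
\[
d^{1-\beta}\,|(d+\delta)^\beta-d^\beta|\le \beta\, d^{1-\beta}d^{\beta-1}|\delta|=\beta|\delta|\le|\delta|.
\]
This case covers the stated special case $\delta=1$.

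\textbf{Case $\delta<0$.} This is the main obstacle: now $\xi<d$, so $\xi^{\beta-1}>d^{\beta-1}$ and the mean value bound loses control. I would abandon the mean value theorem here and use an elementary inequality instead. Write $t=(d+\delta)/d\in(0,1)$. The claim becomes
\[
d\,(1-t^\beta)\le d\,(1-t),
\qquad\text{i.e.}\qquad t\le t^\beta .
\]
This holds for $t\in(0,1)$ and $\beta\in[0,1]$.

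This scaling argument in fact handles both signs uniformly. Setting $t=(d+\delta)/d>0$, the claim is equivalent to
\[
|t^\beta-1|\le|t-1|\quad\text{for all } t>0.
\]
For $t\ge1$ we have $1\le t^\beta\le t$, and for $t<1$ we have $t\le t^\beta\le 1$. So I would present this uniform argument as the main proof.

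The equivalent form stated in the lemma is obtained by dividing by $d^{1-\beta}>0$.
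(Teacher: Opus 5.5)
Your proof is correct and is essentially the paper's argument. The paper substitutes $x=\delta/d$ and proves $|(1+x)^\beta-1|\le|x|$, using Bernoulli's inequality for $x\ge 0$ and $y^\beta\ge y$ on $(0,1]$ for $x\le 0$, which is exactly your uniform reduction with $t=1+x$; the mean value detour is unnecessary (and would need $\delta=0$ handled separately) but harmless.
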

\begin{proof}
Let $x:=\delta/d$. Then $x>-1$, and
\begin{equation*}
d^{1-\beta}|(d+\delta)^\beta-d^\beta|
=
d^{1-\beta}d^\beta |(1+x)^\beta-1|
=
d |(1+x)^\beta-1|.
\end{equation*}
It is enough to show that
\begin{math}
    |(1+x)^\beta-1|\le |x|
\end{math}
for all $x>-1$.

If $x\ge 0$, Bernoulli's inequality gives $(1+x)^\beta\le 1+x$. Therefore,
\begin{math}
    0\le (1+x)^\beta-1\le x=|x|.
\end{math}
If $-1<x\le 0$, set $y:=1+x\in(0,1]$. Since $\beta\in[0,1]$, we have $y^\beta\ge y$. Hence
\begin{math}
    0\le 1-y^\beta\le 1-y=-x=|x|.
\end{math}
Equivalently,
\begin{math}
    |(1+x)^\beta-1|\le |x|.
\end{math}
Thus
\begin{math}
    d |(1+x)^\beta-1| \le d|x|=|\delta|,
\end{math}
which proves the claim.
\end{proof}

\begin{lemma} \label{lem:z-bounded}
Suppose $z$ satisfies $\|z - z^*\|_{D,\infty} \le \epsilon$. Then $\|z\|_{D,\infty} \le B_z + \epsilon$.
\end{lemma}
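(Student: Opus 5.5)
The claim is a direct application of the triangle inequality for the degree-scaled infinity norm $\|\cdot\|_{D,\infty}$. First, this is a genuine norm: it is the $\ell_\infty$ norm composed with the invertible diagonal map $D^{\beta-1}$, whose entries $d(i)^{\beta-1}$ are positive since $d(i)\ge 1$. In particular it satisfies the triangle inequality, because $D^{\beta-1}$ is linear and $\|\cdot\|_\infty$ is subadditive.

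We write $z = z^* + (z - z^*)$ and obtain
\begin{equation*}
\|z\|_{D,\infty} \le \|z^*\|_{D,\infty} + \|z - z^*\|_{D,\infty}.
\end{equation*}
The first term is $\|D^{\beta-1}z^*\|_\infty$, which is at most $B_z$ by the standing assumption on the fixed point of $T$ for the current graph. That assumption is stated to hold on any graph, so it applies to whatever $D$ and $z^*$ are in force; it can also be supplied by \Cref{lem:zs-bounded-01} or \Cref{lem:zs-bounded-general}. The second term is at most $\epsilon$ by hypothesis. Adding the two bounds gives $\|z\|_{D,\infty} \le B_z + \epsilon$.

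There is no real obstacle here. The only point needing care is that the norm, $z^*$, and $B_z$ must all be taken with respect to the same current degree matrix $D$. When the lemma is later applied to the post-\textsc{Cleanup} state, the hypothesis $\|z - z^*\|_{D,\infty} \le \epsilon$ comes from \Cref{lem:r-small-to-z-diff}, since its coordinatewise bound $|z_i - z^*_i| \le \epsilon\, d(i)^{1-\beta}$ is exactly this norm bound. The consequence used downstream is the coordinatewise form $|z_i| \le (B_z+\epsilon)\,d(i)^{1-\beta}$, obtained by multiplying back by $d(i)^{1-\beta}$.
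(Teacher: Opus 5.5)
Your proof is correct and matches the paper's: the triangle inequality in $\|\cdot\|_{D,\infty}$, combined with the standing assumption $\|z^*\|_{D,\infty}\le B_z$ and the hypothesis $\|z-z^*\|_{D,\infty}\le\epsilon$. Your extra remarks are accurate, namely that $\|\cdot\|_{D,\infty}$ is a genuine norm and that all quantities refer to the same current degree matrix, but the paper's proof does not need them.
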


\begin{proof}
By assumption, $\|z^*\|_{D,\infty} \le B_z$. Therefore, $\|z\|_{D,\infty} \le \|z^*\|_{D,\infty} + \|z-z^*\|_{D,\infty} \le B_z + \epsilon$.
\end{proof}

\lemSingleEdgeIncrement*

\begin{proof}
Let $\sigma\in\{+1,-1\}$, where $\sigma=+1$ denotes an insertion and $\sigma=-1$ denotes a deletion. Write
\begin{math}
    a=d_{t-1}(u)
\end{math}
and
\begin{math}
    b=d_{t-1}(v).
\end{math}
Then $d_t(u)=a+\sigma$ and $d_t(v)=b+\sigma$. Since self-loops are permanent and only the non-self edge $(u,v)$ is updated, both $a+\sigma$ and $b+\sigma$ are positive. Define
\begin{math}
    x_i:=d_{t-1}(i)^{\beta-1}z_{t-1,i}
\end{math}
and
\begin{math}
    X:=\|z_{t-1}\|_{D_{t-1},\infty}=\|x\|_\infty.
\end{math}
Since \textsc{Cleanup} has been applied before event $t$, \Cref{lem:z-bounded} gives $X\le B_z+\epsilon$. By \Cref{alg:update-edge}, the vector $r'_t-r_{t-1}$ is supported only on $u$ and $v$. In the procedure,
\begin{math}
    S_v=\sum_{j\in N_{t-1}(v)} x_j
\end{math}
and
\begin{math}
    S_u=\sum_{j\in N_{t-1}(u)} x_j.
\end{math}
Thus $|S_v|\le bX$ and $|S_u|\le aX$. We prove the bound for $v$; the proof for $u$ is the same.

Before the update,
\begin{math}
    y_{t-1,v}=\alpha s_v+(1-\alpha)b^{-\beta}S_v.
\end{math}
After the update,
\begin{math}
    y'_v=\alpha s_v+(1-\alpha)(b+\sigma)^{-\beta}(S_v+\sigma x_u).
\end{math}
Therefore,
\begin{equation*}
y'_v-y_{t-1,v}
=
(1-\alpha)
\left[
\left(\frac{1}{(b+\sigma)^\beta}-\frac{1}{b^\beta}\right)S_v
+
\frac{\sigma x_u}{(b+\sigma)^\beta}
\right].
\end{equation*}

By \Cref*{inv:2},
\begin{math}
r'_{t,v}-r_{t-1,v}
=
\left(f_v(y'_v)-z'_v\right)
-
\left(f_v(y_{t-1,v})-z_{t-1,v}\right).
\end{math}
Using the $K$-Lipschitz property of $f_v$,
\begin{math}
|r'_{t,v}-r_{t-1,v}|
\le
|z'_v-z_{t-1,v}|+K|y'_v-y_{t-1,v}|.
\end{math}
Multiplying by $(b+\sigma)^\beta$ gives
\begin{align*}
(b+\sigma)^\beta |r'_{t,v}-r_{t-1,v}|
&\le
(b+\sigma)^\beta |z'_v-z_{t-1,v}| \\
&\quad+
K(1-\alpha)(b+\sigma)^\beta
\left|
\frac{1}{(b+\sigma)^\beta}
-
\frac{1}{b^\beta}
\right|
|S_v| \\
&\quad+
K(1-\alpha)|x_u|.
\end{align*}

We bound the three terms on the right. Since
\begin{math}
    z'_v=(b+\sigma)^{1-\beta}x_v
\end{math}
and
\begin{math}
    z_{t-1,v}=b^{1-\beta}x_v,
\end{math}
the first term is
\begin{equation*}
(b+\sigma)^\beta
\left|
(b+\sigma)^{1-\beta}
-
b^{1-\beta}
\right|
|x_v|.
\end{equation*}
Applying \Cref{lem:degree-perturbation} with $d=b+\sigma$, $\delta=-\sigma$, and exponent $1-\beta$ gives
\begin{math}
    (b+\sigma)^\beta
    |(b+\sigma)^{1-\beta}-b^{1-\beta}|
    \le 1.
\end{math}
Thus the first term is at most $X$. For the second term,

\begin{align*}
(b+\sigma)^\beta\left|\frac{1}{(b+\sigma)^\beta}-\frac{1}{b^\beta}\right||S_v|
\le\frac{|(b+\sigma)^\beta-b^\beta|}{b^\beta}bX
= b^{1-\beta}|(b+\sigma)^\beta-b^\beta| X.
\end{align*}
Applying \Cref{lem:degree-perturbation} with $d=b$, $\delta=\sigma$, and exponent $\beta$ gives
\begin{math}
    b^{1-\beta}|(b+\sigma)^\beta-b^\beta|\le 1.
\end{math}
Thus the second term is at most $K(1-\alpha)X$. 
The third term is also at most $K(1-\alpha)X$, since $|x_u|\le X$. Combining these bounds,
\begin{equation*}
d_t(v)^\beta |r'_{t,v}-r_{t-1,v}|
=
(b+\sigma)^\beta |r'_{t,v}-r_{t-1,v}|
\le
\left(1+2K(1-\alpha)\right)X.
\end{equation*}

The same bound holds for $u$. Hence
\begin{align*}
\Psi_{W_t}(r'_t-r_{t-1})
&=
\frac{
d_t(u)^\beta |r'_{t,u}-r_{t-1,u}|
+
d_t(v)^\beta |r'_{t,v}-r_{t-1,v}|
}
{1-K(1-\alpha)} \\
&\le
\frac{2\left(1+2K(1-\alpha)\right)}{1-K(1-\alpha)}X.
\end{align*}
Using $X\le B_z+\epsilon$ gives the claim.
\end{proof}

\lemSingleEdgeIncrementSec*
\begin{proof}
By the definition of $\Psi_W$,
\begin{align*}
\Psi_{W_t}(r_{t-1})-\Psi_{W_{t-1}}(r_{t-1})
&=
\frac{1}{1-K(1-\alpha)}
\sum_i
\left(d_t(i)^\beta-d_{t-1}(i)^\beta\right)
|r_{t-1}(i)|.
\end{align*}
Only the endpoints $u$ and $v$ can change degree. If the event is a deletion, then $d_t(i)\le d_{t-1}(i)$ for $i\in\{u,v\}$, so the above quantity is non-positive. It remains to consider an insertion. For $i\in\{u,v\}$, we have $d_t(i)=d_{t-1}(i)+1$. Since cleanup has terminated before the update,
\begin{math}
|r_{t-1}(i)|
\le
\left(1-K(1-\alpha)\right)\epsilon d_{t-1}(i)^{1-\beta}.
\end{math}
Therefore,
\begin{align*}
\Psi_{W_t}(r_{t-1})-\Psi_{W_{t-1}}(r_{t-1})
&\le
\epsilon
\sum_{i\in\{u,v\}}
d_{t-1}(i)^{1-\beta}
\left(
(d_{t-1}(i)+1)^\beta
-
d_{t-1}(i)^\beta
\right).
\end{align*}

By \Cref{lem:degree-perturbation}, applied with $d=d_{t-1}(i)$ and $\delta=1$,
\begin{equation*}
d_{t-1}(i)^{1-\beta}
\left(
(d_{t-1}(i)+1)^\beta
-
d_{t-1}(i)^\beta
\right)
\le 1.
\end{equation*}
Thus, in all cases,
\begin{math}
\Psi_{W_t}(r_{t-1})-\Psi_{W_{t-1}}(r_{t-1})
\le 2\epsilon.
\end{math}
\end{proof}

\thmUpdateCostZ*
\begin{proof}
By \Cref{lem:cleanup-work} applied to the new graph $G_t$,
\begin{align*}
T_t
\le
\frac{\Psi_{W_t}(r'_t)-\Psi_{W_t}(r_t)}{\left(1-K(1-\alpha)\right)\epsilon}.
\end{align*}
The triangle inequality gives
\begin{math}
\Psi_{W_t}(r'_t)
\le
\Psi_{W_t}(r_{t-1})
+
\Psi_{W_t}(r'_t-r_{t-1}).
\end{math}

Combining this with
\begin{math}
\Psi_{W_t}(r_{t-1})
=
\Psi_{W_{t-1}}(r_{t-1})
+
\left(\Psi_{W_t}(r_{t-1})-\Psi_{W_{t-1}}(r_{t-1})\right),
\end{math}
we obtain
\begin{align*}
T_t
\le
\frac{\Psi_{W_{t-1}}(r_{t-1})-\Psi_{W_t}(r_t)}{\left(1-K(1-\alpha)\right)\epsilon}
+
\frac{\Psi_{W_t}(r'_t-r_{t-1})}{\left(1-K(1-\alpha)\right)\epsilon}
+
\frac{\Psi_{W_t}(r_{t-1})-\Psi_{W_{t-1}}(r_{t-1})}{\left(1-K(1-\alpha)\right)\epsilon}.
\end{align*}
The first term is kept in the final bound. The second term is bounded by \Cref{lem:single-edge-increment}, and the third term is bounded by \Cref{lem:single-edge-increment-sec}. Combining these bounds gives the claim.
\end{proof}

\thmTotalCostZ*

\begin{proof}
The total running time is the sum of the initialization cost and the cost of all edge updates. By \Cref{lem:cleanup-work}, the initialization cost is at most
\begin{equation*}
    \frac{\Psi_{W_0}(r^{\mathrm{start}}_0)-\Psi_{W_0}(r_0)}
    {\left(1-K(1-\alpha)\right)\epsilon}.
\end{equation*}
For the update costs, summing the bound of \Cref{thm:update-cost-z} over $t=1,\dots,k$ gives
\begin{align*}
\sum_{t=1}^k T_t
&\le
\frac{2k}{1-K(1-\alpha)}
+
\frac{2\left(1+2K(1-\alpha)\right)}
{\left(1-K(1-\alpha)\right)^2\epsilon}
\sum_{t=1}^k
\|D_{t-1}^{\beta-1}z_{t-1}\|_\infty \\
&\qquad
+
\frac{1}{\left(1-K(1-\alpha)\right)\epsilon}
\sum_{t=1}^k
\left(
\Psi_{W_{t-1}}(r_{t-1})
-
\Psi_{W_t}(r_t)
\right).
\end{align*}
The potential terms telescope:
\begin{align*}
\sum_{t=1}^k
\left(
\Psi_{W_{t-1}}(r_{t-1})
-
\Psi_{W_t}(r_t)
\right)
=
\Psi_{W_0}(r_0)-\Psi_{W_k}(r_k)
\le
\Psi_{W_0}(r_0).
\end{align*}

Therefore, including initialization,
\begin{align*}
T_{\mathrm{total}}
&\le
\frac{\Psi_{W_0}(r^{\mathrm{start}}_0)-\Psi_{W_0}(r_0)}
{\left(1-K(1-\alpha)\right)\epsilon}
+
\frac{\Psi_{W_0}(r_0)}
{\left(1-K(1-\alpha)\right)\epsilon}
+
\frac{2k}{1-K(1-\alpha)} \\
&\quad+
\frac{2\left(1+2K(1-\alpha)\right)}
{\left(1-K(1-\alpha)\right)^2\epsilon}
\sum_{t=1}^k
\|D_{t-1}^{\beta-1}z_{t-1}\|_\infty.
\end{align*}
Also,
\begin{math}
    \Psi_{W_0}(r^{\mathrm{start}}_0)
    =
    \frac{\|D_0^\beta F(\alpha s)\|_1}{1-K(1-\alpha)}.
\end{math}
By \Cref{lem:z-bounded},
\begin{math}
    \|D_{t-1}^{\beta-1}z_{t-1}\|_\infty \le B_z+\epsilon
\end{math}
for every $t$. Substituting this bound gives
\begin{equation*}
T_{\mathrm{total}}
\le
T_{\mathrm{init}}
+
\frac{2k}{1-K(1-\alpha)}
+
\frac{2\left(1+2K(1-\alpha)\right)(B_z+\epsilon)}
{\left(1-K(1-\alpha)\right)^2\epsilon}k. \qedhere
\end{equation*}
\end{proof}

\thmPushLowerBound*
\begin{proof}
We give a construction in which the same edge is inserted and deleted repeatedly. Each insertion forces a push at a node of degree $\Theta(1/\epsilon)$.

Set $f_i(x)=x$, $\alpha=1/2$, and $\beta=0$. Then $K=1$, and the cleanup threshold at node $i$ is $|r_i|>\frac{1}{2}\epsilon d(i)$. We have $W=AD^{-1}$, so a node $j$ contributes $z_j/d(j)$ to each of its neighbors.

Our graph consists of a star and an isolated vertex. Let $u$ be the center of the star, and let its degree be
\begin{align*}
    d(u)=m:=\left\lfloor \frac{1}{100\epsilon}\right\rfloor
\end{align*}
including its permanent self-loop. Thus $u$ has $m-1$ leaves, and each leaf has degree $2$. Let $v$ be the isolated vertex except for its permanent self-loop, which is initially disconnected from the star. Hence the graph has $n=m+1=\Theta(1/\epsilon)$ nodes.

Define the $s$ vector by $s_v=1$ and $s_i=0$ for all $i\ne v$. In this initial graph, the fixed point has $z^*_v=1$ and $z^*_i=0$ for all $i \ne v$. After cleanup, the maintained state therefore satisfies $|z_u|\le \epsilon m$ and $|z_v-1|\le \epsilon$ and all residuals are below threshold. In particular, $|r_u|\le \frac{1}{2}\epsilon m$ and $|r_v|\le \frac{1}{2}\epsilon$.

Now insert the edge $(u,v)$. We show that the subsequent cleanup must push $u$. The update rule scales $z_u$ by $(m+1)/m$ and adds the new contribution from $v$ to $u$. Hence the new residual at $u$ is
\begin{align*}
    r_u' = r_u+\frac12 z_v-\frac{z_u}{m}.
\end{align*}
Using the bounds above and $\epsilon m\le 1/100$, we get for $\epsilon \le 1/100$,
\begin{align*}
    r_u'
    &\ge -\frac12\epsilon m+\frac12(1-\epsilon)-\epsilon \\
    &\ge -\frac{1}{200} + \frac{1}{2}\left(1-\frac{1}{100}\right) - \frac{1}{100} \\
    &= 0.48.
\end{align*}
On the other hand, the cleanup threshold at $u$ after the insertion is $\frac12\epsilon(m+1)\le0.01$, thus $u$ has a large residual immediately after the edge update. It remains to check that cleanup cannot remove this residual by pushing other nodes first. Before $u$ is pushed, the pushes that can affect $r_u$ are pushes at its neighbors.

First consider node $v$. After the insertion, $d(v)=2$, so pushing $v$ with residual $r'_v$ changes $r_u$ by $(1-\alpha)w_{uv}r'_v=r'_v/4$. The residual left at $v$ after this push is also $r'_v/4$. Therefore, even if $v$ is pushed repeatedly before $u$ is ever pushed, its total possible effect on $r_u$ has magnitude at most
\begin{align*}
    \frac14|r'_v|+\frac1{16}|r'_v|+\frac1{64}|r'_v|+\cdots
    =
    \frac13|r'_v|.
\end{align*}
After the insertion, $r'_v = r_v - z_v +\frac{1}{2}\cdot \frac{z_u}{m}$. Using bounds before the insertion we have $|r'_v| \le \epsilon/2 + (1+\epsilon) + \epsilon/2 = 1 + 2\epsilon$, so all pushes of $v$ can reduce $r_u$ by at most $(1+2\epsilon)/3\le0.34$.

Now consider the previous leaves. The insertion of $(u,v)$ does not change their residuals, because the scaling step preserves the quantity $z_u/d(u)$ that they receive from $u$. Thus each leaf $i$ still has $|r_i|\le \epsilon$, since $d(i)=2$. The same argument as $v$ shows that all pushes of one leaf can affect $r_u$ by at most $|r_i|/3$. Summing over the $m-1$ leaves, the total possible effect from all leaves is at most $\frac{1}{3}(m-1)\epsilon \le \frac{1}{300}$.

Therefore, before the first push of $u$, even after all possible pushes of $v$ and all leaves, the residual at $u$ is still at least $0.48-0.34-\frac1{300}>0.1$. This is larger than the cleanup threshold at $u$, which is below $0.01$. Hence, cleanup cannot terminate before pushing $u$.

Pushing $u$ costs $\Theta(d(u))=\Theta(m)=\Theta(1/\epsilon)$ work. Thus every insertion of $(u,v)$ forces $\Omega(1/\epsilon)$ work.

Finally, take the update sequence that repeatedly toggles the edge $(u,v)$. After every deletion and cleanup, the graph is again the initial graph, so the same argument applies to the next insertion. Therefore, the total work is $\Omega(k/\epsilon)$, and the amortized update time is $\Omega(1/\epsilon)$.
\end{proof}

\section{Exact Dynamic Algorithm For the Linear Case}
\label{appendix:proofs:exact}

In this section, we focus on the linear case, which is closely related to the PageRank problem. We seek an algorithm that computes the exact solution instead of an approximation. As before, we assume that the graph contains permanent self-loops. However, we do not require the graph to be unweighted, and we define the degree of a node as the sum of the weights of its incident edges. Suppose that each $f_i$ is the identity function. Then the unique fixed point of the system is
\begin{equation}
    z^* = \left[I - (1 - \alpha)W\right]^{-1} \alpha s.
\end{equation}

For a static graph, one can compute $z^*$ exactly by computing the matrix inverse $\left[I - (1 - \alpha)W\right]^{-1}$ in $O(n^{\omega})$ time, where $\omega$ is the matrix multiplication exponent \citep{bunch1974triangular}. For a dynamic graph, there is an extensive line of work on dynamic matrix inverse and dynamic systems of linear equations. We adapt these results to the PageRank model, and in particular to the normalized weight matrix. Our algorithms are mainly inspired by the dynamic algorithm for systems of linear equations of \cite{sankowski2004dynamic}. However, their algorithm does not directly give a dynamic algorithm for computing $z^*$ in our setting, because an edge insertion or deletion may change up to two rows and two columns of $W$ due to the normalized weights. Their algorithm does not directly support such updates. One could try to decompose the changes caused by an insertion or deletion into multiple operations, but this requires care, since their algorithm assumes that the matrix remains non-singular after each intermediate update.

In the following lemma, we show that an edge update induces a low-rank update to $W$. This directly follows from the fact that only two rows and two columns can change. However, the specific structure of the update is what allows us to obtain an efficient algorithm.

\begin{lemma} \label{lem:exact:low-rank}
    Suppose an edge between $u,v$ with weight $\delta$ is inserted or deleted from the graph, and let $W_+$ be the normalized weight matrix of the new graph. Then
    \begin{align*}
        W_+ = W + UV^T
    \end{align*}
    for some $U,V \in \R^{n \times 4}$. In other words, the edge update induces an update of rank at most 4 to $W$.
\end{lemma}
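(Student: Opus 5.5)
We need to show $W_+ - W$ has rank at most 4 and write it as $UV^T$ with explicit $U,V\in\R^{n\times 4}$. The plan is to write $W = D^{-\beta} A D^{\beta-1}$ and $W_+ = D_+^{-\beta} A_+ D_+^{\beta-1}$. Here $A_+ = A + \delta(e_ue_v^T + e_ve_u^T)$, with $\delta$ negated for a deletion. The degree matrix changes as $D_+ = D + \delta(e_ue_u^T + e_ve_v^T)$, so only the $u,v$ diagonal entries move. Any matrix supported on rows $\{u,v\}$ has rank at most 2, and likewise for columns. Since every entry of $W_+-W$ outside rows $u,v$ and columns $u,v$ vanishes, the difference splits as (rows $u,v$ part) plus (columns $u,v$ part outside those rows), each of rank at most 2. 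That alone proves the rank-$4$ claim. I will nonetheless give the explicit structured form, since the later algorithm needs it.

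For the structured form, set $P = D_+^{-\beta}D^{\beta}$ and $Q = D^{1-\beta}D_+^{\beta-1}$. Both are diagonal and differ from $I$ only at $u,v$. Then
\[
W_+ = P\,D^{-\beta}(A + \Delta A)D^{\beta-1}\,Q = P(W + E)Q,
\]
where $E = D^{-\beta}\Delta A D^{\beta-1}$ is supported on entries $(u,v),(v,u)$. Write $P = I + p_u e_ue_u^T + p_v e_ve_v^T$ and similarly $Q = I + q_u e_ue_u^T + q_v e_ve_v^T$, with scalars computed from the old and new degrees. Expanding $P(W+E)Q - W$ gives $(P-I)(W+E)Q + (W+E)(Q-I) + E$.

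The first term equals $e_u\bigl(p_u e_u^T(W+E)Q\bigr) + e_v\bigl(p_v e_v^T(W+E)Q\bigr)$, which is two outer products with left factors $e_u,e_v$. The second term equals $\bigl((W+E)q_ue_u\bigr)e_u^T + \bigl((W+E)q_ve_v\bigr)e_v^T$, two outer products with right factors $e_u,e_v$. The remaining term is $E = E_{uv}e_ue_v^T + E_{vu}e_ve_u^T$. Its first summand merges into the $e_u$ left-factor outer product, since its left factor is also $e_u$, and its second summand merges into the $e_v$ one. This yields exactly four outer products. Take $U = [e_u, e_v, (W+E)Qe_u\cdot(\text{scaled}), \dots]$ and $V$ with the matching vectors. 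Precisely,
\[
U = [\,e_u,\ e_v,\ q_u(W+E)e_u,\ q_v(W+E)e_v\,],
\]
\[
V = [\,p_uQ^T(W+E)^Te_u + E_{uv}e_v,\ p_vQ^T(W+E)^Te_v + E_{vu}e_u,\ e_u,\ e_v\,].
\]

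The main obstacle is only bookkeeping: checking that $D_+$ stays positive so that $P$ and $Q$ are well-defined (this holds by the permanent self-loops), and checking that no term is double-counted. I will verify the $2\times2$ block at rows and columns $u,v$ directly against the expansion. Deletions are handled by replacing $\delta$ with $-\delta$.
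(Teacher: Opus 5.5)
Your argument is correct and is essentially the paper's. Both write $W_+$ as a two-sided diagonal rescaling (nontrivial only at $u,v$) of $W$ plus the adjacency perturbation, and both group the expansion into four outer products whose left or right factors are $e_u,e_v$; your support-counting remark is the same observation the paper makes just before its proof. The only cosmetic difference is where the perturbation is normalized: you use the old degrees inside the rescaling, $W_+=P(W+E)Q$, while the paper uses the new degrees outside, $W_+=(I+E\Lambda E^T)W(I+E\Gamma E^T)+EHE^T$ (with its $E=[e_u\ e_v]$), giving last two columns of $U$ exactly $W[e_u\ e_v]\Gamma$. Your columns $q_u(W+E)e_u$ differ from $q_uWe_u$ only by a multiple of $e_v$, so they serve the later Woodbury computation equally well.
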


\begin{proof}
We use the subscript $+$ to denote quantities after the update. Let the signed change be $\eta=\delta$ for an insertion and $\eta=-\delta$ for a deletion. Let
\begin{math}
    E=\begin{bmatrix}e_u & e_v\end{bmatrix}.
\end{math}
Then
\begin{math}
    A_+=A+\eta(e_ue_v^T+e_ve_u^T)
\end{math}
and
\begin{math}
    D_+=D+\eta(e_ue_u^T+e_ve_v^T).
\end{math}
Let $W=D^{-\beta}AD^{\beta-1}$ and $W_+=D_+^{-\beta}A_+D_+^{\beta-1}$.

Since only the degrees of $u$ and $v$ change, there are diagonal matrices $\Lambda,\Gamma\in\R^{2\times 2}$ such that
\begin{math}
    D_+^{-\beta}D^\beta=I+E\Lambda E^T
\end{math}
and
\begin{math}
    D^{1-\beta}D_+^{\beta-1}=I+E\Gamma E^T.
\end{math}
For $i\in\{u,v\}$, these matrices have entries
\begin{math}
    \Lambda_{ii}=(d_i/d_{i,+})^\beta-1
\end{math}
and
\begin{math}
    \Gamma_{ii}=(d_{i,+}/d_i)^{\beta-1}-1.
\end{math}

Define $H\in\R^{2\times 2}$ by
\begin{math}
    EHE^T
    =
    \eta D_+^{-\beta}(e_ue_v^T+e_ve_u^T)D_+^{\beta-1}.
\end{math}
Equivalently,
\begin{math}
    H_{uv}=\eta d_{u,+}^{-\beta}d_{v,+}^{\beta-1},
\end{math}
\begin{math}
    H_{vu}=\eta d_{v,+}^{-\beta}d_{u,+}^{\beta-1},
\end{math}
and the diagonal entries of $H$ are zero.

Using these definitions,
\begin{equation*}
W_+
=
D_+^{-\beta}A_+D_+^{\beta-1}
=
(I+E\Lambda E^T)W(I+E\Gamma E^T)+EHE^T.
\end{equation*}
Thus
\begin{equation*}
W_+-W
=
E\Lambda E^TW(I+E\Gamma E^T)
+
WE\Gamma E^T
+
EHE^T.
\end{equation*}
Now set
\begin{equation*}
U=
\begin{bmatrix}
E & WE\Gamma
\end{bmatrix}
\in\R^{n\times 4}
\end{equation*}
and
\begin{equation*}
V^T=
\begin{bmatrix}
\Lambda E^TW(I+E\Gamma E^T)+HE^T \\
E^T
\end{bmatrix}
\in\R^{4\times n}.
\end{equation*}
Then $W_+=W+UV^T$, so the update has rank at most $4$.
\end{proof}

Next, we describe at a high level how $z^*$ can be maintained under updates. To do this, we use the Woodbury matrix identity \citep{woodbury1950inverting}, which gives a formula for the inverse of a matrix under a low-rank update.

Let $B := I - (1 - \alpha)W$. Following \citet{sankowski2004dynamic}, we maintain $B^{-1}$ in an implicit form. Specifically, after $k$ updates, we write $B = B_0 + \mathcal{U}\mathcal{V}^T$, where $\mathcal{U},\mathcal{V} \in \R^{n \times 4k}$ and $B_0$ is the last rebuilt matrix. We maintain $M := B_0^{-1}$, $Y := M\mathcal{U}$, and the small inverse $G := (I_{4k} + \mathcal{V}^TY)^{-1}$. We also maintain the current solution $z^* = B^{-1}\alpha s$. By the Woodbury identity,
\begin{equation}
    B^{-1} = M - YG\mathcal{V}^TM.
\end{equation}
We use the subscript $+$ to denote the state after a single edge update. Consider an update from $W$ to $W_+$. By \Cref{lem:exact:low-rank}, $W_+ = W + U_WV_W^T$ for some $U_W,V_W \in \R^{n \times 4}$. Then $B_+ = B + UV^T$, where $U = U_W$ and $V = -(1-\alpha)V_W$. Define $Z := B^{-1}U$ and $S:= I_4 + V^TZ$. Applying the Woodbury identity again gives $B_+^{-1} = B^{-1} - ZS^{-1}V^TB^{-1}$. Since $S$ has size $4 \times 4$, its inverse can be computed in $O(1)$ time. The new solution is therefore $z^*_+ = z^* - ZS^{-1}V^Tz^*$. Once $Z$ is known, this update takes $O(n)$ time. We give the details for computing $Z$ in the lemma below.

\begin{lemma} \label{lem:exact:update}
Let an edge update produce $B_+=B+UV^T$ with $U,V\in\R^{n\times 4}$. Then the maintained state and the solution $z^*_+=B_+^{-1}\alpha s$ can be updated in time $O(nk+k^2)$.
\end{lemma}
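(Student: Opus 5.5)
The plan is to compute $Z=B^{-1}U$ through the implicit representation $B^{-1}=M-YG\mathcal{V}^TM$, and then update $z^*$, $\mathcal{U},\mathcal{V}$, $Y$ and $G$ incrementally. The work splits into three parts.

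\textbf{Forming $U$ and $V$.} From the proof of \Cref{lem:exact:low-rank}, $U_W=[E\;\; WE\Gamma]$. Here $WE$ is just the columns $u,v$ of $W$, so it can be read off in $O(n)$ time. The rows of $V_W^T$ are $\Lambda E^TW(I+E\Gamma E^T)+HE^T$, which involves rows $u,v$ of $W$, together with $E^T$. Both can be formed in $O(n)$ time, given that we store the current $W$ (or $A$ and $D$) explicitly and update it in $O(n)$ per edge. Hence $U$ and $V$ are available in $O(n)$ time.

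\textbf{Computing $Z$.} We use
\[
Z=B^{-1}U=MU-YG(\mathcal{V}^TMU).
\]
A naive product $MU$ costs $O(n^2)$, which is too expensive.

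\begin{itemize}
\item The first columns of $U$ are $E$, so $ME$ is just two columns of $M$ and costs $O(n)$.
\item For the other two columns, $M(WE\Gamma)$, the idea is to use $B_0=I-(1-\alpha)W_0$. This gives $MW_0=\frac{1}{1-\alpha}(M-I)$. Writing $W=W_0+(\text{accumulated low rank})$ with factors stored in $\mathcal{U},\mathcal{V}$, we get
\[
MWE=\tfrac{1}{1-\alpha}(ME-E)+(\text{terms } M\mathcal{U}\cdot(\mathcal{V}^TE)).
\]
Since $M\mathcal{U}=Y$ is maintained, this costs $O(nk)$.
\item The product $\mathcal{V}^TMU$ is then $\mathcal{V}^T(MU)$, a $4k\times n$ times $n\times 4$ product, costing $O(nk)$.
\item Multiplying by $G$ ($4k\times 4k$ times $4k\times 4$) costs $O(k^2)$, and applying $Y$ costs $O(nk)$.
\end{itemize}

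So $Z$ is obtained in $O(nk+k^2)$ time.

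\textbf{Updating the state.} First, set $z^*_+=z^*-ZS^{-1}(V^Tz^*)$ with $S=I_4+V^TZ$, which costs $O(n)$. Next, append $U,V$ to $\mathcal{U},\mathcal{V}$, and append $MU$, already computed, to $Y$. Finally, update $G$, the inverse of the bordered matrix
\[
\begin{bmatrix} I+\mathcal{V}^TY & \mathcal{V}^TMU\\ V^TY & I+V^TMU\end{bmatrix}.
\]
The new blocks cost $O(nk)$ to form. The block-inverse (Schur complement) formula gives the new $G$ from the old $G$ in $O(k^2)$ time, since the Schur complement is $4\times 4$; it is invertible because $B_+$ is nonsingular, by the matrix determinant lemma.

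The expected main obstacle is computing $M(WE)$ without an $O(n^2)$ matrix–vector product. The identity $MW_0=(M-I)/(1-\alpha)$ resolves it, but carefully tracking $W-W_0$ through the stored factors is where errors are likely. A fallback is to store $MW_0$ columns implicitly via this identity, and to note that $W-W_0=-(B-B_0)/(1-\alpha)=-\mathcal{U}\mathcal{V}^T/(1-\alpha)$. This gives directly
\[
MWE=\tfrac{1}{1-\alpha}(ME-E)-\tfrac{1}{1-\alpha}Y(\mathcal{V}^TE),
\]
which costs $O(nk)$.
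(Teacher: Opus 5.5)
Your proposal is correct and follows essentially the paper's proof: the same implicit Woodbury representation, the same identity $MWE=\frac{1}{1-\alpha}(ME-E)-\frac{1}{1-\alpha}Y\mathcal{V}^TE$ for appending $MU$ to $Y$, and the same $4\times 4$ Schur-complement update of $G$. The only difference is in how you get $Z$. The paper uses $B^{-1}W=\frac{1}{1-\alpha}(B^{-1}-I)$, so only $B^{-1}E$ is needed, while you compute $MU$ first and then apply $B^{-1}=M-YG\mathcal{V}^TM$, which also lets you reuse $\mathcal{V}^TMU$ as the block $C$. Both routes cost $O(nk+k^2)$, and your determinant-lemma argument that the Schur complement is invertible is a detail the paper leaves implicit.
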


\begin{proof}
Before the update, we maintain
\begin{math}
    B=B_0+\mathcal{U}\mathcal{V}^T,
\end{math}
where $\mathcal{U},\mathcal{V}\in\R^{n\times 4k}$,
\begin{math}
    M=B_0^{-1},
\end{math}
\begin{math}
    Y=M\mathcal{U},
\end{math}
\begin{math}
    G=(I+\mathcal{V}^TY)^{-1},
\end{math}
and
\begin{math}
    z^*=B^{-1}\alpha s.
\end{math}
Thus
\begin{equation*}
    B^{-1}=M-YG\mathcal{V}^TM.
\end{equation*}

By \Cref{lem:exact:low-rank}, the edge update has
\begin{math}
    U=\begin{bmatrix}E & WE\Gamma\end{bmatrix},
\end{math}
where $E=\begin{bmatrix}e_u & e_v\end{bmatrix}$ and $\Gamma\in\R^{2\times 2}$ is diagonal. We first compute $Z=B^{-1}U$. Since $B=I-(1-\alpha)W$,
\begin{math}
    B^{-1}W=\frac{1}{1-\alpha}(B^{-1}-I).
\end{math}
Therefore,
\begin{equation*}
Z
=
\begin{bmatrix}
B^{-1}E &
\frac{1}{1-\alpha}(B^{-1}E-E)\Gamma
\end{bmatrix}.
\end{equation*}
It remains to compute $B^{-1}E$. Using the maintained inverse representation,
\begin{equation*}
B^{-1}E
=
ME-YG\mathcal{V}^TME.
\end{equation*}
The matrix $ME$ consists of two columns of $M$, so it can be obtained in $O(n)$ time. Computing $\mathcal{V}^TME$ costs $O(nk)$, multiplying by $G$ costs $O(k^2)$, and multiplying by $Y$ costs $O(nk)$. Hence $Z$ can be computed in $O(nk+k^2)$ time.

Now define
\begin{math}
    S=I_4+V^TZ.
\end{math}
Since $S$ is $4\times 4$, computing $S^{-1}$ takes $O(1)$ time. By the Woodbury identity,
\begin{equation*}
B_+^{-1}
=
B^{-1}-ZS^{-1}V^TB^{-1}.
\end{equation*}
Therefore,
\begin{math}
    z^*_+=z^*-ZS^{-1}V^Tz^*.
\end{math}
The product $V^Tz^*$ costs $O(n)$ time, and the remaining operations also cost $O(n)$ time. This is within the claimed bound.

It remains to update the maintained state. We append the new factors:
\begin{math}
    \mathcal{U}_+=\begin{bmatrix}\mathcal{U} & U\end{bmatrix}
\end{math}
and
\begin{math}
    \mathcal{V}_+=\begin{bmatrix}\mathcal{V} & V\end{bmatrix}.
\end{math}
We also append $MU$ to $Y$. The first two columns, $ME$, are already available. For the remaining two columns, $MWE\Gamma$, use
\begin{math}
    B_0=I-(1-\alpha)W_0
\end{math}
and
\begin{math}
    B=B_0+\mathcal{U}\mathcal{V}^T=I-(1-\alpha)W.
\end{math}
These identities imply
\begin{math}
    W=W_0-\frac{1}{1-\alpha}\mathcal{U}\mathcal{V}^T.
\end{math}
Thus
\begin{equation*}
MWE
=
\frac{1}{1-\alpha}(ME-E)
-
\frac{1}{1-\alpha}Y\mathcal{V}^TE.
\end{equation*}
Therefore,
\begin{equation*}
MU
=
\begin{bmatrix}
ME &
\left(
\frac{1}{1-\alpha}(ME-E)
-
\frac{1}{1-\alpha}Y\mathcal{V}^TE
\right)\Gamma
\end{bmatrix}.
\end{equation*}
This costs $O(nk)$ time to compute, so
\begin{math}
    Y_+=\begin{bmatrix}Y & MU\end{bmatrix}
\end{math}
can be updated in $O(nk)$ time.

Finally, we update
\begin{math}
    G_+=(I_{4(k+1)}+\mathcal{V}_+^TY_+)^{-1}.
\end{math}
After appending the new factors, we have
\begin{math}
    \mathcal{U}_+=\begin{bmatrix}\mathcal{U} & U\end{bmatrix},
    \mathcal{V}_+=\begin{bmatrix}\mathcal{V} & V\end{bmatrix},
    Y_+=M\mathcal{U}_+=\begin{bmatrix}Y & MU\end{bmatrix}.
\end{math}
Therefore,
\begin{align*}
I_{4(k+1)}+\mathcal{V}_+^TY_+
&=
\begin{bmatrix}
I_{4k}+\mathcal{V}^TY & \mathcal{V}^T MU \\
V^T Y & I_4+V^T MU
\end{bmatrix}.
\end{align*}
Since
\begin{math}
    G=(I_{4k}+\mathcal{V}^TY)^{-1}
\end{math}
is already stored, we only need to invert this block matrix. Define
\begin{equation*}
    C:=\mathcal{V}^T MU\in\R^{4k\times 4},
    \qquad
    R:=V^TY\in\R^{4\times 4k},
    \qquad
    Q:=I_4+V^T MU\in\R^{4\times 4}.
\end{equation*}
Then
\begin{equation*}
I_{4(k+1)}+\mathcal{V}_+^TY_+
=
\begin{bmatrix}
G^{-1} & C \\
R & Q
\end{bmatrix}.
\end{equation*}
Let
\begin{math}
    \Sigma:=Q-RGC\in\R^{4\times 4}
\end{math}
be the Schur complement. By the block inverse formula,
\begin{equation*}
G_+
=
\begin{bmatrix}
G+GC\Sigma^{-1}RG & -GC\Sigma^{-1} \\
-\Sigma^{-1}RG & \Sigma^{-1}
\end{bmatrix}.
\end{equation*}
It remains to bound the cost of forming this expression. The matrix $MU$ has already been computed. Forming
\begin{math}
    C=\mathcal{V}^TMU
\end{math}
costs $O(nk)$ time, forming
\begin{math}
    R=V^TY
\end{math}
costs $O(nk)$ time, and forming
\begin{math}
    Q=I_4+V^TMU
\end{math}
costs $O(n)$ time. Computing $GC$ and $RG$ costs $O(k^2)$ time, since $G$ has size $4k\times 4k$. The matrix $\Sigma$ has size $4\times 4$, so computing $\Sigma^{-1}$ costs $O(1)$ time. Hence updating $G_+$ costs
\begin{math}
    O(nk+k^2)
\end{math}
time.

Thus both the maintained state and the solution $z^*_+=B_+^{-1}\alpha s$ can be updated in $O(nk+k^2)$ time.
\end{proof}

The above lemma gives an efficient algorithm as long as $k$ is small. Thus, as $k$ grows, we need a way to propagate all changes into $B$ and make $\mathcal{U},\mathcal{V}$ small again. The following lemma gives this rebuilding step. We use the rebuilding algorithm of \citet[Theorem 2]{sankowski2004dynamic}, but state it here explicitly to avoid ambiguity. The theorem and proof of \citet{sankowski2004dynamic} are stated for the case where $O(n^\gamma)$ columns of the original matrix have changed. We note that the proof extends directly to the case where $O(n^\gamma)$ columns and $O(n^\gamma)$ rows have changed. We state this form in the following lemma.

\begin{lemma} \label{lem:sank-rebuild}
Let $A$ be nonsingular, and suppose $A'=A+\Delta$ is also nonsingular.
Assume that $\Delta$ is supported only on rows and columns indexed by a set
$S$, where $|S|=k=O(n^\gamma)$. Then $A^{-1}$ can be updated to $(A')^{-1}$ using $O\left(n^{\omega(1,\gamma,1)}\right)$ arithmetic operations.
\end{lemma}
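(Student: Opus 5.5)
The plan is to avoid reopening the internals of \citet{sankowski2004dynamic}. Instead, I would write $\Delta$ explicitly as a low-rank product whose factors are thin in one dimension, then apply the Woodbury identity and count the cost of each rectangular product. Let $E_S\in\R^{n\times k}$ be the column selector for $S$. Since $\Delta$ vanishes outside rows $S$ and columns $S$, it splits into two parts. The first is $E_S R$, where $R:=E_S^T\Delta\in\R^{k\times n}$ collects the rows of $\Delta$ indexed by $S$. The second is $C E_S^T$, where $C\in\R^{n\times k}$ collects the columns of $\Delta$ indexed by $S$, with the entries in rows of $S$ set to zero so nothing is counted twice. Thus
\begin{equation*}
\Delta = \mathcal{P}\mathcal{Q}^T,\qquad \mathcal{P}:=\begin{bmatrix}E_S & C\end{bmatrix}\in\R^{n\times 2k},\qquad \mathcal{Q}^T:=\begin{bmatrix}R\\ E_S^T\end{bmatrix}\in\R^{2k\times n}.
\end{equation*}
This is the same kind of structured decomposition as in \Cref{lem:exact:low-rank}, but of rank $2k$ instead of $4$.

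Next, I would invoke Woodbury:
\begin{equation*}
(A')^{-1}=A^{-1}-A^{-1}\mathcal{P}\,\Sigma^{-1}\,\mathcal{Q}^TA^{-1},\qquad \Sigma:=I_{2k}+\mathcal{Q}^TA^{-1}\mathcal{P}.
\end{equation*}
The matrix $\Sigma$ is invertible by the determinant lemma, since $\det(A')=\det(A)\det(\Sigma)$ and both $A$ and $A'$ are nonsingular. This is the one place the nonsingularity of $A'$ is used.

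The cost accounting goes in the following order.
\begin{itemize}
\item $A^{-1}E_S$ and $E_S^TA^{-1}$ are just columns and rows of the stored $A^{-1}$, so they cost $O(nk)$.
\item $A^{-1}C$ is an $n\times n$ by $n\times k$ product, costing $O(n^{\omega(1,1,\gamma)})$.
\item $RA^{-1}$ is a $k\times n$ by $n\times n$ product, costing $O(n^{\omega(\gamma,1,1)})$.
\item Forming $\Sigma$ requires $\mathcal{Q}^T(A^{-1}\mathcal{P})$, a $2k\times n$ by $n\times 2k$ product, costing $O(n^{\omega(\gamma,1,\gamma)})$.
\item Inverting $\Sigma$ costs $O(k^{\omega})=O(n^{\gamma\omega})$.
\item The product $(A^{-1}\mathcal{P}\Sigma^{-1})$ costs $O(nk^{\omega-1})$.
\item The final outer product of an $n\times 2k$ matrix by a $2k\times n$ matrix costs $O(n^{\omega(1,\gamma,1)})$, and the subtraction costs $O(n^2)$.
\end{itemize}

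The main obstacle is making all of these exponents collapse to $\omega(1,\gamma,1)$. For this I would use two standard facts about rectangular matrix multiplication exponents. First, they are invariant under permutation of their arguments, so $\omega(1,1,\gamma)=\omega(\gamma,1,1)=\omega(1,\gamma,1)$. Second, they are monotone in each argument, so $\omega(\gamma,1,\gamma)\le\omega(1,\gamma,1)$. The remaining terms are handled by splitting the product: $n^{\gamma\omega}\le n^{\omega(1,\gamma,1)}$, because $n^{\omega(1,\gamma,1)}$ is at least the cost of $n^{1-\gamma}$ independent $n^\gamma\times n^\gamma$ blocks. Finally, $n^2\le n^{\omega(1,\gamma,1)}$ because the output already has size $n\times n$.

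If the permutation-invariance step turns out to be delicate for the specific bounds cited, I would fall back on the direct argument. Transposition shows $\omega(\gamma,1,1)=\omega(1,1,\gamma)$. A product with a thin middle dimension can then be reorganized into blocks of the shape $n\times n^\gamma$ times $n^\gamma\times n$, which bounds it by $O(n^{\omega(1,\gamma,1)})$ up to the same exponent.
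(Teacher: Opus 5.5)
Your proposal is correct and is essentially the paper's proof: it uses the same splitting $\Delta = E_S R + C E_S^T$, with $C$ equal to the $S$-columns of $\Delta$ zeroed on the $S$-rows, giving the same rank-$2k$ factors ($[E_S\ \ C]$ and the stacked $R$, $E_S^T$), followed by Woodbury and rectangular-multiplication cost counting. You are more careful than the paper, which silently assumes the $2k\times 2k$ capacitance matrix is invertible and that $\omega(1,1,\gamma)=\omega(1,\gamma,1)$; your determinant-lemma argument and your appeal to permutation invariance and monotonicity of rectangular exponents fill in both points (only the permutation-invariance fact is needed, since your block-rearrangement fallback would not by itself yield that symmetry).
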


\begin{proof}
Let $E\in\R^{n\times k}$ be the matrix whose columns are the standard basis vectors indexed by $S$. Since $\Delta$ is supported on
\begin{math}
    S\times[n]\cup[n]\times S,
\end{math}
we can write
\begin{math}
    \Delta=ER+CE^T,
\end{math}
where $R=E^T\Delta$ contains the changed rows, and
\begin{math}
    C=(I-EE^T)\Delta E
\end{math}
contains the changed columns outside those rows.

Thus
\begin{math}
    A'=A+ER+CE^T=A+UV^T,
\end{math}
where
\begin{equation*}
    U=\begin{bmatrix}E & C\end{bmatrix},
    \qquad
    V^T=\begin{bmatrix}R \\ E^T\end{bmatrix}.
\end{equation*}
Hence $U,V\in\mathbb{R}^{n\times 2k}$, so the update has rank at most $2k$. By the Woodbury identity,
\begin{equation*}
    (A')^{-1}
    =
    A^{-1}
    -
    A^{-1}U
    \left(I_{2k}+V^TA^{-1}U\right)^{-1}
    V^TA^{-1}.
\end{equation*}
The products $A^{-1}U$ and $V^TA^{-1}$ are rectangular products with one dimension $O(k)=O(n^\gamma)$, so they cost
\begin{math}
    O\left(n^{\omega(1,\gamma,1)}\right)
\end{math}
arithmetic operations. The middle matrix has size $O(k)\times O(k)$, and the final product has dimensions
\begin{math}
    n\times O(k)
\end{math}
by
\begin{math}
    O(k)\times n.
\end{math}
These steps are within the same bound. Therefore the full update costs
\begin{math}
    O\left(n^{\omega(1,\gamma,1)}\right)
\end{math}
arithmetic operations.
\end{proof}

Using the above lemma, we rebuild after $n^\gamma$ updates.

\begin{corollary} \label{cor:reset-cost}
After $n^\gamma$ edge updates, the inverse $B^{-1}$ can be
rebuilt in $O\left(n^{\omega(1,\gamma,1)}\right)$ arithmetic operations.
\end{corollary}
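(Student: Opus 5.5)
The plan is to reduce \Cref{cor:reset-cost} to \Cref{lem:sank-rebuild}. Let $B_0 = I-(1-\alpha)W_0$ be the matrix at the last rebuild, for which $M=B_0^{-1}$ is stored explicitly. Let $B = I-(1-\alpha)W$ be the current matrix after $n^\gamma$ edge updates. Set $\Delta := B - B_0 = -(1-\alpha)(W-W_0)$.

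\textbf{Support of $\Delta$.} Let $S$ be the set of all endpoints $u,v$ touched by the $n^\gamma$ updates, so $|S|\le 2n^\gamma = O(n^\gamma)$. By the proof of \Cref{lem:exact:low-rank}, a single edge update modifies $W$ only in rows and columns indexed by its two endpoints. Indeed, $W_+-W = E\Lambda E^TW(I+E\Gamma E^T) + WE\Gamma E^T + EHE^T$. The first and third terms are supported on rows $u,v$, and the second on columns $u,v$. Entries of $W$ outside rows and columns in $S$ depend only on $A_{ij}$, $d(i)$, $d(j)$ with $i,j\notin S$, and none of these change. By induction over the updates, $\Delta$ is supported on $S\times[n]\cup[n]\times S$.

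\textbf{Nonsingularity.} $B_0$ and $B$ are nonsingular because $(1-\alpha)W$ has spectral radius at most $1-\alpha<1$. This follows since $W$ is similar to $D^{-1}A$, which is row-stochastic; equivalently, use \Cref{lem:deg-scaled-norm-W}. This also holds for weighted graphs with positive weights.

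\textbf{Applying the lemma.} \Cref{lem:sank-rebuild}, applied with $A=B_0$, $A'=B$ and this $S$, updates $M=B_0^{-1}$ to $B^{-1}$ in $O(n^{\omega(1,\gamma,1)})$ operations. Forming $\Delta$ explicitly costs only $O(n\cdot n^\gamma)$. The changed rows and columns are recomputed directly from the current degrees and adjacency in that time, which is dominated by $n^{\omega(1,\gamma,1)}\ge n^{1+\gamma}$. After the rebuild we reset $B_0\leftarrow B$, $M\leftarrow B^{-1}$ and $\mathcal{U},\mathcal{V}$ to empty, so $Y$ and $G$ become empty as well. The solution $z^*$ is already current, or can be recomputed as $M\alpha s$ in $O(n^2)$ time, which is also within the bound.

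\textbf{Main obstacle.} The only subtle point is verifying that cumulative changes stay confined to $O(n^\gamma)$ rows and columns. Row $i$ of $W$ depends on $d(i)$ and on $d(j)$ for neighbors $j$. One must confirm that a change in $d(u)$ only affects column $u$ in other rows (through the factor $d(u)^{\beta-1}$) and row $u$ itself (through $d(u)^{-\beta}$), so that no other entries change. This is exactly what the low-rank decomposition above shows.
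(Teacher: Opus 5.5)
Your proposal is correct and follows the paper's proof: each edge update changes $W$ only in the rows and columns of its two endpoints, so after $n^\gamma$ updates $B-B_0$ is supported on $O(n^\gamma)$ rows and columns, and \Cref{lem:sank-rebuild} then gives the $O(n^{\omega(1,\gamma,1)})$ bound. Your added details are sound and are left implicit in the paper: nonsingularity of $B_0$ and $B$ (a hypothesis of the lemma), which you get from the similarity of $W$ to the row-stochastic $D^{-1}A$, and the $O(n^{1+\gamma})\le O(n^{\omega(1,\gamma,1)})$ cost of forming $\Delta$.
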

\begin{proof}
Each edge update changes at most two rows and two columns. Therefore, after $n^\gamma$ edge updates, the total change is supported on $O(n^\gamma)$ rows and $O(n^\gamma)$ columns. By \Cref{lem:sank-rebuild}, we can compute the updated inverse $B^{-1}$ in $O\left(n^{\omega(1,\gamma,1)}\right)$ arithmetic operations.
\end{proof}

Combining the above results gives the following theorem.
\begin{theorem}
    The fixed point $z^*$ can be maintained by an algorithm with $O(n^\omega)$ initialization cost and $O\left(n^{\omega(1,\gamma,1)-\gamma} + n^{1+\gamma}\right)$ worst-case update time.
    The algorithm maintains $z^*$ explicitly after each update. In particular, setting $\gamma \approx 0.5275$ gives update time $O\left(n^{1.5275}\right)$ under the current known bounds.
\end{theorem}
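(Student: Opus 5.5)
The plan is to assemble the final theorem from \Cref{lem:exact:low-rank}, \Cref{lem:exact:update}, \Cref{cor:reset-cost}, and a standard deamortization, following \citet{sankowski2004dynamic}.

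\emph{Initialization.} Build $B_0 = I-(1-\alpha)W_0$ and compute $M=B_0^{-1}$ in $O(n^\omega)$ time. Set $\mathcal{U},\mathcal{V},Y,G$ to be empty, and set $z^* = M\alpha s$ in $O(n^2)$ time.

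\emph{Phases.} Each edge update is converted into a rank-$4$ update of $B$ via \Cref{lem:exact:low-rank}, with $V=-(1-\alpha)V_W$. We then apply \Cref{lem:exact:update}. Within a phase the number of accumulated updates is $k\le n^\gamma$, so each update costs $O(nk+k^2)=O(n^{1+\gamma})$ because $k\le n$. After $n^\gamma$ updates the phase ends. Every update touches at most two rows and two columns of $B$, so the accumulated change is supported on $O(n^\gamma)$ rows and columns. \Cref{cor:reset-cost} then recomputes the new $M$ in $O(n^{\omega(1,\gamma,1)})$ time, and we reset $\mathcal{U},\mathcal{V},Y,G$ to be empty. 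Spreading this rebuild over the $n^\gamma$ updates of the phase gives amortized cost $O(n^{\omega(1,\gamma,1)-\gamma}+n^{1+\gamma})$.

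\emph{Deamortization.} This step is the main obstacle. The plan is the usual two-copy scheme. During phase $p$, the rebuild for the matrix $B$ as of the end of phase $p-1$ is performed in the background, with $O(n^{\omega(1,\gamma,1)-\gamma})$ work charged to each update. Meanwhile the foreground representation continues to absorb the updates of phase $p$, using a base $M$ that lags by at most two phases. The foreground therefore carries at most $k\le 2n^\gamma$ pending low-rank factors, and \Cref{lem:exact:update} still costs $O(n^{1+\gamma})$ per update.

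When the background rebuild finishes, we switch to the new base $M'$. The factors of phase $p$ must then be re-expressed relative to $M'$, which means recomputing $Y=M'\mathcal{U}$ and $G$ for $O(n^\gamma)$ factors. Doing this in one shot would cost $O(n^{\omega(1,\gamma,1)})$ again. To avoid the spike, I would replay the phase-$p$ updates incrementally during the next phase, at a constant number of replays per real update, each costing $O(n^{1+\gamma})$ by \Cref{lem:exact:update}.

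Two points need care:
\begin{itemize}
\item All intermediate matrices must be nonsingular, as \Cref{lem:sank-rebuild} requires. This holds because each intermediate $B$ equals $I-(1-\alpha)W$ for an actual graph, and $\rho(W)\le 1$ by \Cref{lem:deg-scaled-norm-W}.
\item The explicit $z^*$ is always computed from the foreground representation, so it remains exact after every update.
\end{itemize}

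\emph{Choice of $\gamma$.} Finally, choose $\gamma$ to balance $\omega(1,\gamma,1)-\gamma$ against $1+\gamma$. Using current rectangular matrix multiplication bounds, $\gamma\approx0.5275$ equalizes the two terms and gives $O(n^{1.5275})$ worst-case update time.
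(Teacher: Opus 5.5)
Your proposal is correct and follows the same route as the paper. That route is lazy Woodbury updates via \Cref{lem:exact:update} at $O(n^{1+\gamma})$ each, a rebuild every $n^\gamma$ updates via \Cref{cor:reset-cost}, a two-copy deamortization, and balancing $\omega(1,\gamma,1)-\gamma$ against $1+\gamma$. Your deamortization is more explicit than the paper's, which simply makes the rebuilt copy active at the end of the phase without saying how the updates that arrived during the rebuild are absorbed; your constant-rate replay fills that step in. During catch-up the old foreground may briefly carry about $3n^\gamma$ factors rather than $2n^\gamma$, which does not affect the bound.
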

\begin{proof}
    We begin by computing $M = B^{-1}$ in $O(n^\omega)$ time. We then process updates using the lazy representation and rebuild after $O(n^\gamma)$ updates. By \Cref{lem:exact:update}, each lazy update costs $O\left(n^{1+\gamma}\right)$ time. By \Cref{cor:reset-cost}, each rebuild costs $O(n^{\omega(1,\gamma,1)})$ time. This gives the stated bound in amortized time.

    To obtain the same bound in worst-case time, we use the standard deamortization technique noted by \citet{sankowski2004dynamic}. We maintain two copies of the data structure. The active copy answers queries and processes updates using the lazy Woodbury representation. In parallel, the second copy performs the rebuild incrementally over the next $n^\gamma$ updates. By spreading the $O\left(n^{\omega(1,\gamma,1)}\right)$ rebuild work evenly over these updates, the rebuild contributes $O\left(n^{\omega(1,\gamma,1) - \gamma}\right)$ worst-case time per update. At the end of the phase, the rebuilt copy becomes active.
    Using the matrix multiplication bounds of \citet{alman2025more} and the interpolation method for $\omega(1,\gamma,1)$ from \citet{van2019dynamic}, we optimize over $\gamma$ and obtain $\gamma \approx 0.5275$, which gives the claimed bound.
\end{proof}

\section{Experiment Details of Section 4.1} \label{appendix:exp-acc}
This section describes the experimental setup for \Cref{table:exp-acc:acc}. We provide the general setup, the hyperparameter search and the configurations selected for each cell of \Cref{table:exp-acc:acc} in \Cref{appendix:exp-acc:hyper}, and a discussion of the \textsc{Chameleon} and \textsc{Squirrel} datasets in \Cref{appendix:exp-acc:cham-squ}.

\paragraph{Datasets.} \label{appendix:exp-acc:datasets}
We consider several common datasets. All datasets contain feature vectors for all nodes, and labels for a subset of nodes are provided in the training set. \textsc{Cora}, \textsc{Citeseer} \citep{sen2008collective}, and \textsc{PubMed} \citep{namata2012query} are citation graphs, where nodes are papers and edges are citations. \textsc{Actor} \citep{tang2009social,pei2020geom} is a graph of actors, where edges denote co-occurrence on the same Wikipedia page. WebKB \citep{craven1998learning} is a dataset collected from university websites where nodes represent webpages, and edges represent links between them. We use the \textsc{Cornell}, \textsc{Wisconsin}, and \textsc{Texas} graphs from this dataset. \textsc{Chameleon} and \textsc{Squirrel} are graphs of Wikipedia pages on specific topics \citep{rozemberczki2021multi,pei2020geom}, where nodes are pages and edges are mutual links between them. For a discussion of \textsc{Chameleon} and \textsc{Squirrel}, see \Cref{appendix:exp-acc:cham-squ}.

\paragraph{Splits.} All datasets are loaded through PyTorch Geometric \citep{fey2019fast}. \textsc{Cora}, \textsc{Citeseer}, and \textsc{Pubmed} use the standard Planetoid train/validation/test split \citep{yang2016revisiting}. \textsc{Cornell}, \textsc{Texas}, \textsc{Wisconsin} (WebKB), \textsc{Actor}, \textsc{Chameleon}, and \textsc{Squirrel} use the ten random $48\%/32\%/20\%$ splits released by \citet{pei2020geom}. We make every graph undirected and standardize each feature column to have zero mean and unit variance. \Cref{table:exp-acc:datasets} shows the scales of the datasets used in these experiments.

The PyTorch Geometric Planetoid distributions of \textsc{Cora}, \textsc{Citeseer}, and \textsc{Pubmed} are MIT licensed, with the underlying \textsc{Pubmed} records also subject to the NLM Data License. The WebKB graphs and \textsc{Actor} are loaded through PyTorch Geometric, but their standalone dataset licenses appear unspecified. \textsc{Chameleon} and \textsc{Squirrel}, including the filtered versions released by \citet{platonov2023critical}, are distributed under the MIT license.

\paragraph{Method.} Following \citet{zheng2022instant}, we first propagate the features using the PageRank model and then train an MLP on the propagated features using the training labels. For the non-linear model, we use the same activation function for all node functions $f_i$. In addition to the propagated features $z$, we pass the pre-activation vectors $y = \alpha s + (1 - \alpha)Wz$ to the MLP. When the activation function is the identity, $y \approx z$.

\paragraph{Baselines.}
We ran every baseline in \Cref{table:exp-acc:acc} ourselves, on exactly the same splits as our model: GCN \citep{kipf2017semi}, GAT \citep{velivckovic2017graph}, Geom-GCN \citep{pei2020geom}, APPNP \citep{gasteiger2018predict}, ACM-GCN \citep{luan2022revisiting}, and GREAD \citep{choi2023gread}. When the authors of a method published a configuration for a dataset and split, we used it. Otherwise, we tuned the method over at most 288 configurations, the same size our model uses for each activation function. The tuning has two stages. We first rank the configurations by validation accuracy on one seed. We then rerun the best configurations on five new seeds, select the one with the highest validation accuracy, and report its test accuracy from these runs. We report standard errors in the same way as for our model.

\begin{table}[htbp]
\centering
\caption{Statistics for datasets used in \Cref{section:exp-acc}.}
\label{table:exp-acc:datasets}
\input{tables/exp_acc_datasets}
\end{table}

\paragraph{Propagation.}
After normalization, we multiply the features by a feature scale $c_{\mathrm{feat}}$ before propagation. The classifier receives the concatenation $[y, z]$, where $y = \alpha s + (1-\alpha)Wz$ is the pre-activation vector. We include $c_{\mathrm{feat}}$ in the search grid because it interacts with the saturation thresholds of activations such as tanh.

\paragraph{Activations.} We evaluate the linear baseline (denoted \enquote{Linear}), $\tanh$, $\mathrm{sigmoid}$, scaled tanh with $c \in \{0.5, 1, 1.5, 2, 3, 4, 5, 6, 8\}$, hard tanh with $c \in \{0.3, 0.5, 0.7, 1, 1.5, 2, 2.5\}$, and shifted tanh with $c \in \{0.5, -0.5, -0.9, -1.2\}$. This broad set lets us test activations with different shapes and saturation behavior. \Cref{table:exp-acc:acc} reports only the activations that achieve the best test accuracy in at least one dataset column; the remaining activations were also evaluated.

\paragraph{Classifier.} The classifier is either a linear layer or a one-hidden-layer MLP with hidden width $512$, ReLU, and dropout $0.5$. The classifier type is selected as part of the search grid. Both classifiers are trained with Adam on the cross-entropy loss, with weight decay $5\times 10^{-4}$, for at most $400$ epochs. We use early stopping after $100$ epochs without improvement in validation accuracy. We report the test accuracy at the epoch with the highest validation accuracy.

\paragraph{Compute.} Each run for an (activation, dataset, seed, split, hyperparameter) configuration uses one NVIDIA RTX A4000 GPU, 4GB RAM, and 2 CPU cores and takes less than 5 seconds to complete.

\subsection{Hyperparameter Sweep and Selected Configurations} \label{appendix:exp-acc:hyper}

\begin{table}[htbp]
\centering
\caption{Hyperparameter search grid used for each activation--dataset pair.}
\label{table:hyperparameter-grid}
\input{tables/exp_acc_grid}
\end{table}

We use the same search grid in \Cref{table:hyperparameter-grid} for every (activation, dataset) pair. For each (activation, dataset) cell, we evaluate every grid point on $5$ random seeds for each of the $10$ splits described above, for a total of $50$ runs per configuration. We then select the grid point with the highest mean validation accuracy across these runs. In \Cref{table:exp-acc:acc}, we report the mean and standard error of the test accuracy for the selected grid point. The selected configurations are listed in \Cref{table:exp-acc:hyper}.

\begin{table}[htbp]
\caption{Configuration of $(\alpha, \beta, c_{\mathrm{feat}}, \text{classifier}, \text{lr})$ selected by validation accuracy for each (activation, dataset) cell of \Cref{table:exp-acc:acc}. \enquote{L} denotes the linear classifier and \enquote{M} denotes the MLP.}
\centering
\label{table:exp-acc:hyper}
\setlength{\tabcolsep}{4pt}
\begin{adjustbox}{width=\linewidth}
\input{tables/exp_acc_hyper}
\end{adjustbox}
\end{table}

\subsection{\textsc{Chameleon} and \textsc{Squirrel}} \label{appendix:exp-acc:cham-squ}

\textsc{Chameleon} and \textsc{Squirrel} \citep{rozemberczki2021multi} are Wikipedia link networks that are widely used as heterophilous benchmarks. \citet{platonov2023critical} show that the publicly distributed versions of these datasets contain duplicated nodes, which can leak across the train, validation, and test splits. This leakage can increase the accuracy of methods that effectively memorize duplicated nodes. They release filtered versions with the duplicated nodes removed. The results in \Cref{table:exp-acc:acc} use the \emph{original} Geom-GCN preprocessed splits, which are standard for these datasets. \Cref{table:exp-acc:filtered} reports our model and the baselines on the filtered versions, using the ten splits of \citet{platonov2023critical}. For our model, we use the same hyperparameter search grid as in \Cref{appendix:exp-acc:hyper}, and \Cref{table:exp-acc:filtered-hyper} shows the selected hyperparameters. The baselines are tuned as described in \Cref{appendix:exp-acc}. We do not include Geom-GCN, because its authors released the data it needs only for the original graphs.

\begin{table}[H]
\caption{Test accuracy on the filtered \textsc{Chameleon} and \textsc{Squirrel} datasets of \citet{platonov2023critical}. The best method is \underline{underlined}.}
\centering
\label{table:exp-acc:filtered}
\setlength{\tabcolsep}{4pt}
\input{tables/exp_acc_cham_squir}
\end{table}

\begin{table}[H]
\caption{Configuration of $(\alpha, \beta, c_{\mathrm{feat}}, \text{classifier}, \text{lr})$ selected by validation accuracy for each (activation, dataset) cell of \Cref{table:exp-acc:filtered}. \enquote{L} denotes the linear classifier and \enquote{M} denotes the MLP.}
\centering
\label{table:exp-acc:filtered-hyper}
\setlength{\tabcolsep}{4pt}
\input{tables/exp_acc_cham_squir_hyper}
\end{table}

\section{Experiment Details of Section 4.2} \label{appendix:exp-dyn}

\paragraph{Datasets and snapshots.}
We use two graphs from the Open Graph Benchmark \citep{hu2020open}: \texttt{arxiv}, a citation network, and \texttt{products}, the Amazon products co-purchasing network. We also use the synthetic \texttt{SBM-500K} dataset generated by \citet{zheng2022instant}. The edge updates for \texttt{arxiv} and \texttt{products} are incremental edge additions, while \texttt{SBM-500K} includes both edge additions and edge deletions.
The \texttt{arxiv} and \texttt{products} benchmarks are used with their official train/validation/test splits. We follow the snapshot split protocol of \citet{zheng2022instant}. First, we make the graph undirected and standardize every feature column to have zero mean and unit variance. We then remove every non-self edge incident to a training node. The remaining edges form the initial graph, and the removed edges form the insertion stream. The insertion stream is shuffled with constant random seed and split into contiguous snapshots. We use $16$ snapshots for \texttt{arxiv}, $15$ snapshots for \texttt{products}, and \texttt{SBM-500K} contains 10 snapshots. Self-loops are present throughout the experiment and are not part of the update stream. 

For \texttt{SBM-500K}, the dataset is generated with the InstantGNN \citep{zheng2022instant} SBM generator\textsuperscript{\ref{fn:instant-gnn}} using $500{,}000$ nodes, $50$ communities, average in-community degree $20$, average out-community degree $1$, $10$ snapshots, and $2{,}500$ changed nodes per snapshot. The dataset provides initial labels and per-snapshot labels. Since features and splits are not shipped with the text edge stream, we generate fixed-seed sparse random features with $256$ dimensions and $16$ nonzero entries per node, standardize every feature column, and use random train/validation/test splits of $70\%/20\%/10\%$.

Under this construction, \texttt{arxiv} has $169{,}343$ nodes, $128$ features, $213{,}848$ initial undirected non-self edges, and $1{,}157{,}799$ total undirected non-self edges after all insertions. The \texttt{products} graph has $2{,}449{,}029$ nodes, $100$ features, $33{,}592{,}708$ initial undirected non-self edges, and $61{,}859{,}012$ total undirected non-self edges after all insertions. The \texttt{SBM-500K} graph has $500{,}000$ nodes, $256$ generated features, $50$ classes, $6{,}820{,}831$ initial undirected non-self edges, and $6{,}823{,}982$ final undirected non-self edges after all updates. Across its $10$ snapshots, the stream contains $1{,}263{,}739$ unique undirected edge-update events, consisting of $633{,}445$ insertions and $630{,}294$ deletions.

\paragraph{Dataset licenses.}
The Open Graph Benchmark lists \texttt{arxiv} under ODC-BY and \texttt{products} under the Amazon License. The \texttt{SBM} datasets are synthetic datasets generated using the code
\footnote{\label{fn:instant-gnn}\url{https://github.com/zheng-yp/InstantGNN}}
from InstantGNN \citep{zheng2022instant}; the repository does not specify a license for these datasets.

\paragraph{Propagation methods.}
For both the linear and non-linear models, we use
\begin{math}
    \beta=0.5,\ \epsilon=10^{-7}.
\end{math}
We use $\alpha=0.1$ for \texttt{arxiv} and \texttt{products}, and $\alpha=0.001$ for \texttt{SBM-500K}.

Before propagation, we multiply the standardized input features by $3$. The linear model uses the identity activation. The non-linear model uses hard tanh,
\begin{math}
    f(x)=\min(2.5,\max(-2.5,x)).
\end{math}
We use the same activation for every node and every feature dimension.

We compare three propagation strategies. The \enquote{scratch} method applies all edge updates in the current snapshot, resets the propagation state to the canonical initialization, and runs cleanup from scratch. The \enquote{batched} method applies all insertions in the current snapshot as one batch and then runs cleanup once. The \enquote{single} method applies insertions one edge at a time, runs the dynamic update after each edge, and returns the maintained representation after every edge insertion; for reporting, we evaluate the representation after the last edge of each snapshot. Thus, \enquote{single} solves the strictly stronger online problem, while \enquote{batched} matches the snapshot update used in \citet{zheng2022instant}.

Propagation is implemented in C++. The implementation stores the feature, propagated, pre-activation, and residual arrays as dense double-precision arrays of shape $F \times n$. Cleanup is run independently for each feature dimension, and dimensions are parallelized with OpenMP. We use the same approximation guarantees as \citet{zheng2022instant}.

\paragraph{Classifier and parameters.}
After each initial propagation or snapshot update, we train a classifier from scratch on the current propagated features. The classifier input is the concatenation $[z,y]$, where
\begin{math}
    y = \alpha s + (1-\alpha)Wz
\end{math}
is the pre-activation vector. For \texttt{arxiv} and \texttt{products}, the classifier is a $4$-layer MLP with hidden width $1024$, batch normalization after hidden linear layers, ReLU activations, and Adam optimization with learning rate $10^{-4}$ and no weight decay. For \texttt{SBM-500K}, we use a $2$-layer MLP with hidden width $1024$, dropout $0.1$, Adam learning rate $0.01$, and no weight decay.
We train \texttt{arxiv} and \texttt{products} for at most $1000$ epochs and \texttt{SBM-500K} for at most $200$ epochs, with early stopping after $50$ epochs without validation improvement.
We use dropout $0.3$ on \texttt{arxiv}, dropout $0.5$ on \texttt{products}, and dropout $0.1$ on \texttt{SBM-500K}. The \texttt{arxiv} classifier is trained full-batch, while \texttt{products} uses mini-batches of size $512$ and \texttt{SBM-500K} uses mini-batches of size $1024$.

These parameters were all inherited from experiments of \citet{zheng2022instant}.

\paragraph{Compute.}
Each dynamic experiment on \texttt{arxiv}, \texttt{products}, and \texttt{SBM-500K} is run with one NVIDIA RTX A4000 GPU, $16$ CPU cores, and $32$GB RAM. The runs for \Cref{table:exp-dyn:large} use $16$ CPU cores and no GPU. They need about $9$GB of RAM on \texttt{SBM-500K}, $32$GB on \texttt{SBM-2M}, $77$GB on \texttt{SBM-5M}, and $700$GB on \texttt{papers100M}.

\paragraph{Reported quantities.}
The accuracy plots report the test accuracy of the classifier trained on the propagated features available at each snapshot. The cumulative-time plots report the cumulative propagation time, including the initial propagation and all processed snapshots. The per-snapshot time plots report only propagation time for the corresponding initial graph or snapshot update; classifier training time is excluded from these timing plots.

\paragraph{Larger graphs.}
We also run two larger versions of the SBM benchmark, \texttt{SBM-2M} and \texttt{SBM-5M}, and the \texttt{papers100M} graph from Open Graph Benchmark \citep{hu2020open}, which has 111M nodes and 1.6B edges. \Cref{table:exp-dyn:large} reports the propagation time per snapshot. The cost of recomputation grows with the size of the graph, while the cost of dynamic updates does not. We generate \texttt{SBM-2M} and \texttt{SBM-5M} with the same InstantGNN generator as \texttt{SBM-500K}, changing only the number of nodes. We scale the number of communities so that each community has $10{,}000$ nodes, and keep the degrees, the $10$ snapshots, and the $2{,}500$ changed nodes per snapshot. So each snapshot has about $126{,}000$ edge updates at every size. We use the same $\alpha$, $\beta$, and $\epsilon$ as for \texttt{SBM-500K}. For \texttt{papers100M}, we follow the same snapshot split protocol as for \texttt{arxiv} and \texttt{products}, with $20$ snapshots of about $2$M edge insertions each. We use $\alpha=0.1$, $\beta=0.5$, and $\epsilon = 2.2\times 10^{-9}$. We run each SBM setting with $5$ random seeds and \texttt{papers100M} once.

\begin{table}[htbp]
\caption{Propagation time per snapshot, in seconds, on larger graphs.}
\label{table:exp-dyn:large}
\centering
\begin{adjustbox}{width=0.7\linewidth}
\begin{tabular}{lrrrrrrr}
\toprule
& & \multicolumn{3}{c}{Hard tanh} & \multicolumn{3}{c}{Linear} \\
\cmidrule(lr){3-5} \cmidrule(lr){6-8}
Dataset & Nodes & scratch & single & batched & scratch & single & batched \\
\midrule
\texttt{SBM-500K} & 0.5M & 34.9 & 19.0 & 4.3 & 27.2 & 20.0 & 4.3 \\
\texttt{SBM-2M} & 2M & 68.3 & 7.3 & 1.3 & 55.2 & 4.6 & 1.3 \\
\texttt{SBM-5M} & 5M & 119.1 & 3.8 & 1.3 & 89.2 & 3.8 & 1.0 \\
\texttt{papers100M} & 111M & 20{,}248 & 579.7 & 116.6 & 31{,}695 & 777.9 & 151.3 \\
\bottomrule
\end{tabular}
\end{adjustbox}
\end{table}

\paragraph{Varying $\epsilon$ and $\alpha$.}
To study how the parameters affect running time, we repeat the same experiment while varying $\epsilon$ and $\alpha$ on the \texttt{arxiv} and \texttt{SBM-500K} datasets. \Cref{fig:vary-eps,fig:vary-alpha} show the results. We find that the observed running times are consistent with the theoretical dependence on these parameters. For \Cref{fig:vary-eps,fig:vary-alpha}, we vary $\epsilon \in \{10^{-7}, 10^{-6}, 10^{-5}, 10^{-4}, 10^{-3}\}$ and $\alpha \in \{0.001, 0.01, 0.1, 0.2, 0.3, 0.5, 0.7\}$, respectively. We run each setting with two random seeds and report the mean over the two runs.

\begin{figure*}[htbp]
\centering
\begin{subfigure}[t]{0.5\textwidth}
\includegraphics[width=\linewidth]{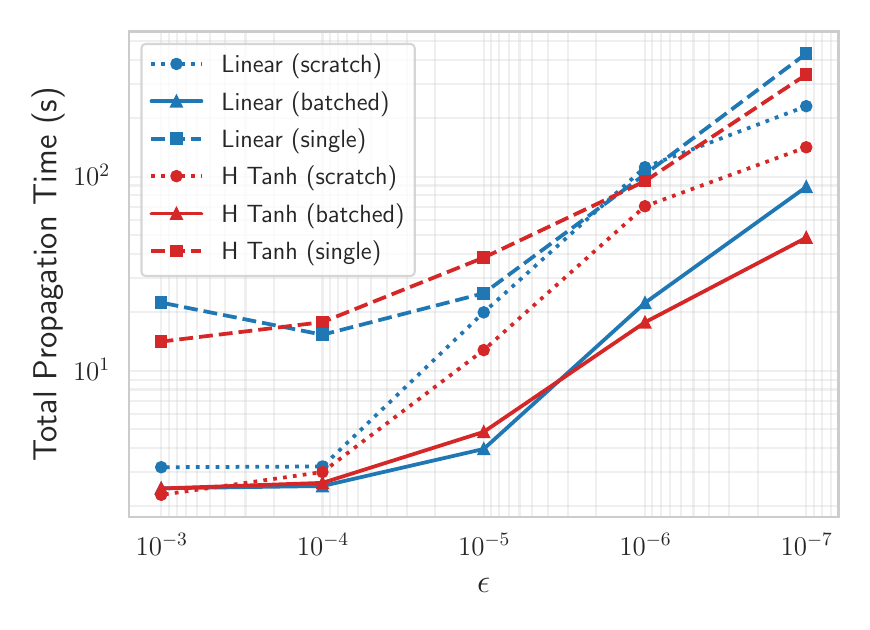}
\caption{\texttt{arxiv}}
\end{subfigure}\hfill
\begin{subfigure}[t]{0.5\textwidth}
\includegraphics[width=\linewidth]{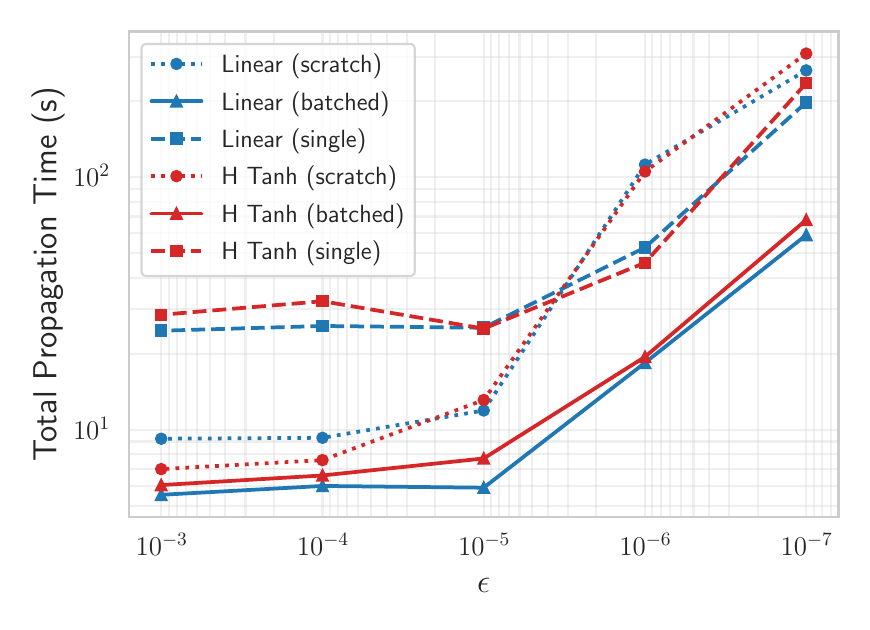}
\caption{\texttt{SBM-500K}}
\end{subfigure}\hfill
\caption{Total propagation time of varying $\epsilon$ values. The propagation time scales inversely with $\epsilon$.}
\label{fig:vary-eps}
\end{figure*}

\begin{figure*}[htbp]
\centering
\begin{subfigure}[t]{0.5\textwidth}
\includegraphics[width=\linewidth]{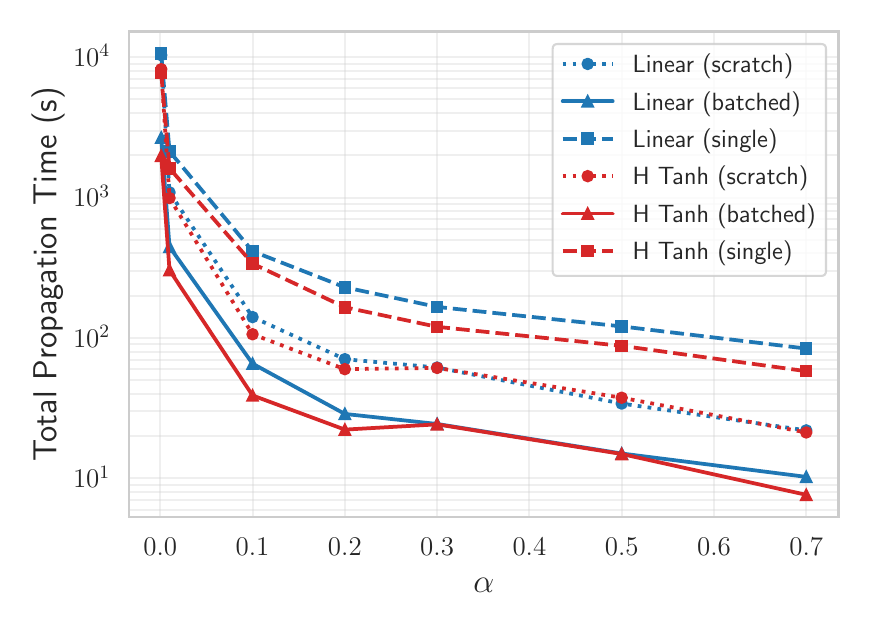}
\caption{\texttt{arxiv}}
\end{subfigure}\hfill
\begin{subfigure}[t]{0.5\textwidth}
\includegraphics[width=\linewidth]{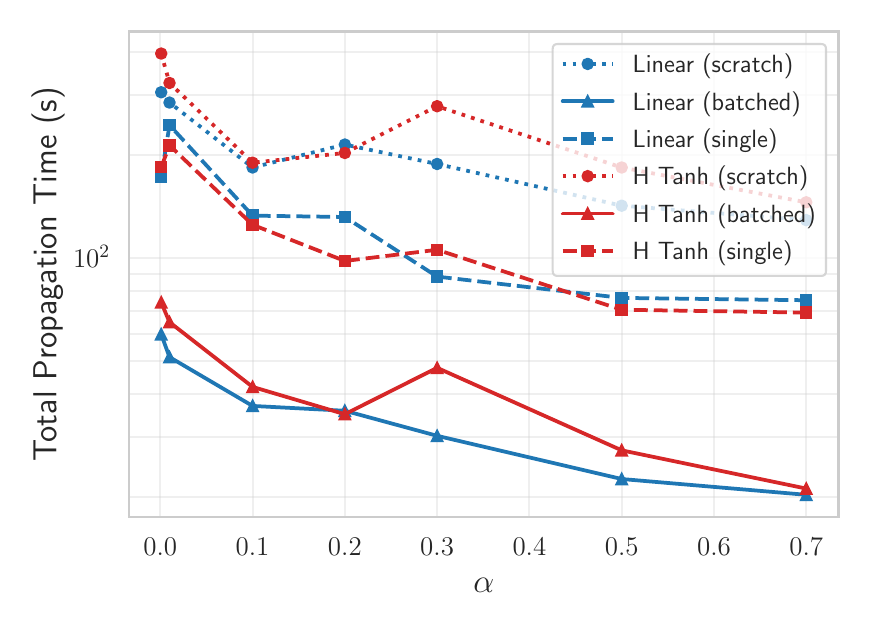}
\caption{\texttt{SBM-500K}}
\end{subfigure}\hfill
\caption{Total propagation time of varying $\alpha$ values. Our theoretical bounds show that the propagation time scales with $1/\alpha^2$.}
\label{fig:vary-alpha}
\end{figure*}

%% file: tables/exp_acc_datasets.tex
\begin{tabular}{lrrrr}
\toprule
Dataset & Nodes ($n$) & Edges & Feature dimension & Classes \\
\midrule
\textsc{Cornell} & 183 & 164 & 1,703 & 5 \\
\textsc{Texas} & 183 & 167 & 1,703 & 5 \\
\textsc{Wisconsin} & 251 & 231 & 1,703 & 5 \\
\textsc{Squirrel} & 5,201 & 159,248 & 2,089 & 5 \\
\textsc{Actor} & 7,600 & 14,992 & 932 & 5 \\
\textsc{Chameleon} & 2,277 & 23,370 & 2,325 & 5 \\
\textsc{Cora} & 2,708 & 5,278 & 1,433 & 7 \\
\textsc{CiteSeer} & 3,327 & 4,552 & 3,703 & 6 \\
\textsc{PubMed} & 19,717 & 44,324 & 500 & 3 \\
\midrule
\textsc{Chameleon (filtered)} & 890 & 8,854 & 2,325 & 5 \\
\textsc{Squirrel (filtered)} & 2,223 & 46,998 & 2,089 & 5 \\
\bottomrule
\end{tabular}

%% file: tables/exp_acc_grid.tex
\begin{tabular}{ll}
\toprule
Hyperparameter & Values \\
\midrule
$\alpha$ & $\{0.01, 0.1, 0.2, 0.3, 0.5, 0.7\}$ \\
$\beta$ & $\{0.5, 1.0\}$ \\
$c_{\mathrm{feat}}$ & $\{1, 3, 5\}$ \\
Classifier & $\{\text{linear}, \text{MLP}\}$ \\
Learning rate & $\{10^{-3}, 10^{-2}, 10^{-1}, 0.5\}$ \\
Weight decay & $5 \times 10^{-4}$ \\
MLP hidden dimension & $512$ \\
MLP dropout & $0.5$ \\
Training epochs & $400$ \\
Early stopping patience & $100$ \\
Optimizer & Adam \\
Loss & Cross-entropy \\
\bottomrule
\end{tabular}

%% file: tables/exp_acc_hyper.tex
\begin{tabular}{lccccccccc}
\toprule
Activation & \textsc{Corn.} & \textsc{Texas} & \textsc{Wisc.} & \textsc{Squi.} & \textsc{Actor} & \textsc{Cham.} & \textsc{Cora} & \textsc{Cite.} & \textsc{Pubm.} \\
\midrule
Linear & 0.7/1/1/L/0.5 & 0.7/1/1/L/0.5 & 0.7/0.5/1/M/0.1 & 0.01/0.5/3/M/0.001 & 0.7/0.5/1/M/0.5 & 0.1/1/5/M/0.01 & 0.01/1/3/M/0.5 & 0.1/1/1/L/0.1 & 0.01/0.5/3/M/0.1 \\
Sc Tanh 0.5 & 0.7/1/5/M/0.01 & 0.7/0.5/5/M/0.1 & 0.7/0.5/3/M/0.01 & 0.01/0.5/3/M/0.001 & 0.7/1/3/M/0.001 & 0.1/1/3/M/0.01 & 0.01/0.5/5/L/0.01 & 0.1/0.5/1/L/0.1 & 0.1/1/1/L/0.5 \\
Sc Tanh 8 & 0.7/1/1/M/0.1 & 0.5/1/1/M/0.1 & 0.2/0.5/5/M/0.01 & 0.01/0.5/3/M/0.01 & 0.5/1/5/M/0.001 & 0.01/1/1/M/0.01 & 0.01/0.5/1/M/0.01 & 0.01/0.5/3/M/0.01 & 0.01/0.5/1/M/0.01 \\
Sh Tanh -1.2 & 0.5/1/3/M/0.01 & 0.3/0.5/3/M/0.01 & 0.3/0.5/5/M/0.01 & 0.5/1/1/M/0.001 & 0.3/1/1/M/0.001 & 0.7/1/1/L/0.01 & 0.5/1/5/M/0.01 & 0.7/0.5/5/M/0.01 & 0.1/0.5/1/L/0.1 \\
H Tanh 2.5 & 0.7/1/5/M/0.01 & 0.7/1/5/M/0.01 & 0.7/0.5/3/M/0.01 & 0.01/0.5/3/M/0.001 & 0.7/1/3/M/0.001 & 0.1/1/3/M/0.01 & 0.1/1/5/M/0.5 & 0.1/0.5/1/L/0.1 & 0.1/1/3/L/0.5 \\
\bottomrule
\end{tabular}

%% file: tables/exp_acc_cham_squir.tex
\begin{tabular}{lcc}
\toprule
Method & \textsc{Cham. (filt.)} & \textsc{Squi. (filt.)} \\
\midrule
GCN & 35.6{\scriptsize\,$\pm$\,0.8} & 34.8{\scriptsize\,$\pm$\,0.4} \\
GAT & 35.3{\scriptsize\,$\pm$\,0.8} & 34.9{\scriptsize\,$\pm$\,0.4} \\
APPNP & 36.1{\scriptsize\,$\pm$\,0.9} & 34.9{\scriptsize\,$\pm$\,0.4} \\
ACM-GCN & 38.9{\scriptsize\,$\pm$\,0.7} & 35.9{\scriptsize\,$\pm$\,0.5} \\
GREAD & 44.0{\scriptsize\,$\pm$\,1.0} & 41.0{\scriptsize\,$\pm$\,0.5} \\
\midrule
Linear & 41.1{\scriptsize\,$\pm$\,1.1} & 40.4{\scriptsize\,$\pm$\,0.6} \\
Sh Tanh -0.5 & \underline{45.4{\scriptsize\,$\pm$\,1.2}} & 44.0{\scriptsize\,$\pm$\,0.5} \\
Sh Tanh -1.2 & 43.3{\scriptsize\,$\pm$\,0.9} & \underline{44.6{\scriptsize\,$\pm$\,0.6}} \\
\bottomrule
\end{tabular}

%% file: tables/exp_acc_cham_squir_hyper.tex
\begin{tabular}{lcc}
\toprule
Activation & \textsc{Cham. (filt.)} & \textsc{Squi. (filt.)} \\
\midrule
Linear & 0.01/0.5/1/M/0.1 & 0.01/0.5/1/M/0.1 \\
Sh Tanh -0.5 & 0.7/0.5/3/M/0.001 & 0.5/0.5/1/M/0.01 \\
Sh Tanh -1.2 & 0.5/0.5/5/M/0.01 & 0.7/0.5/1/M/0.001 \\
\bottomrule
\end{tabular}

%% file: ref.bib
@inproceedings{van2019dynamic,
  title={Dynamic matrix inverse: Improved algorithms and matching conditional lower bounds},
  author={Van Den Brand, Jan and Nanongkai, Danupon and Saranurak, Thatchaphol},
  booktitle={2019 IEEE 60th Annual Symposium on Foundations of Computer Science (FOCS)},
  pages={456--480},
  year={2019},
  organization={IEEE}
}

@inproceedings{sankowski2004dynamic,
  title={Dynamic transitive closure via dynamic matrix inverse},
  author={Sankowski, Piotr},
  booktitle={45th Annual IEEE Symposium on Foundations of Computer Science},
  pages={509--517},
  year={2004},
  organization={IEEE}
}

@inproceedings{zheng2022instant,
  title={Instant graph neural networks for dynamic graphs},
  author={Zheng, Yanping and Wang, Hanzhi and Wei, Zhewei and Liu, Jiajun and Wang, Sibo},
  booktitle={Proceedings of the 28th ACM SIGKDD conference on knowledge discovery and data mining},
  pages={2605--2615},
  year={2022}
}

@inproceedings{zhang2016approximate,
  title={Approximate personalized pagerank on dynamic graphs},
  author={Zhang, Hongyang and Lofgren, Peter and Goel, Ashish},
  booktitle={Proceedings of the 22nd ACM SIGKDD international conference on knowledge discovery and data mining},
  pages={1315--1324},
  year={2016}
}

@article{bunch1974triangular,
  title={Triangular factorization and inversion by fast matrix multiplication},
  author={Bunch, James R and Hopcroft, John E},
  journal={Mathematics of Computation},
  volume={28},
  number={125},
  pages={231--236},
  year={1974}
}

@book{woodbury1950inverting,
  title={Inverting modified matrices},
  author={Woodbury, Max A},
  year={1950},
  publisher={Department of Statistics, Princeton University}
}

@inproceedings{alman2025more,
  title={More asymmetry yields faster matrix multiplication},
  author={Alman, Josh and Duan, Ran and Williams, Virginia Vassilevska and Xu, Yinzhan and Xu, Zixuan and Zhou, Renfei},
  booktitle={Proceedings of the 2025 Annual ACM-SIAM Symposium on Discrete Algorithms (SODA)},
  pages={2005--2039},
  year={2025},
  organization={SIAM}
}

@article{banach1922operations,
  title={Sur les op{\'e}rations dans les ensembles abstraits et leur application aux {\'e}quations int{\'e}grales},
  author={Banach, Stefan},
  journal={Fundamenta mathematicae},
  volume={3},
  number={1},
  pages={133--181},
  year={1922},
  publisher={Polska Akademia Nauk. Instytut Matematyczny PAN}
}

@article{sen2008collective,
  title={Collective classification in network data},
  author={Sen, Prithviraj and Namata, Galileo and Bilgic, Mustafa and Getoor, Lise and Galligher, Brian and Eliassi-Rad, Tina},
  journal={AI magazine},
  volume={29},
  number={3},
  pages={93--93},
  year={2008}
}

@inproceedings{namata2012query,
  title={Query-driven active surveying for collective classification},
  author={Namata, Galileo and London, Ben and Getoor, Lise and Huang, Bert and Edu, U},
  booktitle={10th international workshop on mining and learning with graphs},
  volume={8},
  pages={1},
  year={2012}
}

@inproceedings{tang2009social,
  title={Social influence analysis in large-scale networks},
  author={Tang, Jie and Sun, Jimeng and Wang, Chi and Yang, Zi},
  booktitle={Proceedings of the 15th ACM SIGKDD international conference on Knowledge discovery and data mining},
  pages={807--816},
  year={2009}
}

@techreport{craven1998learning,
  title={Learning to extract symbolic knowledge from the world wide web},
  author={Craven, Mark and McCallum, Andrew and PiPasquo, Dan and Mitchell, Tom and Freitag, Dayne},
  year={1998}
}

@article{rozemberczki2021multi,
  title={Multi-scale attributed node embedding},
  author={Rozemberczki, Benedek and Allen, Carl and Sarkar, Rik},
  journal={Journal of Complex Networks},
  volume={9},
  number={2},
  pages={cnab014},
  year={2021},
  publisher={Oxford University Press}
}

@article{hu2020open,
  title={Open graph benchmark: Datasets for machine learning on graphs},
  author={Hu, Weihua and Fey, Matthias and Zitnik, Marinka and Dong, Yuxiao and Ren, Hongyu and Liu, Bowen and Catasta, Michele and Leskovec, Jure},
  journal={Advances in neural information processing systems},
  volume={33},
  pages={22118--22133},
  year={2020}
}

@inproceedings{pei2020geom,
  title={Geom-GCN: Geometric Graph Convolutional Networks},
  author={Pei, Hongbin and Wei, Bingzhe and Chang, Kevin Chen-Chuan and Lei, Yu and Yang, Bo},
  booktitle={International Conference on Learning Representations},
  year={2020}
}

@article{gasteiger2018predict,
  title={Predict then propagate: Graph neural networks meet personalized pagerank},
  author={Gasteiger, Johannes and Bojchevski, Aleksandar and G{\"u}nnemann, Stephan},
  journal={arXiv preprint arXiv:1810.05997},
  year={2018}
}

@article{fey2019fast,
  title={Fast graph representation learning with PyTorch Geometric},
  author={Fey, Matthias and Lenssen, Jan Eric},
  journal={arXiv preprint arXiv:1903.02428},
  year={2019}
}

@inproceedings{yang2016revisiting,
  title={Revisiting semi-supervised learning with graph embeddings},
  author={Yang, Zhilin and Cohen, William and Salakhudinov, Ruslan},
  booktitle={International conference on machine learning},
  pages={40--48},
  year={2016},
  organization={PMLR}
}

@article{platonov2023critical,
  title={A critical look at the evaluation of GNNs under heterophily: Are we really making progress?},
  author={Platonov, Oleg and Kuznedelev, Denis and Diskin, Michael and Babenko, Artem and Prokhorenkova, Liudmila},
  journal={arXiv preprint arXiv:2302.11640},
  year={2023}
}

@article{wu2020comprehensive,
  title={A comprehensive survey on graph neural networks},
  author={Wu, Zonghan and Pan, Shirui and Chen, Fengwen and Long, Guodong and Zhang, Chengqi and Yu, Philip S},
  journal={IEEE transactions on neural networks and learning systems},
  volume={32},
  number={1},
  pages={4--24},
  year={2020},
  publisher={IEEE}
}

@inproceedings{kipf2017semi,
  title={Semi-Supervised Classification with Graph Convolutional Networks},
  author={Kipf, Thomas N and Welling, Max},
  booktitle={International Conference on Learning Representations},
  year={2017}
}

@article{hamilton2017inductive,
  title={Inductive representation learning on large graphs},
  author={Hamilton, Will and Ying, Zhitao and Leskovec, Jure},
  journal={Advances in neural information processing systems},
  volume={30},
  year={2017}
}

@article{BahmaniCG10,
  author       = {Bahman Bahmani and
                  Abdur Chowdhury and
                  Ashish Goel},
  title        = {Fast Incremental and Personalized PageRank},
  journal      = {Proc. {VLDB} Endow.},
  volume       = {4},
  number       = {3},
  pages        = {173--184},
  year         = {2010},
  url          = {http://www.vldb.org/pvldb/vol4/p173-bahmani.pdf},
  doi          = {10.14778/1929861.1929864},
  bibsource    = {dblp computer science bibliography, https://dblp.org}
}

@article{zheng2023decoupled,
  title={Decoupled graph neural networks for large dynamic graphs},
  author={Zheng, Yanping and Wei, Zhewei and Liu, Jiajun},
  journal={arXiv preprint arXiv:2305.08273},
  year={2023}
}

@article{rossi2020temporal,
  title={Temporal graph networks for deep learning on dynamic graphs},
  author={Rossi, Emanuele and Chamberlain, Ben and Frasca, Fabrizio and Eynard, Davide and Monti, Federico and Bronstein, Michael},
  journal={arXiv preprint arXiv:2006.10637},
  year={2020}
}

@inproceedings{trivedi2019dyrep,
  title={Dyrep: Learning representations over dynamic graphs},
  author={Trivedi, Rakshit and Farajtabar, Mehrdad and Biswal, Prasenjeet and Zha, Hongyuan},
  booktitle={International conference on learning representations},
  year={2019}
}

@article{kumar2018learning,
  title={Learning dynamic embeddings from temporal interactions},
  author={Kumar, Srijan and Zhang, Xikun and Leskovec, Jure},
  journal={arXiv preprint arXiv:1812.02289},
  year={2018}
}

@inproceedings{
Xu2020Inductive,
title={Inductive representation learning on temporal graphs},
author={da Xu and chuanwei ruan and evren korpeoglu and sushant kumar and kannan achan},
booktitle={International Conference on Learning Representations},
year={2020},
url={https://openreview.net/forum?id=rJeW1yHYwH}
}

@article{zheng2025survey,
  title={A survey of dynamic graph neural networks},
  author={Zheng, Yanping and Yi, Lu and Wei, Zhewei},
  journal={Frontiers of Computer Science},
  volume={19},
  number={6},
  pages={196323},
  year={2025},
  publisher={Springer}
}

@inproceedings{andersen2006local,
  title={Local graph partitioning using pagerank vectors},
  author={Andersen, Reid and Chung, Fan and Lang, Kevin},
  booktitle={2006 47th annual IEEE symposium on foundations of computer science (FOCS'06)},
  pages={475--486},
  year={2006},
  organization={IEEE}
}

@inproceedings{bahmani2012pagerank,
  title={Pagerank on an evolving graph},
  author={Bahmani, Bahman and Kumar, Ravi and Mahdian, Mohammad and Upfal, Eli},
  booktitle={Proceedings of the 18th ACM SIGKDD international conference on Knowledge discovery and data mining},
  pages={24--32},
  year={2012}
}

@inproceedings{JayaramLMOS24,
  author       = {Rajesh Jayaram and
                  Jakub Lacki and
                  Slobodan Mitrovic and
                  Krzysztof Onak and
                  Piotr Sankowski},
  editor       = {Karl Bringmann and
                  Martin Grohe and
                  Gabriele Puppis and
                  Ola Svensson},
  title        = {Dynamic PageRank: Algorithms and Lower Bounds},
  booktitle    = {51st International Colloquium on Automata, Languages, and Programming,
                  {ICALP} 2024, Tallinn, Estonia, July 8-12, 2024},
  series       = {LIPIcs},
  pages        = {90:1--90:19},
  publisher    = {Schloss Dagstuhl - Leibniz-Zentrum f{\"{u}}r Informatik},
  year         = {2024},
  url          = {https://doi.org/10.4230/LIPIcs.ICALP.2024.90},
  doi          = {10.4230/LIPICS.ICALP.2024.90},
  bibsource    = {dblp computer science bibliography, https://dblp.org}
}

@article{velivckovic2017graph,
  title={Graph attention networks},
  author={Veli{\v{c}}kovi{\'c}, Petar and Cucurull, Guillem and Casanova, Arantxa and Romero, Adriana and Lio, Pietro and Bengio, Yoshua},
  journal={arXiv preprint arXiv:1710.10903},
  year={2017}
}

@article{scarselli2008graph,
  title={The graph neural network model},
  author={Scarselli, Franco and Gori, Marco and Tsoi, Ah Chung and Hagenbuchner, Markus and Monfardini, Gabriele},
  journal={IEEE transactions on neural networks},
  volume={20},
  number={1},
  pages={61--80},
  year={2008},
  publisher={IEEE}
}

@article{luan2022revisiting,
  title={Revisiting heterophily for graph neural networks},
  author={Luan, Sitao and Hua, Chenqing and Lu, Qincheng and Zhu, Jiaqi and Zhao, Mingde and Zhang, Shuyuan and Chang, Xiao-Wen and Precup, Doina},
  journal={Advances in neural information processing systems},
  volume={35},
  pages={1362--1375},
  year={2022}
}

@inproceedings{choi2023gread,
  title={{GREAD}: Graph Neural Reaction-Diffusion Networks},
  author={Choi, Jeongwhan and Hong, Seoyoung and Park, Noseong and Cho, Sung-Bae},
  booktitle={Proceedings of the 40th International Conference on Machine Learning},
  series={Proceedings of Machine Learning Research},
  volume={202},
  year={2023}
}
